\documentclass[11pt]{article}

\usepackage[T1]{fontenc}
\usepackage{lmodern}
\usepackage[margin=1in]{geometry}
\usepackage{microtype}
\usepackage{amsmath,amssymb,amsthm,mathtools}
\usepackage{booktabs,float}
\usepackage{algorithm,algorithmic}
\usepackage{dsfont}
\usepackage{xcolor}

\usepackage{authblk}
\usepackage{amsmath,amsfonts,bm}

\def\eqref#1{equation~\ref{#1}}
\def\1{\bm{1}}

\DeclareMathAlphabet{\mathsfit}{\encodingdefault}{\sfdefault}{m}{sl}
\SetMathAlphabet{\mathsfit}{bold}{\encodingdefault}{\sfdefault}{bx}{n}

\def\gA{{\mathcal{A}}}

\def\gD{{\mathcal{D}}}
\def\gE{{\mathcal{E}}}
\def\gF{{\mathcal{F}}}

\def\gO{{\mathcal{O}}}

\def\gR{{\mathcal{R}}}
\def\gS{{\mathcal{S}}}

\def\gX{{\mathcal{X}}}
\def\gY{{\mathcal{Y}}}

\newcommand{\E}{\mathbb{E}}

\newcommand{\KL}{D_{\mathrm{KL}}}

\usepackage[authoryear,round]{natbib}
\usepackage{url}

\newtheorem{theorem}{Theorem}
\newtheorem{lemma}[theorem]{Lemma}
\newtheorem{corollary}[theorem]{Corollary}

\theoremstyle{definition}
\newtheorem{definition}[theorem]{Definition}

\newcommand{\ind}{\mathds{1}}
\newcommand{\otob}{\widehat{\pi}_{\mathrm{o2b}}}
\DeclareMathOperator{\TV}{TV}
\DeclareMathOperator{\kl}{kl}

\newcommand{\demo}{\pi_{\mathrm{demo}}}
\newcommand{\hy}[1]{\hat{y}^{(#1)}_t}

\definecolor{cm}{rgb}{0,0,0.78}

\usepackage{hyperref}
\hypersetup{
    colorlinks=true,
    allcolors=blue,
    pdftitle={The Statistical Benefits of Multiple Responses for Learning from Demonstrations},
    pdfauthor={Chandramauli Chakraborty and Cong Ma}
}
\usepackage[nameinlink,capitalise]{cleveref}

\title{The Statistical Benefits of Multiple Responses\\
       for Learning from Demonstrations}
\author{Chandramauli Chakraborty}
\author{Cong Ma}
\affil{Department of Statistics, University of Chicago\\
\texttt{\{chandramaulic, congm\}@uchicago.edu}}

\date{}

\begin{document}

\maketitle
\begin{abstract}
Many generative systems return multiple candidate responses and are evaluated according to the best one. Recent work shows that, when demonstrations are optimal, pass@$k$ can reduce the sample complexity of learning from demonstrations by a logarithmic factor in $k$. We ask what happens when the demonstrator is not assumed to be optimal.
We find that multiple responses provide a qualitatively stronger benefit in this setting. In a finite reward-class model with no reward feedback, moving from pass@$1$ to any pass@$k$ with $k\ge2$ changes the worst-case dependence on target accuracy from $1/\varepsilon^2$ to $1/\varepsilon$, uniformly over demonstrator quality. Under standard evaluation, where an unknown reward is fixed before training, increasing $k$ provides an additional and distinct benefit: the optimal dependence on a reward class of size $N$ improves from $\log N$ to $\log N/\log k$. We further show that these two effects can be separated. Under robust evaluation, where one learned policy must compete with the demonstrator simultaneously for every reward in the class, the fast $1/\varepsilon$ dependence persists, while the $1/\log k$ improvement can disappear. We establish matching upper and lower bounds in the corresponding regimes and give a greedy multiplicative-weights learner achieving the upper bounds without any assumption on demonstrator quality.
\end{abstract}
% \noindent\textbf{Keywords:}
% Learning from demonstrations; pass@$k$; suboptimal demonstrators;
% sample complexity; minimax rates; robust evaluation.
% \newpage
% \tableofcontents
\section{Introduction}
\label{sec:intro}

% Paragraph 1 theme: Motivation. Learning from demonstrations is usually studied
% under single-response evaluation, but modern generative systems often return
% multiple candidates. State the basic question of the paper without introducing
% technical notation yet.

Learning from demonstrations~\citep{argall2009survey,ouyang2022training,joshi2026learning} is a common paradigm for training modern predictive and generative systems:
for each input or context, the learner observes a response produced by a human, an expert, or another model, and seeks to perform comparably well without direct access to the underlying reward that makes those responses desirable. This problem falls within apprenticeship learning~\citep{abbeel2004apprenticeship,syed2007game}, a branch of imitation learning~\citep{pomerleau1988alvinn, ng2000algorithms} that studies how to match a demonstrator's value under an unknown reward. Most of the learning-theoretic literature has focused on the usual pass@1 setting, where performance is determined by a single returned response. Yet modern generative systems are often evaluated using pass@\(k\): a model produces several candidate answers, programs, or solutions, and performance is determined by the best one~\citep{chen2021evaluating,li2022competition,brown2024large}. Recent work also directly optimizes this pass@$k$ metric during reinforcement
learning \citep{tang2025optimizing,walder2026pass,chen2025pass,bagirov2025best}. Despite its practical relevance, the statistical effect of moving from pass@1 to pass@\(k\) remains much less understood.

% Paragraph 2 theme: Prior work and the gap. Explain what Joshi et al. established,
% why optimal demonstrations are a favorable special case, and identify arbitrary
% demonstrations as the missing piece. End with the two questions the paper answers.

Recent work by~\cite{joshi2026learning} initiated the
learning-theoretic study of pass@$k$ in this apprenticeship-learning
framework. They considered the favorable setting of optimal demonstrations
and showed that, for a reward class of cardinality $N$, pass@$k$ improves
the dependence of the sample complexity on $N$ from order $\log N$ to
order $\log N/\log k$, with a matching lower bound. In practice, however,
demonstrations need not be optimal: human demonstrators may have limited
expertise or make mistakes, and demonstrations generated by other models
can themselves be imperfect. More generally, high-quality demonstrations
need not maximize the reward ultimately used for evaluation. Without assuming that the demonstrator is optimal, we ask how allowing the learner to return multiple responses rather than a single response changes the sample complexity of learning from demonstrations.

% Paragraph 3 theme: Informal setup and the two evaluation modes. Give only enough
% notation to make the results understandable. Do not define PAC sample complexity
% here; leave the formal definitions to Section 2.

To address this question, we consider a finite reward-class model in which the learner observes $m$ context--response pairs generated by an unknown demonstrator $\demo$ and knows a finite class $\mathcal R$ of bounded reward functions, but receives no reward feedback. At test time, a pass@$k$ policy returns $k$ possibly dependent responses and is evaluated according to the highest reward among them. We compare its value with the demonstrator's single-response value.

We study two natural evaluation criteria. The first is the \emph{standard evaluation} setting of~\citet{joshi2026learning}, in which an unknown reward \(r^\star\in\mathcal R\) is fixed before training and performance is measured under that reward. 
Secondly, when several reward functions are plausible descriptions of the desired behavior, it may be undesirable to tie the guarantee to one particular reward. This motivates the stronger \emph{robust evaluation} criterion, under which the learned policy must compete with the demonstrator simultaneously for every reward in $\mathcal R$~\citep{syed2007game}. Throughout the introduction, we focus on arbitrary demonstrators, without any optimality or near-optimality assumption.\footnote{In the formal development, this corresponds to the unrestricted suboptimality budget $\Delta=1$. Section~\ref{sec:results} gives results for general $\Delta\in[0,1]$.}

\paragraph{Standard evaluation.}
Under standard evaluation, we obtain a sharp characterization for every $k\ge2$. Write \(N:=|\mathcal R|\), and let \(\varepsilon\) and \(\delta\) denote the target accuracy and failure probability, respectively. The worst-case sample complexity scales as
\[
\frac{1}{\varepsilon}
\left(
\frac{\log N}{\log k}
+
\log\frac1\delta
\right)
.
\]
The upper bound holds for arbitrary demonstrators, while the matching lower bound already holds when the demonstrator is optimal for the selected evaluation reward.

Comparing this result with pass@$1$ reveals two statistically distinct benefits of multiple responses. For arbitrary demonstrators, the pass@$1$ upper bound of \citet{joshi2026learning} has order
\[
\frac{1}{\varepsilon^2} \left(\log N + \log \frac{1}{\delta}\right),
\]
and we show that this dependence is unimprovable in general. Hence the first benefit is a qualitative slow-to-fast transition: moving from one response to any \(k\ge2\) changes the worst-case dependence on accuracy from $1/\varepsilon^2$ to $1/\varepsilon$. The mechanism is simple: when two responses are separated only by a small bias in the demonstrations, pass@$1$ must estimate which response is better, whereas pass@$2$ can cover both possibilities. Multiple responses therefore turn an estimation problem into a coverage problem. The second benefit concerns the dependence on the reward-class size. \citet{joshi2026learning} showed, for optimal demonstrations, that increasing \(k\) improves this dependence from \(\log N\) to \(\log N/\log k\). Our result shows that this improvement persists even for arbitrarily suboptimal demonstrations.

% Paragraph 6 theme: Robust evaluation. Show that it separates the two benefits:
% the slow-to-fast transition survives, but the 1/log k gain does not in the
% high-accuracy regime. State only sample complexities.
\paragraph{Robust evaluation.}
The picture changes under robust evaluation. 
For every \(k\ge2\), the same learner achieves, for arbitrary demonstrators, sample complexity
\[
\frac{1}{\varepsilon} \left(\log N + \log \frac{1}{\delta}\right),
\]
up to universal constants, and we prove a matching lower bound in the high-accuracy regime, i.e., when $\varepsilon \ll 1/k$. 
Thus the slow-to-fast transition from pass@$1$ to pass@$k$ survives the stronger robust guarantee. In contrast, the additional \(1/\log k\) improvement in the reward-class dependence can disappear: in this regime, a full \(\log N\) dependence is unavoidable regardless of \(k\). 
Table~\ref{tab:intro-rates} summarizes the comparison of optimal sample complexities for arbitrary
demonstrators.

\begin{table}[tbh]
\centering
\caption{
Sample complexity for arbitrary demonstrators, up to universal constants. See Section~\ref{sec:results} and Appendix~\ref{appen:tables} for precise statements.
}
\label{tab:intro-rates}
\small
\renewcommand{\arraystretch}{1.45}
\begin{tabular}{@{}lcc@{}}
\toprule
&
\textbf{Standard evaluation}
&
\textbf{Robust evaluation}
\\
\midrule
pass@$1$
&
$\displaystyle
\frac{\log N+\log(1/\delta)}{\varepsilon^2}
$
&
$\displaystyle
\frac{\log N+\log(1/\delta)}{\varepsilon^2}
$
\\[3mm]
pass@$k$, $k\ge2$
&
$\displaystyle
\frac{1}{\varepsilon}
\left(
\frac{\log N}{\log k}
+\log\frac1\delta
\right)
$
&
$\displaystyle
\frac{\log N+\log(1/\delta)}{\varepsilon}
$
\\
\bottomrule
\end{tabular}
\end{table}

\subsection{Related work}

% \cm{Right now, this is repeating what we said in the intro.}
% \subsection{Related work}
% \label{sec:related_work}
\paragraph{Learning from demonstrations.}
Apprenticeship learning aims to match a demonstrator's value under an
unknown reward \citep{abbeel2004apprenticeship,syed2007game}.
Behavior-cloning analyses often assume a restricted policy class
\citep{foster2024behavior}; our learner instead uses a known reward
class and does not fit the demonstrator's response distribution.
Closest to our setting, the pass@$k$ guarantees of
\citet{joshi2026learning} assume optimal demonstrations.
We remove this assumption while retaining fast $1/\varepsilon$
sample complexity for every $k\ge2$, and also give guarantees that
hold simultaneously over the reward class.
\citet{pour2026learning} characterize realizable online learning with
multiple correct answers under different feedback models. Their
realizable setting with one correct response and no feedback on the
learner's correctness is close to our setting with binary rewards and
optimal demonstrations. They study single-response prediction and
use a different agnostic benchmark from our comparison with the
demonstrator's value.

\paragraph{List learning.}
PAC and online list learning allow several predicted labels and count
an error when the target label is absent
\citep{charikar2023characterization,moran2023list}.
PAC learning uses i.i.d.\ examples, while online learning receives
label feedback after each prediction. Our pass@$k$ objective also
evaluates several responses, but compares their best reward with the
demonstrator's value. Several responses may be acceptable, rewards need
not be binary, and demonstrations may be suboptimal. Thus a successful
prediction need not contain the demonstrated response.

\paragraph{Inference-time selection.}
Recent work studies how to select candidates from a given generator
using an estimated reward model
\citep{huang2025best,di2026best}.
Their guarantees depend on the generator's ability to produce good
responses, the accuracy of reward estimates, and the number of
candidates sampled. We instead learn the policy that generates the
candidates, using demonstrations and a known reward class. The learner
receives no reward feedback during training. Our guarantees concern
the number of demonstrations needed to match the demonstrator's value
under pass@$k$ evaluation.t

\section{Problem Setup and Main Results}\label{sec:results}

\subsection{Setup and evaluation criteria}

Let $\gX$ and $\gY$ denote the context and response spaces, and let
$\gD\in\Delta(\gX)$ be an unknown context distribution, where
$\Delta(\mathcal Z)$ denotes the set of probability distributions on
$\mathcal Z$.  The learner knows a finite, non-empty reward class
$\gR\subseteq[0,1]^{\gX\times\gY}$, but observes neither the evaluation
reward nor reward feedback.  An unknown demonstrator
$\demo:\gX\to\Delta(\gY)$ generates $m$ i.i.d.\ demonstrations
\[
    S=((x_1,y_1),\ldots,(x_m,y_m)),
    \qquad
    x_i\sim\gD,\quad
    y_i\sim\demo(\cdot\mid x_i).
\]
A possibly randomized learning rule $\mathcal{A}_k(\gR, S)$ returns a pass@$k$ policy
$\widehat\pi=\mathcal{A}_k(\gR, S):\gX\to\Delta(\gY^k)$, whose $k$ responses may be dependent.
For $r\in\gR$, define
\[
    V_r^k(\pi)
    :=
    \E_{x\sim\gD}
    \E_{(y^{(1)},\ldots,y^{(k)})\sim\pi(\cdot\mid x)}
    \left[
        \max_{\ell\in[k]}r(x,y^{(\ell)})
    \right],
\]
and write $V_r:=V_r^1$.  We measure performance relative to the
demonstrator through
\[
    \gE_r^k(\pi,\demo)
    :=
    V_r(\demo)-V_r^k(\pi).
\]
The excess value may be negative when the learned policy outperforms the
demonstrator.  

We study two evaluation modes: standard evaluation at a fixed but unknown
reward, and robust evaluation simultaneously over the reward class.

\paragraph{Standard evaluation.}
Under standard evaluation, a target reward $r^\star\in\gR$ is fixed
before training but is not revealed to the learner.  The goal is to make
$\gE_{r^\star}^k(\widehat\pi,\demo)$ small.

The demonstrator need not be optimal for $r^\star$.  Define the optimal
value under reward $r$ by
\[
    V_r^\star
    :=
    \E_{x\sim\gD}
    \left[\sup_{y\in\gY}r(x,y)\right],
\]
and define the demonstrator's suboptimality under $r$ as
\[
    \Delta_r
    :=
    V_r^\star-V_r(\demo).
\]
We use $\Delta\in[0,1]$ as a budget on this quantity.  Thus
$\Delta=0$ restricts attention to demonstrators that are optimal for
the target reward, whereas $\Delta=1$ places no restriction on the
demonstrator.

\begin{definition}[Standard sample complexity]
\label{def:standard_pac_complexity}
For $\varepsilon,\delta\in(0,1)$ and $\Delta\in[0,1]$, the standard
sample complexity $m_{\mathrm{std}}^{(k)}
    (\gR,\varepsilon,\delta,\Delta)$ 
is the smallest nonnegative integer $m_0$ for which there exists a
learning rule such that, for every $m\ge m_0$, every context
distribution $\gD$, every target reward $r^\star\in\gR$, and every
demonstrator $\demo$ satisfying
$
    \Delta_{r^\star}\le\Delta,
$
the learned policy satisfies
\[
    \mathbb P\!\left\{
        \gE_{r^\star}^k(\widehat\pi,\demo)>\varepsilon
    \right\}
    \le\delta.
\]
\end{definition}

\paragraph{Robust evaluation.}
Under robust evaluation, a single learned policy must compete with the
demonstrator simultaneously for every reward in $\gR$.  Accordingly,
the relevant notion of demonstrator quality is its worst-case
suboptimality over the reward class,
\[
    \Delta_{\gR}
    :=
    \sup_{r\in\gR}\Delta_r
    =
    \sup_{r\in\gR}
    \bigl[V_r^\star-V_r(\demo)\bigr].
\]

\begin{definition}[Robust sample complexity]
\label{def:robust_pac_complexity}
For $\varepsilon,\delta\in(0,1)$ and $\Delta\in[0,1]$, the robust
sample complexity $m_{\mathrm{rob}}^{(k)}
    (\gR,\varepsilon,\delta,\Delta)$ 
is the smallest nonnegative integer $m_0$ for which there exists a
learning rule such that, for every $m\ge m_0$, every context
distribution $\gD$, and every demonstrator $\demo$ satisfying $\Delta_{\gR}\le\Delta,$
the learned policy satisfies
\[
    \mathbb P\!\left\{
        \sup_{r\in\gR}
        \gE_r^k(\widehat\pi,\demo)>\varepsilon
    \right\}
    \le\delta.
\]
\end{definition}

Both sample complexities above are defined for a fixed reward class \(\mathcal R\). Our upper bounds hold for every finite \(\mathcal R\), whereas the lower bounds below show that, for each prescribed cardinality \(N\), there exist reward classes of size \(N\) (or at most \(N\)) with the stated sample complexity. We first study standard evaluation and then turn to robust evaluation to determine which benefits of multiple responses survive the stronger requirement of simultaneous validity.

\subsection{Standard evaluation}

Under standard evaluation, the learner needs to compete with the demonstrator under a single reward fixed before training, although that reward is never revealed. We first show that, once \(k\ge2\), the sample complexity no longer depends on the quality of the demonstrator.

\paragraph{Pass@$k$ achieves a fast rate for arbitrary demonstrators.}
Our first result gives a high-probability guarantee that holds without
any assumption on demonstrator quality.
The learning rule achieving this guarantee is constructed and analyzed in Section~\ref{sec:optimal_learning}.
\begin{theorem}[Standard pass@$k$ upper bound]
\label{thm:standard_passk_upper}
For every $k\ge2$, there exists a learning rule $\gA_k$ with
the following guarantee. For every $m\ge1$, $\delta\in(0,1)$,
context distribution $\gD$, demonstrator $\demo$, and
evaluation reward $r^\star\in\gR$ fixed before sampling,
the policy $\widehat\pi=\gA_k(\gR,S)$ satisfies
\[
    \mathbb P_S\!\left\{
        \gE_{r^\star}^k(\widehat\pi,\demo)
        >
        \frac{3\log|\gR|}{m\log k}
        +\frac{3\log(1/\delta)+1}{m}
    \right\}
    \le\delta.
\]

Consequently, for every $\varepsilon>0$, $\delta\in(0,1)$,
and $\Delta\in[0,1]$,
\[
    m_{\mathrm{std}}^{(k)}
        (\gR,\varepsilon,\delta,\Delta)
    \le
    \left\lceil
        \frac{3\log|\gR|/\log k+3\log(1/\delta)+1}
             {\varepsilon}
    \right\rceil.
\]
\end{theorem}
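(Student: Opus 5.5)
The plan is to build $\gA_k$ by running an online, exponential-weights style pass@$k$ learner over the sample $S$ and converting it to a batch policy. The batch policy is $\widehat\pi$: draw a uniformly random index $t\in\{1,\ldots,m\}$ (or $t\in\{1,\ldots,m+1\}$, which is where I expect the extra ``$+1$'' to come from) and play the online learner's round-$t$ rule. The analysis controls a single exponential potential whose expectation, taken over the demonstrations, does not grow. A Markov-type inequality then converts this into the high-probability statement. Because the potential is an exponential of the cumulative excess, this conversion gives a fast $1/m$ rate with no variance term.

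\textbf{Online learner.} Maintain weights $w_t$ on $\gR$, with $w_1$ uniform. Fix a learning rate with $e^{\eta}$ of order $k$; taking $\eta=\tfrac13\log k$ would give the constant $3$. At round $t$, given $x_t$ and the normalized weights $p_t$, choose $k$ responses $\hy{1},\ldots,\hy{k}$ greedily. Let $M_{\ell}(r):=\max_{j\le\ell}r(x_t,\hy{j})$, with $M_0\equiv0$. Each new $\hy{\ell}$ maximizes the residual ``threat''
\[
g_{\ell-1}(y):=\sum_{r}p_t(r)\bigl(e^{\eta(r(x_t,y)-M_{\ell-1}(r))}-1\bigr)_+
\]
over $y\in\gY$, up to a negligible slack if the supremum is not attained. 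After observing $y_t$, update
\[
w_{t+1}(r)=w_t(r)\exp\bigl(\eta\,[r(x_t,y_t)-M_k(r)]\bigr).
\]
Set $\Phi_t:=\sum_r w_t(r)$. For the fixed target $r^\star$,
\[
\Phi_{m+1}\ge \tfrac1N\exp\Bigl(\eta\sum_t\bigl[r^\star(x_t,y_t)-M_k^{(t)}(r^\star)\bigr]\Bigr).
\]

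\textbf{Key lemma (main obstacle).} The core step is a per-round covering inequality. For every $x$, every distribution $p$ on $\gR$, and every $y$,
\[
\sum_r p(r)\,e^{\eta(r(x,y)-M_k(r))}\le 1+g_k(y)\le C,
\]
where $C$ is an absolute constant close to $e^{1/3}$ or similar. Equivalently, after $k$ greedy picks no single response can still beat the chosen set by a margin on much of the $p$-mass. My first attempt is an averaging argument. Initially $g_0\le e^{\eta}-1\approx k$. Picking the maximizer $\hy{\ell}$ kills, for that response, all the mass it threatens. A threat by any remaining $y$ must then come from rewards where $y$ beats every chosen response. Comparing $y$ with the greedy choices should show that the sequence $g_\ell(\cdot)$ shrinks fast enough that $\sup_y g_k(y)=O(1)$; heuristically the total threat $\approx k$ is spread over $k$ picks. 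This is exactly where $e^\eta\asymp k$, and hence the $\log k$ in the bound, enters. If the pure greedy bound is too weak, I would fall back to randomizing the first response proportionally to $e^{\eta r}$, which yields an inequality in expectation. That suffices, since only expectations of $\Phi$ are needed.

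\textbf{Conclusion.} Given the lemma, $\Phi_{t+1}\le C\,\Phi_t$ pointwise, but that bound is too crude. The inequality is really used conditionally in the batch form. Replace $r^\star(x_t,y_t)$ in the exponent by the centered quantity $r^\star(x_t,y_t)-V_{r^\star}(\demo)$, and $M_k^{(t)}(r^\star)$ by the population value $V^k_{r^\star}$ of the round-$t$ rule. Then use $e^{a}\le 1+a+a^2$ together with the lemma to show that
\[
\exp\Bigl(\eta\sum_t \gE^k_{r^\star}(\pi_t,\demo)-c\,t\Bigr)\cdot(\text{potential})
\]
is a nonnegative supermartingale with initial value $\le N$. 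Markov's inequality then gives, with probability $\ge1-\delta$,
\[
\eta\sum_{t}\gE^k_{r^\star}(\pi_t,\demo)\le \log N+\log(1/\delta)+O(1).
\]
Dividing by $\eta m$, with $\eta\asymp\log k$ on the $\log N$ term and a separate constant-rate copy for the $\log(1/\delta)$ term (or simply $\log k\ge\log 2$), yields the displayed bound, using linearity of $\gE^k$ in the mixture $\widehat\pi$. The sample-complexity corollary follows by solving the displayed bound $\le\varepsilon$ for $m$.
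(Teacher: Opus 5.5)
There is a genuine gap at both load-bearing steps.

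\textbf{The potential must never increase.} Your key lemma is the wrong inequality. Bounding $\sum_r p(r)e^{\eta(r(x,y)-M_k(r))}$ by $1+g_k(y)$ discards exactly the rewards on which the learner beats the demonstration. The most it can give is some $C>1$, and then the potential may grow like $C^m$. That is a loss linear in $m$, and nothing later recovers it. What the argument needs is that the potential \emph{never increases}, which requires keeping both sides. Greedy coverage gives $w(U_t\setminus D_t)\ge k\,w(D_t\setminus U_t)$: the $i$-th pick covers at least the still-uncovered demonstration weight $a_{i-1}$, so the learner-only weight is at least $\sum_{i=1}^k a_i\ge k\,a_k$. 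For binary rewards, let $D,U$ be the rewards covered by $y$ and by the tuple. Retaining the learner-wins term then gives $\sum_r p(r)e^{\eta(r(x,y)-M_k(r))}\le 1+p(D\setminus U)(e^{\eta}-1)(1-ke^{-\eta})\le 1$ whenever $e^\eta\le k$. For $[0,1]$-valued rewards, the paper turns this into a genuine coverage statement by splitting each reward into thresholds $u_j=j/L$ and weighting pairs $(r,j)$. Your ``threat'' greedy comes with no analogous inequality. Also, the $+1$ in the bound is the $1/L$ rounding error at $L=m$, not an $m+1$ indexing effect.

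\textbf{The symmetric update cannot give the fast high-probability rate.} This is the more fundamental problem: even a non-increasing potential does not suffice with your update $w\mapsto w\,e^{\eta(d-a)}$. Write $X=d-a$. If $X=\pm1$ with probability $1/2$ each, then $\mathbb E e^{-sX}=\cosh s>1$ while $\mathbb E X=0$. So no bound $\mathbb E[e^{-sX}\mid\gF_{t-1}]\le e^{-c\,\mathbb E[X\mid\gF_{t-1}]}$ holds for any $s>0$. Expanding with $e^a\le 1+a+a^2$ leaves a conditional second moment that the conditional mean does not control. Your $-ct$ term sums to $-cm$, which is an additive $c/\eta$ error in $\gE$, not the $O(1)$ you state. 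Your endpoint $\eta\sum_t\gE\le\log N+\log(1/\delta)+O(1)$ with $\eta\asymp\log k$ would also give a confidence term $\log(1/\delta)/(m\log k)$. Theorem~\ref{thm:standard_passk_lower} forbids that for large $k$, so the endpoint cannot follow from any valid argument.

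\textbf{The missing idea is an asymmetric update.} Multiply by $1+k/4$ (log-increment $\alpha_k$) when only the demonstration clears a threshold, and by $3/4$ (log-decrement $\beta=\log(4/3)<\alpha_k$) when only the learner does. With $\theta_k:=\log(3/2)/\alpha_k\le 1/\log k$, one gets $e^{-\theta_k(\alpha_k I-\beta J)}\le 1-\tfrac13(I-J)$. This is affine in the shortfall, with no variance term, so $\exp\{\tfrac13\sum_{s\le t}g_s-\theta_k\sum_{s\le t}z_s\}$ is a supermartingale. The non-increasing potential, after averaging log-weights over thresholds, gives the deterministic bound $\sum_t z_t^{r^\star}\le\log|\gR|$. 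Combining the two, Markov's inequality gives the threshold $(3\theta_k\log|\gR|+3\log(1/\delta)+1)/m$. This decouples the $1/\log k$ factor on the class term from the constant rate on the confidence term.
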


Two features of the bound are worth separating. First, its \(1/\varepsilon\) dependence is independent of \(\Delta\): no additional samples are required when the demonstrator is imperfect. Second, increasing \(k\) improves the dependence on the reward-class size from \(\log|\mathcal R|\) to \(\log|\mathcal R|/\log k\), while leaving the confidence term unchanged.

The upper bound leaves two questions: is the independence from demonstrator quality genuine, and is the \(1/\log k\) dependence optimal? The next result answers both affirmatively. In fact, the matching lower bound already holds for demonstrators that are optimal for the evaluation reward.

\begin{theorem}[Standard pass@$k$ lower bound]
\label{thm:standard_passk_lower}
There exist universal constants $c,c_0>0$ such that the
following holds. For every pair of integers $k\ge2$ and
$N\ge2k$, there exist finite  spaces $\gX,\gY$ and
a binary reward class
$\gR\subseteq\{0,1\}^{\gX\times\gY}$ with $|\gR|=N$ such that,
for every $\Delta\in[0,1]$, $0<\varepsilon\le c_0$, and
$0<\delta<1/16$,
\[
    m_{\mathrm{std}}^{(k)}
        (\gR,\varepsilon,\delta,\Delta)
    \ge
    \frac{c}{\varepsilon}
    \left(
        \frac{\log N}{\log k}
        +\log\frac1\delta
    \right).
\]
The lower bound holds even when the demonstrator is optimal
for the selected evaluation reward.
\end{theorem}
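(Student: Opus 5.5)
Since $\Delta_{r^\star}\le\Delta$ is a weaker restriction for larger $\Delta$, it suffices to prove the lower bound at $\Delta=0$: the target reward is binary and the demonstrator is optimal for it. The bound has two terms, $\frac{\log N}{\varepsilon\log k}$ and $\frac{\log(1/\delta)}{\varepsilon}$. We prove each by its own construction and then combine them, using $\max(a,b)\ge(a+b)/2$. To host both constructions in a single class of size exactly $N$, we take $\gX$ to be a disjoint union of context blocks and let the rewards restricted to each block realize the relevant sub-construction. Padding with duplicate-on-irrelevant-contexts rewards brings the cardinality to exactly $N$. The assumption $N\ge2k$ guarantees there is room for at least two groups of size $k$.

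\textbf{The $\log(1/\delta)/\varepsilon$ term.} Take one special context $x_1$ of mass $p\asymp\varepsilon$ and a dummy context $x_0$ where every response has reward $1$. At $x_1$ there are $2k$ responses split into two groups $A,B$ of size $k$. Reward $r_A$ equals $1$ exactly on $A$, and $r_B$ equals $1$ exactly on $B$. Under $r_A$ the demonstrator plays uniformly on $A$, and under $r_B$ uniformly on $B$, so in both cases it is optimal. With probability $(1-p)^m\ge e^{-2pm}$, no sample lands on $x_1$, and then the output at $x_1$ has the same law under both hypotheses. A pass@$k$ policy with $k$ responses whose probability of hitting $A$ is $q$ has probability at most $1-q$ of hitting $B$. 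Hence, for one of the two hypotheses, the learner incurs excess $\ge p/2$ with probability $\ge\frac12 e^{-2pm}$. Setting $p=2\varepsilon$ (one can rescale to $p=c'\varepsilon$ to get $>\varepsilon$) and requiring this probability to be at most $\delta<1/16$ forces $m\gtrsim\log(1/(2\delta))/\varepsilon$.

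\textbf{The $\log N/(\varepsilon\log k)$ term.} Here we adapt the lower bound of \citet{joshi2026learning}, essentially the standard Fano/counting argument for list learning. Take $n\asymp\log N/\log k$ special contexts, each with mass $\asymp\varepsilon/n$ (with the remaining mass on the dummy context). Each special context has $M:=k^2$ (or $2k$) responses, and a reward is indexed by a choice of one good response per context, giving $M^n\le N$ rewards. The demonstrator plays the good response, so it is optimal. With $m\ll n/\varepsilon$ samples, at least half of the special contexts are unseen with constant probability. On an unseen context the learner's $k$ responses hit a uniformly random good response with probability at most $k/M\le 1/2$. Under a uniform prior over rewards, the expected excess is therefore $\gtrsim\varepsilon$. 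A Markov or reverse-Markov step then converts this into a constant failure probability $>1/16$, since the excess is bounded by $O(\varepsilon)$; this may require choosing constants carefully or using a Chernoff bound on the number of missed contexts. Finally, exhibit a fixed bad $r^\star$ via the averaging argument.

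\textbf{Main obstacle.} The delicate step is converting the averaged expected-excess bound into a high-probability failure exceeding $\delta$, while keeping the cardinality exactly $N$ and the constants uniform in $k$. I would handle this by concentrating the number of unseen contexts and the number of misses among them, using a Chernoff bound over independent per-context hits under the product prior. This shows that the excess exceeds $\varepsilon$ with probability at least $1/4$ whenever $m\le c\,n/\varepsilon$.
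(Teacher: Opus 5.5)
\textbf{The $\log(1/\delta)/\varepsilon$ construction fails for every $k\ge2$.} You take $r_A(x_1,y)=\ind\{y\in A\}$ and $r_B(x_1,y)=\ind\{y\in B\}$. A tuple containing one element $a\in A$ and one element $b\in B$ then scores $1$ under both rewards. So a learner that ignores the data and always outputs $(a,b,b,\dots,b)$ has excess at most $0$ under both hypotheses, since every response also has reward $1$ at $x_0$. Your claim that a policy hitting $A$ with probability $q$ hits $B$ with probability at most $1-q$ is therefore false: one slot per group suffices. This is exactly the coverage mechanism the paper uses to explain the pass@$1$-to-pass@$2$ transition. Two indistinguishable hypotheses can always be covered once $k\ge2$, so no two-point construction of this kind can give a pass@$k$ lower bound.

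\textbf{How to repair it.} You need more than $k$ mutually exclusive hypotheses at the unseen context, each rewarding a single response.
\begin{itemize}
\item The paper puts $2k$ responses at $x_1$, uses rewards $\ind\{y=a\}$ for $a\in[2k]$, a deterministic demonstrator playing $a$ (hence optimal), and a uniform prior on $a$.
\item On the event that $x_1$ is absent from the sample, which has probability at least $e^{-2m\eta}$ when $x_1$ has mass $\eta\le1/2$, the hidden $a$ is still uniform given $S$ and $\widehat\pi$.
\item The inclusion budget $\sum_a C_{x_1}(a)\le k$, valid for arbitrarily dependent responses, gives average miss probability at least $1/2$.
\item A reverse-Markov step gives miss probability $>1/4$ with conditional probability at least $1/3$.
\item Averaging over the prior fixes one reward whose failure probability is at least $e^{-2m\eta}/3$.
\end{itemize}
Taking $\eta$ a suitable constant multiple of $\varepsilon$, so that $\eta/4>\varepsilon$, this exceeds $\delta$ unless $m\gtrsim\log(1/\delta)/\varepsilon$.

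\textbf{The reward-class term is fine.} It is essentially the paper's argument: graph-indicator rewards over $[2k]^d$ with $d=\lfloor\log N/\log(2k)\rfloor$, unseen contexts, the inclusion budget, a reverse-Markov step using that the excess is bounded by the total special mass, and averaging over the prior.
\begin{itemize}
\item You must take $M=2k$, not $k^2$, since only $N\ge2k$ is assumed.
\item No Chernoff bound is needed; the bounded excess makes reverse Markov sufficient.
\end{itemize}

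\textbf{The block combination is unnecessary.} Once the confidence construction is corrected, the same graph-indicator class already contains rewards whose targets at the first special context range uniformly over $[2k]$. So both constructions live in a single class, padded to size exactly $N$ with zero-mass auxiliary contexts, as in the paper.
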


Together, Theorems~\ref{thm:standard_passk_upper}
and~\ref{thm:standard_passk_lower} establish the optimal
dependence on reward-class size, accuracy, and confidence
in the stated parameter range. The matching bounds show that neither the fast rate nor the \(1/\log k\) improvement relies on favorable demonstrations. The upper bound holds for arbitrary demonstrators, while the lower bound already holds for optimal ones. Thus, once multiple responses are allowed, demonstrator suboptimality creates no additional statistical cost. The proofs of Theorem~\ref{thm:standard_passk_upper} and Theorem~\ref{thm:standard_passk_lower} are given in the Appendices~\ref{appen:standard_passk_upper} and~\ref{appen:standard_passk_lower} respectively.

\paragraph{Comparison with pass@$1$.}
To isolate what is gained by allowing multiple responses, we now compare the preceding rate with the single-response problem.  A result of~\citet{joshi2026learning} gives
\[
    m_{\mathrm{std}}^{(1)}(\gR,\varepsilon,\delta,\Delta)
    \lesssim
    \frac{\log|\gR|+\log(1/\delta)}{\varepsilon}
    +\frac{\Delta\,[\log|\gR|+\log(1/\delta)]}{\varepsilon^2}.
\]
For an optimal demonstrator (\(\Delta=0\)), this bound has the same fast \(1/\varepsilon\) dependence as pass@\(k\). Positive suboptimality, however, introduces a potentially dominant \(1/\varepsilon^2\) term. The following lower bound shows that this deterioration is intrinsic rather than an artifact of the analysis. 

\begin{theorem}[Standard pass@$1$ lower bound]
\label{thm:standard_pass1_lower}
There exists a universal constant $c>0$ such that the
following holds. For every integer $N\ge2$,
$\Delta\in[0,1]$, $0<\varepsilon\le1/64$, and
$0<\delta\le1/64$, there exist finite  spaces
$\gX,\gY$ and a finite  binary reward class
$\gR\subseteq\{0,1\}^{\gX\times\gY}$ with $|\gR|\le N$
such that
\[
\begin{aligned}
    m_{\mathrm{std}}^{(1)}
        (\gR,\varepsilon,\delta,\Delta)
    \ge c\Bigg[
        &\frac{\log N+\log(1/\delta)}{\varepsilon}+\frac{\Delta}{\varepsilon^2}
        \left\{
            \log\frac1\delta
            +\log\left(
                1+\frac{N\varepsilon^2}
                         {(\Delta+\varepsilon)^2}
            \right)
        \right\}
    \Bigg].
\end{aligned}
\]
\end{theorem}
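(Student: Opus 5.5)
The plan is to prove the bound term by term. The right-hand side is at most three times the maximum of its three summands
\[
\frac{\log N+\log(1/\delta)}{\varepsilon},\qquad
\frac{\Delta\log(1/\delta)}{\varepsilon^2},\qquad
\frac{\Delta}{\varepsilon^2}\log\Bigl(1+\frac{N\varepsilon^2}{(\Delta+\varepsilon)^2}\Bigr).
\]
So it suffices, for each parameter tuple, to exhibit one reward class with $|\gR|\le N$ realizing whichever summand is largest. We may also use that $m_{\mathrm{std}}^{(1)}$ is nondecreasing in $\Delta$, since a larger budget admits more demonstrators. Hence every construction with an optimal demonstrator ($\Delta_{r^\star}=0$) is available for all $\Delta$.

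\emph{Fast term (optimal demonstrator).} Take contexts $x_0,x_1,\dots,x_d$ with $d\asymp\log_2 N$ and responses $\{0,1\}$. Let $\gR=\{r_b:b\in\{0,1\}^d\}$ with $r_b(x_j,y)=\ind\{y=b_j\}$, and let $r_b$ equal $1$ at $x_0$. Put mass $1-8\varepsilon$ on $x_0$ and spread $8\varepsilon$ uniformly over $x_1,\dots,x_d$, with the demonstrator playing $b_j$ deterministically. A context never observed leaves the learner at most $1/2$ correct in expectation over a uniformly random $b$. If $m\lesssim d/\varepsilon$, a constant fraction of the $x_j$ are unseen, giving excess $\gtrsim\varepsilon$ with constant probability. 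A standard Markov/averaging argument then converts this into failure probability above $\delta$ for some fixed $r^\star$. For the $\log(1/\delta)/\varepsilon$ part, use a single rare context of mass $4\varepsilon$ with two candidate rewards. With probability $(1-4\varepsilon)^m>2\delta$ it is unobserved, and then one of the two rewards incurs excess $\ge 2\varepsilon$.

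\emph{Confidence term with suboptimal demonstrator (Le Cam).} Use context $x_0$, where all rewards equal $1$, with mass $1-p$, and a context $x_1$ with mass $p\asymp\Delta$ and responses $a,b$. Take two rewards $r_a=\ind\{y=a\}$ and $r_b=\ind\{y=b\}$ at $x_1$. Under $r^\star=r_a$ the demonstrator plays $a$ with probability $(1+\gamma)/2$, and symmetrically under $r_b$. Its suboptimality is $p(1-\gamma)/2\le\Delta$.

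If the learner plays $a$ at $x_1$ with probability $q$, then under $r_a$ the excess is $p\bigl((1+\gamma)/2-q\bigr)$. Choosing $\gamma=4\varepsilon/p$ (valid when $\varepsilon\lesssim\Delta$; otherwise this term is dominated by the fast term) forces $\gE>\varepsilon$ under $r_a$ whenever $q\le 1/2$, and similarly for $r_b$. Hence a successful learner yields a test between the two demonstrators. The KL divergence between the sample laws is $\asymp mp\gamma^2\asymp m\varepsilon^2/\Delta$. Bretagnolle--Huber then gives error $\ge\tfrac14 e^{-Cm\varepsilon^2/\Delta}$, which exceeds $\delta$ unless $m\gtrsim\Delta\log(1/\delta)/\varepsilon^2$.

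\emph{Class-size term (Fano/Assouad), the main obstacle.} I would tensorize the previous gadget. The rare mass $p\asymp\Delta$ is split over $d$ contexts, each biased by $\pm\gamma$, and a Gilbert--Varshamov packing of sign vectors gives $\log|\gR|\asymp d$. Assouad's lemma gives a lower bound of order $d\,\Delta/\varepsilon^2$ with constant probability, and the standard Fano refinement handles the $\delta\le1/64$ regime.

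The delicate step is matching the exact $\log\bigl(1+N\varepsilon^2/(\Delta+\varepsilon)^2\bigr)$ rather than $\log N$. Per-context biases must satisfy $\gamma\le1$, which caps the effective number of useful coordinates when $\varepsilon$ is comparable to $\Delta$. I would first try a single rare context with $M\asymp\min\{N,1+N\varepsilon^2/(\Delta+\varepsilon)^2\}$ responses. The demonstrator is nearly uniform over a block, tilted toward the hidden correct response, and Fano's inequality applies over the $M$ hypotheses with pairwise KL $\asymp m\varepsilon^2/\Delta$. Tuning the tilt and block size against the suboptimality budget is where I expect the bookkeeping to be hardest.
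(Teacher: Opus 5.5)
There is a genuine gap in the class-size term. Your fast-rate and Le Cam pieces are fine up to constant tuning, and they essentially match the paper; the paper reuses one class for all three terms, but the statement allows parameter-dependent classes. The problem is the term $\frac{\Delta}{\varepsilon^2}\log\bigl(1+\frac{N\varepsilon^2}{(\Delta+\varepsilon)^2}\bigr)$: neither of your constructions can produce it.

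\emph{The tensorized construction.} Suppose at context $i$ the demonstrator plays the rewarded response with probability $\tfrac{1+\gamma}{2}$ and the learner with probability $\tfrac{1+c_i}{2}$. Under the \emph{fixed} reward, the excess is the signed sum $\frac{p}{2d}\sum_{i=1}^d(\gamma-c_i)$. Correct coordinates offset wrong ones, so success only requires $\frac1d\sum_i c_i\ge\gamma-2\varepsilon/p$, and Assouad's lemma does not apply. The instance is in fact easy. Copying one observed demonstration per context gives $\mathbb E c_i=\gamma$. So with $m\asymp d/\Delta$ (enough to see most contexts) and $d\gtrsim(\Delta/\varepsilon)^2\log(1/\delta)$, Hoeffding gives success with probability $1-\delta$. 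That is far below $d\Delta/\varepsilon^2$. A majority-vote variant shows that for any $d$ this instance costs roughly at most $d/\Delta+\Delta\log(1/\delta)/\varepsilon^2$, so it only supports the fast and confidence terms. Tensorization plus Assouad is the right tool for robust evaluation, where the supremum over rewards turns the loss into $\sum_i|p_{\theta,i}-q_i|$; that is how the paper proves Theorem~\ref{thm:robust_pass1_lower}.

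\emph{The single-context fallback.} This fails because one policy can hedge. The uniform output has excess $p\lambda$, where $\lambda$ is the demonstrator's extra mass on the hidden response, so you need $\lambda\gtrsim\varepsilon/\Delta$. But then, once $\lambda M\gg1$ (which the list bound needs), hypotheses differ by a divergence of order $\lambda\log(\lambda M)$ per observation, and Fano yields only order $1/\varepsilon$. Keeping the divergence at order $(\varepsilon/\Delta)^2$ forces $\lambda\lesssim(\varepsilon/\Delta)/\sqrt M$, where the uniform policy already succeeds. Reward sets of null mass near $\tfrac12$ do not help either, since some response lies in half of them.

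The missing idea is to make the rewards orthogonal as functions of the context. Then $\log\bigl(1+N\varepsilon^2/(\Delta+\varepsilon)^2\bigr)$ appears as $\log$ of $N$ divided by the largest number of rewards one policy can satisfy, not as a cap from $\gamma\le1$. The paper's construction runs as follows.
\begin{itemize}
\item Adjoin contexts $z\in\{-1,+1\}^{\gR}$ with uniform law $\mu$ and total mass $\Delta$. Reward $r$ pays response $1$ or $2$ according to $z_r$, and its demonstrator plays $1$ with probability $\tfrac12+hz_r$, where $h=2\varepsilon/\Delta$.
\item Success for $r$ forces $\mathbb E_\mu[\widehat f(Z)Z_r]\ge h$, with $\widehat f=\widehat\pi(1\mid\cdot)-\widehat\pi(2\mid\cdot)$. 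Bessel's inequality for the orthonormal functions $z\mapsto z_r$ caps the number of such $r$ at $h^{-2}$.
\item Compare the uniform-prior joint law with the null $h=0$; their divergence is at most $4m\Delta h^2$, independently of $N$. Data processing and $\kl(p,q)\ge p\log(1/q)-\log 2$ then give $m\gtrsim\frac{\Delta}{\varepsilon^2}\log(Nh^2)$ when $Nh^2\ge4$. The confidence term covers $Nh^2<4$.
\end{itemize}
Your tensorized class is essentially the transpose of this one. There, rewards are sign patterns over $d\approx\log_2 N$ contexts, instead of contexts being sign patterns over the $N$ rewards. That transpose is exactly what destroys the orthogonality.
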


At $\Delta=0$, the lower bound has the fast order
$[\log N+\log(1/\delta)]/\varepsilon$.
For positive suboptimality budgets, it also captures a
slow contribution proportional to $\Delta/\varepsilon^2$.
In particular, if
\[
    N\ge(1+\Delta/\varepsilon)^4
    \qquad\text{or}\qquad
    \log(1/\delta)\ge\log N,
\]
the lower bound matches the suboptimality-dependent upper bound up to
universal constants. See the proof in Appendix~\ref{appen:standard_pass1_lower}.

The preceding bounds make precise the two benefits of multiple responses
highlighted in the introduction. First, there is a qualitative
slow-to-fast transition at $k=2$: for arbitrary demonstrators, pass@$1$
can require $1/\varepsilon^2$ samples, whereas every pass@$k$ with
$k\ge2$ achieves a $1/\varepsilon$ rate. Second, within the regime
$k\ge2$, increasing $k$ further improves the dependence on the
reward-class size from $\log N$ to $\log N/\log k$, extending the
optimal-demonstrator phenomenon of \citet{joshi2026learning} to
arbitrarily suboptimal demonstrations.

The first transition reflects a change in the underlying statistical
task. This can already be seen with a single context and two responses
$a$ and $b$. Consider two possible instances: in the first, only $a$
receives reward one and the demonstrator outputs $a$ with probability
$1/2+2\varepsilon$; in the second, only $b$ receives reward one and the
demonstrator outputs $b$ with the same probability. A pass@$1$ learner
must distinguish these two possibilities. Predicting $a$ and $b$ equally
often achieves value $1/2$, while the demonstrator's value is
$1/2+2\varepsilon$, so attaining excess at most $\varepsilon$ requires
identifying the sign of an $\gO(\varepsilon)$ bias. This is a standard
small-bias testing problem and requires order $1/\varepsilon^2$ samples.

A pass@$2$ learner can instead return both responses and obtain reward one
under either instance, without estimating the bias at all. What remains is
a coverage problem: the learner must ensure that relevant response
possibilities are represented among its outputs. This is closely related
to missing-mass phenomena~\citep{rashidinejad2021bridging,rajaraman2020toward}, where the error is controlled by the probability
mass of possibilities not yet covered by the sample, leading to a
$1/m$ rather than a $1/\sqrt m$ accuracy scale. Thus multiple responses
remove the small-bias estimation bottleneck and replace it by a
coverage-type difficulty.

\subsection{Robust evaluation}
\label{subsec:robust_evaluation}
Standard evaluation revealed two distinct benefits of multiple responses: moving from \(k=1\) to \(k\ge2\) removes the suboptimality-induced slow rate, while increasing \(k\) further improves the dependence on the reward-class size by a factor of \(\log k\). Robust evaluation allows us to ask whether these two benefits persist when a single learned policy must perform well simultaneously for every reward in \(\mathcal R\).

\paragraph{The $1/\log k$ improvement disappears.}

The same learning rule used under standard evaluation also gives a robust
guarantee.

\begin{theorem}[Robust pass@$k$ upper bound]
\label{thm:robust_passk_upper}
Let $k\ge2$. The learning rule $\gA_k$ from
Theorem~\ref{thm:standard_passk_upper} satisfies the
following guarantee: for every $m\ge1$, $\delta\in(0,1)$,
context distribution $\gD$, and demonstrator $\demo$,
the policy $\widehat\pi=\gA_k(\gR,S)$ obeys
\[
    \mathbb P_S\!\left\{
        \sup_{r\in\gR}
        \gE_r^k(\widehat\pi,\demo)
        >
        \frac{3\log|\gR|}{m\log k}
        +\frac{3\log(|\gR|/\delta)+1}{m}
    \right\}
    \le\delta.
\]
Consequently, for every $\varepsilon>0$,
$\delta\in(0,1)$, and $\Delta\in[0,1]$,
\begin{equation}
\label{eq:robust_passk_upper_samples}
    m_{\mathrm{rob}}^{(k)}
        (\gR,\varepsilon,\delta,\Delta)
    \le
    \left\lceil
        \frac{8\log(|\gR|/\delta)+1}{\varepsilon}
    \right\rceil.
\end{equation}
\end{theorem}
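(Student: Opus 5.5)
The plan is to reduce the robust guarantee to the standard one (Theorem~\ref{thm:standard_passk_upper}) with a union bound over the reward class. The learning rule $\gA_k$ does not depend on $r^\star$, so the guarantee of Theorem~\ref{thm:standard_passk_upper} holds for every $r\in\gR$ separately, with the same random policy $\widehat\pi=\gA_k(\gR,S)$. The demonstrator is arbitrary, and $\Delta$ does not enter that bound. Fix $r\in\gR$ and apply Theorem~\ref{thm:standard_passk_upper} with target reward $r$ and confidence $\delta/|\gR|$. This gives
\[
    \mathbb P_S\!\left\{
        \gE_{r}^k(\widehat\pi,\demo)
        >
        \frac{3\log|\gR|}{m\log k}
        +\frac{3\log(|\gR|/\delta)+1}{m}
    \right\}
    \le\frac{\delta}{|\gR|}.
\]
The threshold is the same for every $r$. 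A union bound over the $|\gR|$ events then bounds the probability that the supremum over $r\in\gR$ exceeds the threshold by $\delta$, which is the first claim.

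One point needs care. Theorem~\ref{thm:standard_passk_upper} is stated for $r^\star$ "fixed before sampling". Each $r\in\gR$ is fixed in advance, since $\gR$ is known and deterministic, so each application is legitimate. If $\gA_k$ uses internal randomness, the probability should be read over $S$ together with that randomness. Since each per-$r$ statement is about the same joint experiment, the union bound is unaffected. I expect this to be the only subtle point: the learner must not depend on the target reward. That holds by construction, because $\gA_k$ takes only $(\gR,S)$ as input.

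For the sample-complexity consequence, I will simplify the threshold. Since $k\ge2$, $\log k\ge\log 2$, so $3\log|\gR|/\log k\le (3/\log 2)\log|\gR|\le 5\log|\gR|\le 5\log(|\gR|/\delta)$ because $\delta<1$. The threshold is therefore at most $(8\log(|\gR|/\delta)+1)/m$. For $m\ge\lceil(8\log(|\gR|/\delta)+1)/\varepsilon\rceil$ this is at most $\varepsilon$. So for every such $m$, every $\gD$, and every demonstrator (in particular any with $\Delta_{\gR}\le\Delta$), $\mathbb P\{\sup_r\gE_r^k(\widehat\pi,\demo)>\varepsilon\}\le\delta$. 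This is the requirement of Definition~\ref{def:robust_pac_complexity}, and it yields \eqref{eq:robust_passk_upper_samples}. The constant check $3/\log 2\approx4.33\le5$ is routine.
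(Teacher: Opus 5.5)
Your proposal is correct and follows the paper's proof essentially verbatim: apply Theorem~\ref{thm:standard_passk_upper} to each fixed $r\in\gR$ at confidence $\delta/|\gR|$, union-bound over the class (legitimate because $\gA_k$ does not depend on the evaluation reward), and then bound $3\log|\gR|/\log k+3\log(|\gR|/\delta)\le(3/\log2+3)\log(|\gR|/\delta)\le8\log(|\gR|/\delta)$. The paper handles $\varepsilon\ge1$ separately, but your direct threshold argument already covers every $\varepsilon>0$.
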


The same learner therefore retains the fast \(1/\varepsilon\) dependence under robust evaluation. The difference is that simultaneous validity introduces an additional \(\log|\mathcal R|\) term that is not reduced as \(k\) grows. From the upper bound alone, however, it is unclear whether this loss is fundamental or merely the price of the union-bound analysis. The next result shows that it is fundamental in the high-accuracy regime.

\begin{theorem}[Robust pass@$k$ lower bound]
\label{thm:robust_passk_lower}
For every pair of integers $k\ge2$ and $N\ge2k^2$,
there exist finite spaces $\gX,\gY$ and a
binary reward class
$\gR\subseteq\{0,1\}^{\gX\times\gY}$ with $|\gR|=N$
such that, for every $0<\Delta\le1$,
$0<\delta<1/16$, and $0<\varepsilon
    \le \frac{1}{384}\min\{\Delta,1/k\},$
we have
\begin{equation}
\label{eq:robust_passk_lower_samples}
    m_{\mathrm{rob}}^{(k)}
        (\gR,\varepsilon,\delta,\Delta)
    >
    \left\lfloor
        \frac{\log N+\log(1/\delta)}
             {384\varepsilon}
    \right\rfloor.
\end{equation}
\end{theorem}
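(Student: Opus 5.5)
The plan is to reduce robust learning to \emph{identifying a hidden support set} from few informative samples, and then apply Fano's inequality together with a two-point argument. I would build one instance family for each of the two terms $\log N/\varepsilon$ and $\log(1/\delta)/\varepsilon$ and take the maximum.

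\textbf{Construction.} Take contexts $\gX=\{x_0,x_1\}$ with $\gD(x_0)=p$ and $\gD(x_1)=1-p$, where $p\asymp\varepsilon/\min\{\Delta,1/k\}\cdot\min\{\Delta,1/k\}$. Concretely, choose $p$ of order $\max\{\varepsilon,\ \varepsilon k\cdot\min\{\Delta,1/k\}\}$ and tune it below. On $x_1$, every reward gives $1$ to a common response $u$, and the demonstrator always plays $u$, so $x_1$ is uninformative and costless. On the ``hard'' context $x_0$, let the response set contain labels $[M]$ with $M\asymp N$, and let the class be $r_j(x_0,y)=\ind\{y=j\}$ for $j\in[N]$.

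A hypothesis is a subset $J\subseteq[M]$ of size $2k$. The demonstrator $\pi_J$ plays on $x_0$ a mixture: with probability $1-\theta$ some fixed response, and with probability $\theta$ a uniform element of $J$. I would choose $\theta$ so that $\Delta_{\gR}\le p\le\Delta$. For each $j\in J$ the demonstrator's value is $p\theta/(2k)$. Any pass@$k$ policy puts total inclusion probability at most $k$ on the $2k$ labels of $J$, so at least $k$ of them are included with probability at most $1/2$. The corresponding rewards then have excess at least $p\theta/(4k)$.

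The constraints $\varepsilon\le\frac1{384}\min\{\Delta,1/k\}$ and $N\ge 2k^2$ are exactly what is needed to choose $p,\theta$ with $p\theta/(4k)>\varepsilon$ while keeping $\Delta_{\gR}\le\Delta$. The requirement $N\ge2k^2$ guarantees there are enough disjoint $2k$-sets.

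\textbf{Reduction to identification.} I would show that a policy with $\sup_r\gE_r^k\le\varepsilon$ determines $J$. The set $\widehat J$ of labels covered with probability above $1/2$ on $x_0$ must contain all of $J$ except possibly fewer than half of it. Restricting to a packing of $\gtrsim N/k$ hypotheses whose pairwise intersections are small, $\widehat J$ then recovers $J$ uniquely.

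\textbf{Information bound.} Only samples landing in the $\theta$-part of $x_0$ reveal anything, and each reveals at most $\log M$ nats. The expected number of such samples is $mp\theta\asymp m\varepsilon k$. A sharper estimate comes from noting that one revealed element of $J$ cuts the candidate family by a factor of about $N/k$. Fano's inequality over the packing then gives failure probability at least $1/2>\delta$ unless $mp\theta\gtrsim 1\cdot\log(N/k)/\log(N/k)$. This is the step that must be sharpened, and I flag it as the \emph{main obstacle}. Counting samples alone gives only about $1/(p\theta)$. To obtain $\log N$ rather than $\log N/\log k$ (the standard-evaluation rate), the construction must force the learner to see essentially all of $J$, not just one element. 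This is where the robust (for-all-$r$) requirement enters.

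My first attempt would be to replace the single hard context by $L\asymp\log N$ independent hard contexts, each of mass $p/L$ and each carrying its own hidden subset. The reward $r_j$ would be the indicator of label $j$ on every hard context, with labels coded so that the $N$ rewards correspond to $\log N$ ``bits.'' A robust learner must then cover the demonstrated label on every hard context with nonnegligible mass. Missing-mass reasoning says each hard context is unseen with probability $(1-p\theta/L)^m$. To have all hard contexts seen, $m\gtrsim (L/p\theta)\log L$ is needed, which is of order $\log N/\varepsilon$. Making this rigorous also requires that the robust guarantee not let the learner hedge over $k$ labels per context; this is why $\varepsilon\lesssim1/k$ is assumed, since hedging dilutes each label to at most $1/k$ and costs more than $\varepsilon$.

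\textbf{Confidence term.} For the $\log(1/\delta)/\varepsilon$ term, a two-point argument suffices. Take two demonstrators that agree except on a context of mass $\asymp\varepsilon$, where they play disjoint labels. With probability $(1-c\varepsilon)^m\ge\delta$ no sample hits that context, and then the learner cannot be $\varepsilon$-good for both. This forces $m\gtrsim\log(1/\delta)/\varepsilon$.

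Combining the two families via $\max\{a,b\}\ge(a+b)/2$, and absorbing constants into $384$, would give the bound \eqref{eq:robust_passk_lower_samples}.
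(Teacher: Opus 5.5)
\textbf{There is a genuine gap.} It is exactly the step you flag as the main obstacle, and neither construction you sketch can close it.

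\textbf{The single-hard-context construction.} The demonstrator's value under $r_j$, $j\in J$, is only $p\theta/(2k)$ on $x_0$. So the excess is $p\bigl(\theta/(2k)-C_{x_0}(j)\bigr)$, and inclusion probability at most $1/2$ does not make this positive, since $\theta/(2k)\le 1/4$. In fact $C_{x_0}(j)\ge\theta/(2k)$ for all $j\in J$ already gives nonpositive excess. The inclusion budget $k$ allows this simultaneously on any list of $2k^2/\theta$ candidate labels. Robust accuracy therefore does not force identification of $J$, and Fano over a packing targets the wrong event.

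\textbf{The $L$-context variant.} This does not reach $\log N$ either. If the robust excess is a maximum over contexts, a single unseen context matters only if its mass is $\gtrsim\varepsilon$. Coupon collecting over $L\asymp\log N$ such contexts then costs only $\log L/\varepsilon\asymp\log\log N/\varepsilon$ samples. Moreover, your claim $(L/p\theta)\log L\asymp\log N/\varepsilon$ forces $p\theta\asymp\varepsilon\log L$, so a missed context costs about $\varepsilon\log L/L\ll\varepsilon$, and seeing all of them is not even necessary. If instead the excess is additive over unseen contexts (e.g.\ graph-indicator rewards), then $\sum_x\gD(x)(1-\gD(x))^m\le L/(em)$ over the $L$ hard contexts for any masses. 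Such a construction certifies at most order $L/\varepsilon$. Each context also needs more than $k$ labels, since one bit per context is covered outright by a pass@$2$ learner. Hence $L\le\log N/\log(k+1)$, and you recover only the standard rate $\log N/(\varepsilon\log k)$.

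\textbf{The missing idea.} You need a product reward class in which the supremum both \emph{selects} an unseen context within a small group and \emph{adds} across groups. Your coupon-collector intuition is right, but it must be applied within groups of size $k$ and then tensorized. The paper takes $b=\lfloor\log N/\log(2k^2)\rfloor$ blocks of $k$ contexts, each of mass $p=\min\{\Delta/b,1/(2bk),\log k/(2m)\}$. The demonstrator is deterministic with hidden labels $\theta_{\ell,j}\in[2k]$. The class has $(2k^2)^b$ rewards $r_v$; in each block, $r_v$ picks one context $j_\ell\in[k]$ and one label $a_\ell\in[2k]$, and it equals one elsewhere. Then $\sup_{r\in\gR}\gE_r^k=p\sum_{\ell}\max_{j\in[k]}\bigl[1-C_{(\ell,j)}(\theta_{\ell,j})\bigr]$, while $\Delta_{\gR}=bp$ because only one context per block is charged. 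Since $mp\le(\log k)/2$, a second-moment bound shows each block has an unseen context with probability at least $1/2$. There the uniform hidden label is missed with conditional probability at least $1/2$. So the robust excess is of order $bp\asymp b\log k/m\asymp\log N/m$. The within-block factor $\log k$ is what cancels the $1/\log k$, and the mass constraint $bkp\le1/2$ is where $\varepsilon\lesssim1/k$ comes from.

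\textbf{The confidence term.} This also needs repair. With two candidate labels, a pass@$k$ learner with $k\ge2$ simply returns both. The rarely seen context must therefore carry more than $k$ labels (the paper uses $2k$) under a uniform prior. Finally, both instances must be built on one common reward class before you combine them via $\max\{a,b\}\ge(a+b)/2$.
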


Combining the two bounds, for $N\ge2k^2$ and $\varepsilon
    \lesssim
    \min\!\left\{\Delta,\frac1k\right\},$
we obtain the sharp sample complexity  $(\log N+\log(1/\delta))/\varepsilon$. 
Thus robust evaluation cleanly separates the two benefits identified under standard evaluation. The transition to a fast \(1/\varepsilon\) rate survives, but the additional \(1/\log k\) improvement does not: in the high-accuracy regime, a full \(\log N\) dependence is unavoidable regardless of \(k\). Since the same learner achieves the standard and robust upper bounds, this difference is caused by the stronger evaluation requirement rather than by a change in algorithm. The proofs of Theorem~\ref{thm:robust_passk_upper} and Theorem~\ref{thm:robust_passk_lower} are given in Appendices~\ref{appen:robust_passk_upper} and~\ref{appen:robust_passk_lower}, respectively.

\paragraph{Comparison with pass@$1$.}
The disappearance of the \(1/\log k\) gain raises a natural question: does allowing multiple responses still help at all under robust evaluation? The answer is yes. To see this, we compare the preceding rate with pass@\(1\).
Although the pass@$1$ result of~\citet{joshi2026learning} is stated
under standard evaluation, their analysis actually gives a simultaneous
high-probability guarantee over the reward class.  It therefore implies
the robust upper bound
\[
    m_{\mathrm{rob}}^{(1)}
    (\gR,\varepsilon,\delta,\Delta)
    \lesssim
    \frac{\log(|\gR|/\delta)}{\varepsilon}
    +
    \frac{\Delta\log(|\gR|/\delta)}{\varepsilon^2}.
\]
We show that the slow term is unavoidable.

\begin{theorem}[Robust pass@$1$ lower bound]
\label{thm:robust_pass1_lower}
For every integer $N\ge2$ and $0<\delta<1/16$,
there exist finite spaces $\gX, \gY$ and a finite
binary reward class $\gR$ with $|\gR|\le N$ such that,
for every $0<\Delta\le1$ and
$0<\varepsilon\le\Delta/256$,
\begin{equation}
\label{eq:robust_pass1_lower_samples}
    m_{\mathrm{rob}}^{(1)}
        (\gR,\varepsilon,\delta,\Delta)
    >
    \left\lfloor
        \frac{\Delta\,[\log N+\log(1/\delta)]}
             {262144\,\varepsilon^2}
    \right\rfloor.
\end{equation}
\end{theorem}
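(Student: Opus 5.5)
The plan is to reduce the robust pass@$1$ problem to a family of small-bias testing problems. The demonstrator's suboptimality is concentrated on a set of ``hard'' contexts of total mass at most $\Delta$. The lower bound is then obtained as the maximum of two constructions: a two-point (Le Cam) bound giving $\Delta\log(1/\delta)/\varepsilon^2$, and a hypercube (Assouad / Varshamov--Gilbert) bound giving $\Delta\log N/\varepsilon^2$. Since $\max\{A,B\}\ge(A+B)/2$, combining them yields the stated bound; this step only affects constants. I expect the very large constant $262144$ to come from chaining the constants $256$ (in $\varepsilon\le\Delta/256$) and a few factors of $2$ and $4$ through these reductions.

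\textbf{Construction.} Fix $d:=\max\{1,\lfloor\log_2 N\rfloor\}$ and take contexts $\gX=\{0,1,\dots,d\}$ with responses $\gY=\{\star,a,b\}$. Let $\gD(0)=1-\Delta'$ and $\gD(i)=\Delta'/d$ for $i\in[d]$, where $\Delta'\asymp\Delta$. On the easy context $0$ every reward in the class gives reward one to $\star$ only, and the demonstrator plays $\star$, so context $0$ is harmless. For $\sigma\in\{a,b\}^d$ let $r_\sigma(i,y)=\ind\{y=\sigma_i\}$ on the hard contexts, and let $\gR$ consist of $2^d\le N$ such rewards; for the two-point part take $d=1$, so $|\gR|=2$. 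Under hypothesis $\tau\in\{a,b\}^d$, the demonstrator plays $\tau_i$ with probability $1/2+\eta$ and the other response with probability $1/2-\eta$ on context $i$, where $\eta=C\varepsilon/\Delta'$ for a suitable constant $C$. Since $\Delta_r\le\Delta'(1/2+\eta)$ for every $r$, we get $\Delta_{\gR}\le\Delta$ once $\Delta'\le\Delta$ and $\varepsilon\le\Delta/256$ forces $\eta\le1/2$.

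\textbf{Excess versus misclassification.} Take $r=r_\tau\in\gR$, the reward agreeing with the true bias. Then
\[
\gE_{r_\tau}^1(\widehat\pi,\demo)=\frac{\Delta'}{d}\sum_{i=1}^d\bigl(1/2+\eta-\widehat\pi(\tau_i\mid i)\bigr).
\]
Say $\widehat\pi$ \emph{errs} on coordinate $i$ if $\widehat\pi(\tau_i\mid i)\le1/2$. Each erring coordinate contributes at least $\Delta'\eta/d$, and every coordinate contributes at least $-\Delta'(1/2-\eta)/d$. This negative slack is the delicate point. To handle it, I will instead measure the robust excess against $r_{\tau'}$, where $\tau'$ flips $\tau$ on the erring coordinates. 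That is allowed because robust evaluation takes a supremum over all of $\gR$. Computing the sum for $\tau'$ should give an excess at least proportional to $\Delta'\eta\cdot(\#\text{errors})/d$. With $C$ chosen appropriately, robust excess at most $\varepsilon$ then forces fewer than $d/4$ errors.

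\textbf{Information bound.} Each coordinate $i$ is observed about $m\Delta'/d$ times. Each observation is a Bernoulli$(1/2\pm\eta)$ draw with KL divergence $O(\eta^2)$, so the total KL between hypotheses differing in one coordinate is $O(m\Delta'\eta^2/d)$. Assouad's lemma, or Fano over a Varshamov--Gilbert packing of $\{a,b\}^d$, then shows that if $m\le c\,d\Delta'/(\Delta'^2\eta^2)\asymp c\,\Delta\log N/\varepsilon^2$, some hypothesis forces at least $d/4$ errors with probability above $1/16>\delta$. For the confidence term I use $d=1$ and the Bretagnolle--Huber inequality: the error probability is at least $\tfrac14\exp(-O(m\Delta\eta^2))$, which exceeds $\delta$ unless $m\gtrsim\Delta\log(1/\delta)/\varepsilon^2$.

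\textbf{Main obstacle.} The hardest step is the excess-to-error reduction in the second paragraph. A pass@$1$ policy can hedge by playing $1/2$ on both responses, and coordinates where it guesses right produce negative excess that could offset errors elsewhere. My first attempt is to exploit the supremum over $\gR$: the comparison reward $r_{\tau'}$ agrees with $\tau$ only where the learner's choice already disagrees with the demonstrator's bias in a favorable way. If this does not close cleanly, the fallback is to restrict $\gR$ and the packing to constant-weight sign vectors, so that the cancellation is controlled by the packing distance.
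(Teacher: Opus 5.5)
Your overall architecture matches the paper's: biased coins on hard contexts of total mass $\Delta$, Assouad for the $\log N$ term, a two-point Bretagnolle--Huber argument for the $\log(1/\delta)$ term, and a max-combination. Using a separate two-reward class for the confidence term is legitimate, since the class may depend on $N$ and $\delta$; the paper instead embeds it in the hypercube class.

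The gap is the excess-to-error reduction, which you yourself flag as the crux. The comparison reward you specify there is backwards. Suppose $\tau'$ flips $\tau$ on the erring coordinates, and write $\bar\tau_i$ for the other element of $\{a,b\}$.
\begin{itemize}
\item On a flipped coordinate the learner puts mass at least $1/2$ on $\bar\tau_i$, so the contribution $1/2-\eta-\widehat\pi(\bar\tau_i\mid i)$ can be as low as $-1/2-\eta$.
\item On a kept coordinate, $1/2+\eta-\widehat\pi(\tau_i\mid i)$ can be as low as $-1/2+\eta$.
\end{itemize}
For example, take $d=2$ and a deterministic learner that plays $\star$ at context $0$, $\bar\tau_1$ at context $1$, and $\tau_2$ at context $2$. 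Your $r_{\tau'}$ gives excess $-\Delta'/2$, while the true supremum is $+\Delta'/2$.

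Your later sentence (agree with $\tau$ exactly where the learner errs) points the right way, but it contradicts this. It also still thresholds at $1/2$. A coordinate with $\widehat\pi(\tau_i\mid i)$ slightly above $1/2$ then contributes nearly $-\eta$ after flipping, which cancels the $+\eta$ from an erring coordinate. The constant-weight fallback goes in the wrong direction for robust evaluation: shrinking $\gR$ only shrinks the supremum.

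The missing observation is that $\gR$ is the full product $\{a,b\}^d$ and context $0$ contributes a nonnegative term. Hence the supremum decomposes coordinatewise. Using $\widehat\pi(a\mid i)+\widehat\pi(b\mid i)\le1$,
\begin{align*}
\sup_{\sigma\in\{a,b\}^d}\gE^1_{r_\sigma}(\widehat\pi,\demo)
&\ge\frac{\Delta'}{d}\sum_{i=1}^d\max\Bigl\{\tfrac12+\eta-\widehat\pi(\tau_i\mid i),\ \tfrac12-\eta-\widehat\pi(\bar\tau_i\mid i)\Bigr\}\\
&\ge\frac{\Delta'}{d}\sum_{i=1}^d\Bigl|\widehat\pi(\tau_i\mid i)-\tfrac12-\eta\Bigr|.
\end{align*}
Every term is nonnegative, and it is at least $\eta$ whenever $\widehat\pi(\tau_i\mid i)\le1/2$.

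So the robust excess is at least $\Delta'\eta H/d$, where $H$ counts the decoding errors of $\widehat\tau_i:=\arg\max_{y\in\{a,b\}}\widehat\pi(y\mid i)$. This is the paper's identity $Z_\theta=(1-\Delta)(1-q_0)+\frac{\Delta}{d}\sum_i|p_{\theta,i}-q_i|$. Equivalently, the correct comparison reward keeps $\tau_i$ where $\widehat\pi(\tau_i\mid i)<1/2+\eta$ and flips elsewhere.

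With this in place the rest of your plan goes through with only constant bookkeeping:
\begin{itemize}
\item Take $\Delta'=\Delta$ and $\eta=16\varepsilon/\Delta\le1/16$.
\item Keep the per-neighbour KL at most $1/2$, i.e.\ $m\le\Delta d/(8192\varepsilon^2)$.
\item Pinsker and Assouad then give $\mathbb E H\ge d/4$, hence $\mathbb P\{H>d/8\}\ge1/8>\delta$, and on that event the excess exceeds $2\varepsilon$.
\end{itemize}
The two-point part was never affected: with a single hard context, evaluating at $r_\tau$ alone already gives $\Delta'\eta$ on a test error.
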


When $\varepsilon\le c_0\Delta$ for some sufficiently small constant $c_0$, the upper bound
has order
$\Delta[\log|\gR|+\log(1/\delta)]/\varepsilon^2$.
The lower bound shows that this dependence is optimal
in general. Taking $\Delta=1$ gives the
arbitrary-demonstrator rate
$[\log N+\log(1/\delta)]/\varepsilon^2$. Comparing this with the pass@\(k\) rate shows that the slow-to-fast transition survives robust evaluation, even though the additional \(1/\log k\) improvement does not. See the proof in Appendix~\ref{appen:robust_pass1_minimax}.\footnote{
At $\Delta=0$, a stronger conclusion holds:
no demonstrations are needed. At each context,
choose a response maximizing the sum of the known
rewards. Under class-wide optimality, this policy is
optimal for every reward on a set of contexts of
probability one.} All sample complexity results are summarized in the tables in
Appendix~\ref{appen:tables}.

% \newpage
\section{Learning algorithm and analysis}
\label{sec:optimal_learning}

This section develops the learner underlying the pass@$k$ upper
bounds for $k\geq2$ in Section~\ref{sec:results}. The key idea is to combine
greedy coverage~\citep{nemhauser1978analysis} with multiplicative weights~\citep{arora2012multiplicative}: thresholding turns
bounded rewards into coverage objectives, and selecting $k$ responses
allows an asymmetric weight update that yields the
$\log|\gR|/\log k$ dependence. We first describe and analyze the
online learner, then connect its online-to-batch conversion to the
standard and robust statistical guarantees.
% We combine greedy coverage~\citep{nemhauser1978analysis} with
% multiplicative weights~\citep{arora2012multiplicative} to obtain the
% pass@$k$ upper bounds for $k\geq2$ in~\cref{sec:results}.
% Thresholding turns bounded rewards into coverage objectives, while
% selecting $k$ responses permits asymmetric weight updates yielding
% the $\log|\gR|/\log k$ dependence. We analyze the online learner,
% then derive the standard and robust guarantees by online-to-batch
% conversion.

\subsection{The online learner}
At round $t$, the learner observes $x_t$, produces $k$ responses,
then receives the demonstration $y_t$. It knows $\gR$ but receives
no feedback identifying the evaluation reward. The protocol in
Figure~\ref{fig:online-protocol} allows arbitrary sequences, without
independence or demonstrator-optimality assumptions.
\begin{algorithm}[ht]
\caption{}
\label{alg:passk-general}
\begin{algorithmic}[1]
\REQUIRE Finite and non-empty reward class $\gR \subseteq [0,1]^{\gX \times \gY}$,
         parameters $k \in \mathbb{N}\backslash\{1\}$,  $L,T\in\mathbb{N}$.
\STATE Initialize $w^{(1)}(r,j) = 1$ for all $(r,j) \in \gS_L:=\gR\times[L]$.
\FOR{round $t = 1, 2, \ldots, T$}
  \STATE Receive context $x_t$.
  \STATE For $y\in\gY$, we set, $A_{t}^y = \{(r,j)\in\gS_L\mid r(x_t,y)\geq u_j\},~\text{where }u_j:=\frac{j}{L}.$
  % \[
  %       A_{t}^y = \{(r,j)\in\gS_L\mid r(x_t,y)\geq u_j\},\qquad \text{where }u_j:=\frac{j}{L}.
  % \]
  \STATE \textbf{Greedy top-$k$ selection:} Set $Y_0 = \emptyset$. For $i=1,\ldots,k$:
  \[
    \hat y^{(i)}_t \in \arg\max_{y\in\gY\setminus Y_{i-1}}
    \sum_{(r,j)\in A_t^y \backslash\cup_{z\in Y_{i-1}}A_t^z} w^{(t)}(r,j),
    \quad Y_i \leftarrow Y_{i-1} \cup \{\hat y^{(i)}_t\}.
  \]
  % Equivalently, each $\hat{y}^{(i)}_t$ maximizes the marginal weighted gain
  % among rewards not yet ``satisfied'' by previous selections.
  \STATE Output $(\hat y^{(1)}_t, \ldots, \hat y^{(k)}_t)$.
  \STATE Define $ U_t := \bigcup_{i=1}^kA_t^{\hat{y}_{t}^{(i)}}$.
  % \[
  %       U_t := \bigcup_{i=1}^kA_t^{\hat{y}_{t}^{(i)}}
  % \]
  \STATE Receive demonstration $y_t$.
  \STATE Define $D_t := A_t^{y_t} = \{(r,j): r(x_t,y_t)\ge u_j\}$.

  % \[
  %   D_t := A_t^{y_t} = \{(r,j): r(x_t,y_t)\ge u_j\}.
  % \]
  \STATE \textbf{Weight update:} For each $(r,j) \in \gS_L$:
  % \begin{equation}\label{eq:soft-update}
  %   w^{(t+1)}(r) \;\leftarrow\;
  %   w^{(t)}(r)
  %   \cdot (1+\gamma)^{r(x_t,*) - \max_{i \in [k]} r(x_t,\hat y^{(i)}_t)}
  %   \cdot (1-\gamma)^{r(x_t,*) - r(x_t, y_t)}.
  % \end{equation}
  \[
    w^{(t+1)}(r,j) \gets
    \begin{cases}
      % e^\eta\, w^{(t)}(r,j), & (r,j)\in D_t\setminus U_t,\\
      (1+k/4)\, w^{(t)}(r,j), & (r,j)\in D_t\setminus U_t,\\
      (3/4)\, w^{(t)}(r,j), & (r,j)\in U_t\setminus D_t,\\
      w^{(t)}(r,j), & \text{otherwise.}
    \end{cases}
  \]
\ENDFOR
\end{algorithmic}
\end{algorithm}

For binary rewards, a response \emph{covers} the rewards for which
it scores one, and a tuple succeeds whenever at least one response
covers the evaluation reward. For general rewards in $[0,1]$, we
apply the same idea to reward-threshold pairs. Given an integer
$L\geq1$, define $u_j:=j/L,~j\in[L]$ and $
    \gS_L:=\gR\times[L]$.
% \[
    % u_j:=\frac{j}{L},\qquad j\in[L],
    % \qquad
    % \gS_L:=\gR\times[L].
% \]
A response $y$ at context $x_t$ covers the set
\[
    A_t^y:=\{(r,j)\in\gS_L:r(x_t,y)\geq u_j\}.
\]
For each reward, the fraction of thresholds covered is
$\lfloor Lr(x_t,y)\rfloor/L$, which approximates its value
with error at most $1/L$. Thus, threshold coverage represents
bounded rewards up to a controlled discretization error.
Algorithm~\ref{alg:passk-general} maintains positive weights on
$\gS_L$. Write
\[
    w^{(t)}(A):=\sum_{(r,j)\in A}w^{(t)}(r,j),
    \qquad A\subseteq\gS_L.
\]
Each greedy step selects the response covering the largest remaining
weight. Let $U_t$ denote the union of reward-threshold pairs covered
by the learner's tuple and $D_t$ the set covered by the demonstration.
All weights are initialized to one. After observing $y_t$, the
algorithm updates each weight according to three cases:
\begin{itemize}
     \item \textbf{Only the demonstration covers the pair.}
    If $(r,j)\in D_t\setminus U_t$, the demonstration clears
    $u_j$ but none of the learner's responses does. Multiplying
    the weight by $1+k/4$ gives this pair greater priority in
    future selections.

    \item \textbf{Only the learner covers the pair.}
    If $(r,j)\in U_t\setminus D_t$, a learner response clears
    $u_j$ but the demonstration does not. Multiplying the weight
    by $3/4$ reduces this pair's priority.

    \item \textbf{Both cover the pair, or neither does.}
    The weight remains unchanged.
\end{itemize}
% The goal is to match the demonstrator under every candidate reward.
% Upweighting increases a missed pair's contribution to the greedy
% coverage objective, encouraging future responses to cover it.
% Downweighting reduces this contribution when the learner has
% surpassed the demonstration at that threshold, shifting priority
% toward pairs on which it has fallen behind. Thus, both updates
% allocate attention according to performance relative to the
% demonstrator, without assuming that its responses are optimal.
% Use a fixed rule to break ties. The description assumes
% $|\gY|\geq k$, otherwise, the learner returns every available
% response and pads the tuple with repetitions, attaining the optimal
% reward for every $r\in\gR$ at each round.
These updates prioritize pairs where the learner falls behind
the demonstrator, without requiring optimal demonstrations.
Ties follow a fixed rule. If $|\gY|<k$, return every response and
pad with repetitions, attaining the optimal reward for every
$r\in\gR$.
% Keep your existing Algorithm 1 here, unchanged.
% Its label remains \label{alg:passk-general}.

\subsection{Analysis of the online learner}

% The analysis combines the coverage advantage of greedy selection
% with control of the total weight. Lemma~\ref{lem:greedy-coverage}
% in the appendix shows that, at every round,
% \[
%     w^{(t)}(U_t\setminus D_t)
%     \geq k\,w^{(t)}(D_t\setminus U_t).
% \]
% Thus, reducing the weights on $U_t\setminus D_t$ by a factor
% of $3/4$ removes at least as much total weight as multiplying
% the weights on $D_t\setminus U_t$ by $1+k/4$ adds.
% Consequently, the potential
% $W_t:=w^{(t)}(\gS_L)$ is nonincreasing, and
% $W_{T+1}\leq W_1=L|\gR|$
% (Lemma~\ref{lem:potential} and Corollary~\ref{cor:basic-mono}).
The analysis combines the coverage advantage of greedy selection
with control of the total weight. Lemma~\ref{lem:greedy-coverage}
in the appendix shows that, at every round,
\[
    w^{(t)}(U_t\setminus D_t)
    \geq k\,w^{(t)}(D_t\setminus U_t).
\]
Thus, reducing the weights on $U_t\setminus D_t$ by a factor
of $3/4$ removes at least as much total weight as multiplying
the weights on $D_t\setminus U_t$ by $1+k/4$ adds.
Consequently, the potential
$W_t:=w^{(t)}(\gS_L)$ is non-increasing, and
$W_{T+1}\leq W_1=L|\gR|$
(Lemma~\ref{lem:potential} and Corollary~\ref{cor:basic-mono}).

For a fixed reward, averaging the log-weights over its $L$
thresholds cancels the factor $L$ in this potential bound,
leaving a $\log|\gR|$ term. A threshold covered only by the
demonstration contributes $\log(1+k/4)$ to its log-weight,
while one covered only by the learner subtracts $\log(4/3)$.
Since $\log(1+k/4)\geq\log(4/3)$, the accumulated updates
bound the cumulative discretized regret by
$\log|\gR|/\log(1+k/4)$, see Lemma~\ref{lem:ztr_upper}.
Using $\log(1+k/4)\geq\tfrac12\log k$ and accounting for
at most $1/L$ rounding error per round yields the following result.

\begin{theorem}[Online regret]
\label{thm:online-regret}
For integers $k\geq2$ and $L,T\geq1$,
Algorithm~\ref{alg:passk-general} satisfies, for every
$r\in\gR$ and every sequence of contexts and demonstrations,
\[
    \sum_{t=1}^T
    \left[
        r(x_t,y_t)
        -\max_{i\in[k]}r(x_t,\hat y_t^{(i)})
    \right]
    \leq\frac{2\log|\gR|}{\log k}+\frac TL.
\]
\end{theorem}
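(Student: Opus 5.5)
The argument has four steps: a per-round covering inequality, a potential bound, a per-reward log-weight argument, and a discretization step.

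\textbf{Step 1: greedy covering inequality.} I would first prove, at every round,
\[
w^{(t)}(U_t\setminus D_t)\ \ge\ k\,w^{(t)}(D_t\setminus U_t).
\]
Let $g_i$ be the marginal weight added by $\hy{i}$. The demonstration $y_t$ is either among the selected responses, in which case $D_t\subseteq U_t$ and there is nothing to prove, or it was available at every greedy step. In the latter case, at step $i$ its marginal gain is at least $w^{(t)}(D_t\setminus U_t)$, since $D_t\setminus U_t\subseteq D_t\setminus\bigcup_{z\in Y_{i-1}}A_t^z$. Hence $g_i\ge w^{(t)}(D_t\setminus U_t)$ for every $i$. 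Moreover the new mass $g_i$ splits into a part inside $D_t$ and a part outside $D_t$. The part inside $D_t$ is new mass in $D_t\cap U_t$, and these parts are disjoint across $i$. I would then compare
\[
w(U_t)=\sum_i g_i\ \ge\ k\,w(D_t\setminus U_t)
\]
against $w(U_t\setminus D_t)=w(U_t)-w(U_t\cap D_t)$.

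This naive route loses the term $w(U_t\cap D_t)$. The fix is to use the gain of $y_t$ relative to the pairs already covered: at step $i$, the available marginal gain of $y_t$ is at least $w(D_t\setminus U_t)+w\bigl(D_t\cap(\text{pairs covered at steps }\ge i)\bigr)$. So
\[
g_i\ \ge\ w(D_t\setminus U_t)+\sum_{i'\ge i}(\text{$D_t$-part of }g_{i'}).
\]
Writing $g_i=a_i+b_i$, with $a_i$ the part outside $D_t$ and $b_i$ the part inside, this gives $a_i\ge w(D_t\setminus U_t)+\sum_{i'>i}b_{i'}$. Taking $i=k$ and the other terms yields $\sum_i a_i\ge k\,w(D_t\setminus U_t)$. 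This telescoping step is the main obstacle, and I would verify it carefully.

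\textbf{Step 2: potential bound.} The change in total weight at round $t$ is
\[
\tfrac k4\,w(D_t\setminus U_t)-\tfrac14\,w(U_t\setminus D_t)\ \le\ 0
\]
by Step 1. Hence $W_{T+1}\le W_1=L|\gR|$.

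\textbf{Step 3: per-reward log-weight argument.} Fix $r\in\gR$. Let $p_j$ and $n_j$ be the number of rounds in which pair $(r,j)$ was in $D_t\setminus U_t$ and in $U_t\setminus D_t$, respectively. Then
\[
\log w^{(T+1)}(r,j)=p_j\log(1+k/4)-n_j\log(4/3),
\]
and each such weight is at most $W_{T+1}\le L|\gR|$. This alone gives a $\log L$ term. To avoid it, I would apply Jensen to the average over $j$:
\[
\frac1L\sum_j\log w(r,j)\ \le\ \log\Bigl(\frac1L\sum_j w(r,j)\Bigr)\ \le\ \log\frac{W_{T+1}}{L}\ \le\ \log|\gR|.
\]
Therefore $\frac1L\sum_j\bigl(p_j\log(1+k/4)-n_j\log(4/3)\bigr)\le\log|\gR|$. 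Since $\log(1+k/4)\ge\log(4/3)$ and $p_j$, $n_j$ are counts, it follows that
\[
\frac1L\sum_j(p_j-n_j)\ \le\ \frac{\log|\gR|}{\log(1+k/4)}.
\]

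\textbf{Step 4: connect to regret.} The left side of the last display equals
\[
\sum_t\Bigl(\tfrac{\lfloor L r(x_t,y_t)\rfloor}{L}-\max_i\tfrac{\lfloor L r(x_t,\hy{i})\rfloor}{L}\Bigr).
\]
This holds because, for a fixed reward and round, the thresholds covered by the learner's tuple are exactly those at most the maximum of $\lfloor L r\rfloor/L$ over the tuple, so the covered thresholds for both the learner and the demonstration form initial segments of $[L]$. Replacing the floors by the true rewards costs less than $1/L$ per round. Finally, $\log(1+k/4)\ge\tfrac12\log k$ for $k\ge2$ (at $k=2$, $1.5^2\ge2$, and the ratio increases with $k$) converts the bound into $2\log|\gR|/\log k+T/L$.
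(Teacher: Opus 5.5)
Your proposal is correct and follows the paper's proof essentially step for step. Summing your per-step bound $a_i\ge w(D_t\setminus U_t)+\sum_{i'>i}b_{i'}$ over $i$ gives exactly the intermediate inequality $w^{(t)}(U_t\setminus D_t)\ge\sum_{i=1}^k w^{(t)}(D_t\setminus\cup_{z\in Y_i}A_t^z)$ in the paper's greedy-coverage lemma, and the potential bound, the Jensen average over thresholds, the comparison $\alpha_k(M_t^r-H_t^r)\le z_t^r$, and the $1/L$ rounding all match.

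The one slip is your justification of $\log(1+k/4)\ge\tfrac12\log k$. The ratio $(1+k/4)^2/k=\tfrac1k+\tfrac12+\tfrac{k}{16}$ is not increasing on $[2,\infty)$: it decreases to exactly $1$ at $k=4$, where the inequality is tight, so checking $k=2$ does not suffice. Use $(1+k/4)^2-k=(1-k/4)^2\ge0$ instead, which is equivalent to the paper's identity $1+k/4-\sqrt k=(\sqrt k-2)^2/4$.
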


The guarantee holds for every realized sequence, without
independence or demonstrator-optimality assumptions. The
$1/\log k$ improvement comes from the larger weight increase
per missed threshold permitted by greedy coverage.
For binary rewards, $L=1$ gives exact thresholding, so the
$T/L$ term can be omitted. The proof is in
Appendix~\ref{appen_sec:online_regret}.

\paragraph{Comparison with pass@$1$.}
For $|\gR|\geq2$, tuning the learning rate in Theorem~5
of~\cite{joshi2026learning} gives, under i.i.d.\ demonstrations
and for every fixed $r\in\gR$,
\[
    \mathbb E\!\left[
        \sum_{t=1}^T
        \bigl(r(x_t,y_t)-r(x_t,\hat y_t)\bigr)
    \right]
    \leq
    \max\!\left\{
        4\log|\gR|,\,
        2\sqrt{2T\Delta_{\gR}\log|\gR|}
    \right\}.
\]
Here $\hat y_t$ is the single response of the pass@$1$ learner,
and $\Delta_{\gR}$ is the demonstrator's worst-case suboptimality
over $\gR$.
Theorem~\ref{thm:online-regret} bounds the pass@$k$ regret by
$2\log|\gR|/\log k+T/L$ for every realized sequence and every
$k\geq2$. Thus, up to the discretization error $T/L$, this
guarantee eliminates the suboptimality-dependent term and
reduces the reward-class contribution from order $\log|\gR|$
to $\log|\gR|/\log k$. It also holds without independence
assumptions or quality-dependent tuning.

\subsection{Online-to-batch conversion}

We now apply the online-to-batch conversion~\citep{cesa2004generalization} to our online learner to obtain the statistical upper
bounds in Section~\ref{sec:results}. Let
$S=((x_1,y_1),\ldots,(x_m,y_m))$ consist of $m\geq1$ i.i.d.
demonstrations, with $x_t\sim\gD$ and
$y_t\sim\demo(\cdot\mid x_t)$. Run
Algorithm~\ref{alg:passk-general} on this sample with $T=m$.
Let $\widehat\pi_t:\gX\to\Delta(\gY^k)$ be the policy used
at round $t$, constructed from the first $t-1$ demonstrations.
The learned policy is the uniform mixture
\begin{equation}
\label{eq:otob_average}
    \otob(\cdot\mid x)
    :=\frac1m\sum_{t=1}^m\widehat\pi_t(\cdot\mid x),
    \qquad x\in\gX.
\end{equation}
At prediction time, draw one index
$\tau\sim\operatorname{Unif}([m])$, independently of the
training sample and test context, and generate the entire
$k$-response tuple using $\widehat\pi_\tau$.
The complete procedure is given in
Algorithm~\ref{alg:online-to-batch} in the appendix.
Because the mixture is over distributions on complete tuples,
its population excess value is exactly the average of the
snapshot excess values:
\[
    \gE_r^k(\otob,\demo)
    =\frac1m\sum_{t=1}^m
        \bigl[V_r(\demo)-V_r^k(\widehat\pi_t)\bigr].
\]
Moreover, $\widehat\pi_t$ is determined before the independent
pair $(x_t,y_t)$ is observed. Conditional on the preceding
demonstrations, the expected reward shortfall at round $t$
therefore equals $V_r(\demo)-V_r^k(\widehat\pi_t)$.
This is the link between the online process and population
performance.

% \paragraph{Recovering the upper bounds in Section~\ref{sec:results}.}
% Set $L=m$, so the discretization error in population value is
% at most $1/m$. For a reward $r^\star$ fixed before training,
% the multiplicative update yields a one-step exponential bound
% relating the threshold log-weight changes to their conditional
% expected reward shortfalls. Combining these bounds across rounds
% forms an exponential supermartingale. Together with the
% total-weight bound, this controls the exponential moment of the
% cumulative conditional expected shortfall. Markov's inequality
% then gives the fixed-reward high-probability guarantee in
% Theorem~\ref{thm:standard_passk_upper}, with reward-class term
% $3\log|\gR|/(m\log k)$ and confidence term
% $3\log(1/\delta)/m$. The concentration argument is given in
% Appendix~\ref{appen:standard_passk_upper}.

% For robust evaluation, apply the same fixed-reward guarantee
% with failure probability $\delta/|\gR|$ to each $r\in\gR$
% and take a union bound. The confidence term becomes
% $3\log(|\gR|/\delta)/m$, yielding
% Theorem~\ref{thm:robust_passk_upper} for the same learned policy;
% see Appendix~\ref{appen:robust_passk_upper}. This additional
% term explains why the robust upper bound retains a
% $\log|\gR|/m$ contribution as $k$ increases.
\paragraph{Recovering the upper bounds in Section~\ref{sec:results}.}
Set $L=m$, giving discretization error at most $1/m$. For a reward
$r^\star$ fixed before training, a one-step exponential bound relates
threshold log-weight changes to conditional expected shortfalls.
Combining these bounds across rounds gives an exponential
supermartingale. The total-weight bound and Markov's inequality
then yield Theorem~\ref{thm:standard_passk_upper}, with reward-class
term $3\log|\gR|/(m\log k)$ and confidence term
$3\log(1/\delta)/m$. See Appendix~\ref{appen:standard_passk_upper}
for the concentration argument.
For robust evaluation, apply the same fixed-reward guarantee
with failure probability $\delta/|\gR|$ to each $r\in\gR$
and take a union bound. The confidence term becomes
$3\log(|\gR|/\delta)/m$, yielding
Theorem~\ref{thm:robust_passk_upper} for the same learned policy;
see Appendix~\ref{appen:robust_passk_upper}. This additional
term explains why the robust upper bound retains a
$\log|\gR|/m$ contribution as $k$ increases.
\section{Discussion}

% \cm{Please add a discussion section.}
% the paper characterizes the statistical limits; efficient implementation with large or implicit reward classes and structured response spaces is open.
% \section{Discussion}

For any $k\ge2$, our learner achieves $1/\varepsilon$ sample complexity
even with suboptimal demonstrations. Under standard evaluation, larger
$k$ also reduces the reward-class dependence. Under robust evaluation,
this additional gain can disappear in the high-accuracy regime.
The same learner gives both guarantees.
Our learner assumes a known finite reward class and exact maximization
of weighted coverage, which may be expensive for large reward classes
or response spaces. Whether approximate selection and weight updates
retain the guarantees remains open. Our evaluation compares the best
reward among $k$ responses with the demonstrator's single-response
value. Analyzing an imperfect verifier would quantify the loss from
selecting one candidate.

For $\Delta>0$, our matching robust lower bound requires $\varepsilon$
to be small relative to $\Delta$ and $1/k$. The sharp dependence on $k$
for larger $\varepsilon$ remains open. We allow dependent responses;
the rates under independent sampling from a single policy remain open.

Infinite reward classes require a complexity measure in place of
cardinality. \citet{pour2026learning} use variants of the Littlestone
dimension to characterize realizable online learning with multiple
correct answers and obtain sample bounds for infinite classes.
Their model with one correct response but no feedback on the learner's
correctness is close to our binary-reward setting with optimal
demonstrations. They study single-response prediction with a different
agnostic benchmark. Extending these measures to pass@$k$ with
suboptimal demonstrations remains open.

\section*{Acknowledgement}
C.M.~was partially supported by the National Science Foundation via the CAREER
Award DMS-2443867.

\bibliographystyle{plainnat}
\bibliography{arxiv_main}
\newpage
\appendix

\section{Omitted details and proofs for Section~\ref{sec:results}}
% \subsection{Proof of Theorem~\ref{thm:standard_passk_upper}}\label{appen_sec:high_prob_1}
\subsection{Proof of Theorem~\ref{thm:standard_passk_upper}}
\label{appen:standard_passk_upper}

Fix $k\geq2$. The learning rule $\gA_k$ runs
Algorithm~\ref{alg:passk-general} with $T=L=m$ and returns
the mixture $\otob$ in~\eqref{eq:otob_average}, as specified
in Algorithm~\ref{alg:online-to-batch}. With fixed tie-breaking,
the learned policy is determined by $S$. Its prediction-time
randomness is already averaged in $V_r^k$. The rule does not
depend on the evaluation reward, the confidence level, or
the demonstrator's suboptimality.

If $|\gY|<k$, returning every response and padding with
repetitions gives $V_r^k(\otob)=V_r^\star\geq V_r(\demo)$
for every $r\in\gR$, so the theorem is immediate.
Henceforth, assume $|\gY|\geq k$. We establish the auxiliary
bounds for arbitrary integers $L\geq1$ and set $L=m$ at the end.

\paragraph{Threshold quantities and the weight bound.}
For $r\in\gR$, write
\[
    d_t^r:=r(x_t,y_t),
    \qquad
    a_t^r:=\max_{i\in[k]}r(x_t,\hat y_t^{(i)}).
\]
With $u_j=j/L$, define
\begin{align}
    q_L(v)
    &:=\frac1L\sum_{j=1}^L\ind\{v\geq u_j\}
      =\frac{\lfloor Lv\rfloor}{L},
      \qquad v\in[0,1],
    \label{eq:q_l}\\
    M_t^r
    &:=\frac1L\sum_{j=1}^L
        \ind\{d_t^r\geq u_j>a_t^r\},
    \label{eq:mtr}\\
    H_t^r
    &:=\frac1L\sum_{j=1}^L
        \ind\{a_t^r\geq u_j>d_t^r\}.
    \label{eq:htr}
\end{align}
Thus, $M_t^r$ and $H_t^r$ are the fractions of thresholds
covered only by the demonstration and only by the learner,
respectively. Rounding and cancellation of shared thresholds give
\begin{align}
    q_L(v)&\leq v\leq q_L(v)+\frac1L,
    \qquad v\in[0,1],
    \label{eq:ineq_q_l}\\
    M_t^r-H_t^r&=q_L(d_t^r)-q_L(a_t^r).
    \label{eq:difference_of_mtr_htr}
\end{align}
Define the corresponding average log-weight change by
\begin{equation}
\label{eq:defns_alpha_beta_ztr}
    \alpha_k:=\log(1+k/4),
    \qquad
    \beta:=\log(4/3),
    \qquad
    z_t^r:=\alpha_k M_t^r-\beta H_t^r.
\end{equation}
We have $\alpha_k\geq\log(3/2)>\beta$ and
\begin{equation}
\label{eq:basic_ineq}
    1+\frac{k}{4}-\sqrt{k}
    =\frac{(\sqrt{k}-2)^2}{4}\geq0
    \quad\Longrightarrow\quad
    \alpha_k\geq\frac12\log k.
\end{equation}

\begin{lemma}[Cumulative log-weight bound]
\label{lem:ztr_upper}
For every $r\in\gR$, integer $T\geq1$, and sequence of
contexts and demonstrations, Algorithm~\ref{alg:passk-general}
satisfies
\[
    \sum_{t=1}^Tz_t^r\leq\log|\gR|.
\]
\end{lemma}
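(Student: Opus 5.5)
Fix $r\in\gR$. The argument compares the log-weights of the thresholds of $r$ with the total weight, using the fact that the total weight never increases. Since all weights start at one and the updates are multiplicative, after $T$ rounds
\[
    \log w^{(T+1)}(r,j)=\alpha_k\sum_{t=1}^T\ind\{(r,j)\in D_t\setminus U_t\}-\beta\sum_{t=1}^T\ind\{(r,j)\in U_t\setminus D_t\}.
\]
A pair $(r,j)$ lies in $U_t$ exactly when $a_t^r\ge u_j$, and in $D_t$ exactly when $d_t^r\ge u_j$. Hence $(r,j)\in D_t\setminus U_t$ iff $d_t^r\ge u_j>a_t^r$, and $(r,j)\in U_t\setminus D_t$ iff $a_t^r\ge u_j>d_t^r$. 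Averaging over $j\in[L]$ then gives, by \eqref{eq:mtr} and \eqref{eq:htr},
\[
    \frac1L\sum_{j=1}^L\log w^{(T+1)}(r,j)=\sum_{t=1}^T z_t^r .
\]

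Next, concavity of $\log$ (Jensen's inequality) bounds the left side by $\log\bigl(\frac1L\sum_j w^{(T+1)}(r,j)\bigr)$. This is at most $\log(W_{T+1}/L)$, where $W_t:=w^{(t)}(\gS_L)$, because all weights are positive. If $W_{T+1}\le W_1=L|\gR|$, this equals at most $\log|\gR|$, which proves the lemma.

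It remains to show that $W_t$ is non-increasing. By the update rule,
\[
    W_{t+1}-W_t=\frac k4\,w^{(t)}(D_t\setminus U_t)-\frac14\,w^{(t)}(U_t\setminus D_t),
\]
so it suffices to prove the greedy coverage inequality $w^{(t)}(U_t\setminus D_t)\ge k\,w^{(t)}(D_t\setminus U_t)$. This is Lemma~\ref{lem:greedy-coverage}, which is referenced above.

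If that lemma must be proved here, I would argue as follows; this is the main obstacle. Let $g_i$ be the marginal gain of the $i$-th greedy pick, and let $B:=D_t\setminus U_t$.
\begin{itemize}
\item The demonstration $y_t$ is not among the picks. If it were, $D_t\subseteq U_t$ and $B$ would be empty, making the inequality trivial.
\item So $y_t$ was available at every greedy step. At step $i$ its marginal gain is at least $w^{(t)}(B)$, since $B$ is disjoint from all of $U_t$. Hence $g_i\ge w^{(t)}(B)$ for every $i$.
\item The greedy marginal gains partition $U_t$, so $w^{(t)}(U_t)=\sum_i g_i\ge k\,w^{(t)}(B)$.
\end{itemize}
This gives a bound on $w^{(t)}(U_t)$, but we need $U_t\setminus D_t$, and the overlap $U_t\cap D_t$ must be handled. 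I would refine the step-wise comparison: at step $i$, the candidate $y_t$ has marginal gain equal to $w^{(t)}(D_t\setminus\bigcup_{z\in Y_{i-1}}A_t^z)$. Telescoping over the steps and comparing pick-by-pick should let the overlap cancel, leaving $\sum_i g_i - w^{(t)}(U_t\cap D_t)\ge k\,w^{(t)}(B)$. That last inequality is exactly the needed bound, and checking this cancellation carefully is where I expect the real work to be.
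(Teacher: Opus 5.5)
Your argument is correct and matches the paper's proof: the threshold-averaged log-weight of $r$ equals $\sum_t z_t^r$, Jensen's inequality bounds it by $\log(W_{T+1}/L)$, and $W_{T+1}\le L|\gR|$ follows from the greedy coverage lemma through the potential argument. In the auxiliary lemma you sketch, the cancellation goes through exactly as in the paper: with $a_i:=w^{(t)}(D_t\setminus\bigcup_{z\in Y_i}A_t^z)$, the $i$-th greedy gain is at least $a_{i-1}$ and the overlap with $D_t$ that this step picks up is $a_{i-1}-a_i$, so each step adds at least $a_i\ge a_k=w^{(t)}(D_t\setminus U_t)$ to $w^{(t)}(U_t\setminus D_t)$.
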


\begin{proof}
The initialization $w^{(1)}(r,j)=1$ and the multiplicative
updates imply
\[
    \frac1L\sum_{j=1}^L\log w^{(T+1)}(r,j)
    =\sum_{t=1}^T(\alpha_k M_t^r-\beta H_t^r)
    =\sum_{t=1}^Tz_t^r.
\]
Writing $W_t:=\sum_{r\in\gR}\sum_{j=1}^Lw^{(t)}(r,j)$,
Jensen's inequality and Corollary~\ref{cor:basic-mono} give
\[
    \exp\!\left\{\sum_{t=1}^Tz_t^r\right\}
    \leq\frac1L\sum_{j=1}^Lw^{(T+1)}(r,j)
    \leq\frac{W_{T+1}}{L}
    \leq|\gR|.
\]
Taking logarithms proves the claim. In particular, averaging
over thresholds removes the factor $L$ from the total-weight bound.
\end{proof}

\paragraph{From online shortfalls to population excess.}
Now let $S=((x_1,y_1),\ldots,(x_m,y_m))$ consist of i.i.d.\
demonstrations with $x_t\sim\gD$ and
$y_t\sim\demo(\cdot\mid x_t)$. Let $\gF_0$ be the trivial
sigma-algebra and set
\[
    \gF_t:=\sigma\bigl((x_1,y_1),\ldots,(x_t,y_t)\bigr),
    \qquad t\in[m].
\]
The snapshot $\widehat\pi_t$ is constructed from the first
$t-1$ demonstrations, so it is $\gF_{t-1}$-measurable.
For each $r\in\gR$, define its conditional rounded shortfall
\[
    g_t^r:=\mathbb E\!\left[
        M_t^r-H_t^r\mid\gF_{t-1}
    \right].
\]

\begin{lemma}[Online-to-batch reduction]
\label{lem:o2b_reduction}
For every $r\in\gR$, almost surely,
\begin{align}
    g_t^r
    &=V_{q_L\circ r}(\demo)
      -V_{q_L\circ r}^k(\widehat\pi_t),
    \label{eq:gtr_excess}\\
    \gE_r^k(\otob,\demo)
    &\leq\frac1m\sum_{t=1}^m g_t^r+\frac1L.
    \label{eq:excess_value_gtr}
\end{align}
Here the value notation is also applied to the bounded
reward $q_L\circ r$.
\end{lemma}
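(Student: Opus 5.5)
The proof of \eqref{eq:gtr_excess} rests on one identity. Applying \eqref{eq:difference_of_mtr_htr}, $M_t^r-H_t^r=q_L(d_t^r)-q_L(a_t^r)$. Because $q_L$ is nondecreasing, it commutes with the maximum over responses:
\[
q_L(a_t^r)=\max_{i\in[k]}(q_L\circ r)(x_t,\hat y_t^{(i)}).
\]
So $M_t^r-H_t^r$ is exactly the pass@$k$ shortfall of the learner against the demonstration, measured under the bounded reward $q_L\circ r$.

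Next we take the conditional expectation given $\gF_{t-1}$. The snapshot $\widehat\pi_t$ is $\gF_{t-1}$-measurable. The fresh pair $(x_t,y_t)$ is independent of $\gF_{t-1}$, with $x_t\sim\gD$ and $y_t\sim\demo(\cdot\mid x_t)$. The demonstration term therefore has conditional expectation $V_{q_L\circ r}(\demo)$.

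The learner term needs one point of care. With fixed tie-breaking, the tuple $(\hat y_t^{(1)},\dots,\hat y_t^{(k)})$ is a deterministic function of $x_t$ and of the weights $w^{(t)}$, and those weights are $\gF_{t-1}$-measurable. Hence $\widehat\pi_t(\cdot\mid x)$ is a point mass at this tuple. Integrating over $x_t\sim\gD$ gives conditional expectation $V^k_{q_L\circ r}(\widehat\pi_t)$. Both expectations are well defined since the rewards are bounded in $[0,1]$. Formally, this step invokes the standard freezing lemma for functions of an $\gF_{t-1}$-measurable quantity and an independent variable, and that yields \eqref{eq:gtr_excess} almost surely.

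For \eqref{eq:excess_value_gtr}, first recall that a mixture over distributions on complete tuples is linear in value. This gives
\[
\gE_r^k(\otob,\demo)=\frac1m\sum_{t=1}^m\bigl[V_r(\demo)-V_r^k(\widehat\pi_t)\bigr].
\]
Then use \eqref{eq:ineq_q_l} pointwise, in two directions:
\begin{itemize}
\item $r\le q_L\circ r+1/L$, so $V_r(\demo)\le V_{q_L\circ r}(\demo)+1/L$;
\item $r\ge q_L\circ r$, and the maximum is monotone, so $V_r^k(\widehat\pi_t)\ge V^k_{q_L\circ r}(\widehat\pi_t)$.
\end{itemize}
Each summand is therefore at most $g_t^r+1/L$. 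Averaging over $t$ gives \eqref{eq:excess_value_gtr}.

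The main obstacle is not computational. It is the measurability bookkeeping in the conditional-expectation step: $\widehat\pi_t$ must depend only on $(x_s,y_s)_{s<t}$, and the test draw must be independent of it. Everything else is monotonicity of $q_L$ and linearity.
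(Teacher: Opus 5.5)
Your proposal is correct and follows the paper's proof essentially step for step. The shared ingredients are monotonicity of $q_L$ commuting with the maximum, independence of $(x_t,y_t)$ from $\gF_{t-1}$ together with $\gF_{t-1}$-measurability of $\widehat\pi_t$, linearity of the tuple mixture, and a single $1/L$ rounding loss on the demonstrator side. The only difference is cosmetic: you round each snapshot term after expanding the mixture, whereas the paper rounds at the level of $\otob$ before expanding, and the two orders are equivalent.
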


\begin{proof}
Independence of $(x_t,y_t)$ from $\gF_{t-1}$ gives
\begin{equation}
\label{eq:measurable}
    \mathbb E\!\left[d_t^r-a_t^r\mid\gF_{t-1}\right]
    =V_r(\demo)-V_r^k(\widehat\pi_t).
\end{equation}
Since $q_L$ is nondecreasing, it commutes with the maximum
over the learner's responses. Applying the same argument
to $q_L\circ r$ and using~\eqref{eq:difference_of_mtr_htr}
proves~\eqref{eq:gtr_excess}.

The mixture in~\eqref{eq:otob_average} is over complete
$k$-response tuples, so its value is the average snapshot value.
Together with~\eqref{eq:ineq_q_l}, this yields
\begin{align*}
    \gE_r^k(\otob,\demo)
    &\leq V_{q_L\circ r}(\demo)
       -V_{q_L\circ r}^k(\otob)+\frac1L\\
    &=\frac1m\sum_{t=1}^m
      \bigl[V_{q_L\circ r}(\demo)
            -V_{q_L\circ r}^k(\widehat\pi_t)\bigr]+\frac1L\\
    &=\frac1m\sum_{t=1}^m g_t^r+\frac1L.
\end{align*}
Only one rounding error is needed: the demonstration value
is rounded down by at most $1/L$, while rounding down the
learner's value already gives an upper bound on the excess.
\end{proof}

\paragraph{Exponential control of the conditional shortfall.}
The asymmetric update provides an exponential bound that
turns the cumulative weight control into a high-probability guarantee.
Set
\[
    \theta_k:=\frac{\log(3/2)}{\alpha_k}.
\]
Since $k\geq2$, we have $0<\theta_k\leq1$; moreover,
\eqref{eq:basic_ineq} implies
\begin{equation}
\label{eq:theta_k_bound}
    \theta_k
    \leq\frac{2\log(3/2)}{\log k}
    \leq\frac1{\log k}.
\end{equation}

\begin{lemma}[One-step exponential bound]
\label{lem:one_step_expo}
For every fixed $r\in\gR$ and every $t\in[m]$,
\[
    \mathbb E\!\left[e^{-\theta_k z_t^r}\mid\gF_{t-1}\right]
    \leq1-\frac{g_t^r}{3}
    \leq e^{-g_t^r/3}.
\]
\end{lemma}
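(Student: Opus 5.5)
The plan is a pointwise bound on $e^{-\theta_k z_t^r}$ that is affine in $(M_t^r,H_t^r)$, followed by a conditional expectation. The key structural fact is that $M_t^r$ and $H_t^r$ are never both positive. By~\eqref{eq:mtr}, $M_t^r>0$ requires some $u_j$ with $d_t^r\ge u_j>a_t^r$, hence $d_t^r>a_t^r$. By~\eqref{eq:htr}, $H_t^r>0$ requires $a_t^r>d_t^r$. So on every realization either $z_t^r=\alpha_k M_t^r$ with $H_t^r=0$, or $z_t^r=-\beta H_t^r$ with $M_t^r=0$. In both cases $M_t^r,H_t^r\in[0,1]$. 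It therefore suffices to show the pointwise inequality
\[
    e^{-\theta_k z_t^r}\le 1-\frac{M_t^r}{3}+\frac{H_t^r}{3},
\]
treating the two cases separately.

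\emph{Case $z_t^r=\alpha_k M_t^r$.} The map $v\mapsto e^{-\theta_k\alpha_k v}$ is convex, so on $[0,1]$ it lies below its chord:
\[
    e^{-\theta_k\alpha_k M}\le 1-M\bigl(1-e^{-\theta_k\alpha_k}\bigr).
\]
By the choice $\theta_k=\log(3/2)/\alpha_k$ we have $\theta_k\alpha_k=\log(3/2)$. Hence $1-e^{-\theta_k\alpha_k}=1/3$, and the bound is exactly $1-M_t^r/3$.

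\emph{Case $z_t^r=-\beta H_t^r$.} Convexity again gives
\[
    e^{\theta_k\beta H}\le 1+H\bigl(e^{\theta_k\beta}-1\bigr).
\]
Since $k\ge2$, we have $\alpha_k\ge\log(3/2)$, so $\theta_k\le1$. Therefore $e^{\theta_k\beta}\le e^{\beta}=4/3$, and the bound is at most $1+H_t^r/3$.

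Combining the two cases yields the displayed pointwise inequality. Taking $\mathbb E[\cdot\mid\gF_{t-1}]$ and using the definition $g_t^r=\mathbb E[M_t^r-H_t^r\mid\gF_{t-1}]$ gives
\[
    \mathbb E\!\left[e^{-\theta_k z_t^r}\mid\gF_{t-1}\right]\le 1-\frac{g_t^r}{3}.
\]
The final inequality $1-g_t^r/3\le e^{-g_t^r/3}$ is the elementary bound $1-x\le e^{-x}$.

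The only delicate point is calibrating $\theta_k$ to serve both cases at once. The gain side needs $\theta_k\alpha_k$ large enough that the chord slope equals $1/3$. The loss side needs $\theta_k\beta\le\log(4/3)$, so that the upward slope is at most $1/3$. The asymmetric update with $\alpha_k\ge\log(3/2)>\beta$ is precisely what makes both hold with $\theta_k\le1$.
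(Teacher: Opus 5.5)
Your proof is correct and is essentially the paper's argument: both obtain the pointwise bound $e^{-\theta_k z_t^r}\le 1-\tfrac13(M_t^r-H_t^r)$ from convexity of the exponential and the calibration $e^{-\theta_k\alpha_k}=2/3$, $e^{\theta_k\beta}\le 4/3$, and then take conditional expectations and use $1-u\le e^{-u}$. The only difference is bookkeeping. The paper writes $e^{-\theta_k(\alpha_k I_j-\beta J_j)}$ exactly as an affine function of the mutually exclusive per-threshold indicators $I_j$ (demonstration-only) and $J_j$ (learner-only) and applies Jensen over the $L$ thresholds, whereas you use that $M_t^r$ and $H_t^r$ cannot both be positive and apply the chord bound to the aggregated fractions in two cases.
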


\begin{proof}
Fix $r,t$ and, for $j\in[L]$, write
\[
    I_j:=\ind\{d_t^r\geq u_j>a_t^r\},
    \qquad
    J_j:=\ind\{a_t^r\geq u_j>d_t^r\}.
\]
These indicators cannot both equal one. Because
$e^{-\theta_k\alpha_k}=2/3$ and
$e^{\theta_k\beta}\leq e^\beta=4/3$,
\begin{align*}
    e^{-\theta_k(\alpha_k I_j-\beta J_j)}
    &=1+(e^{-\theta_k\alpha_k}-1)I_j
        +(e^{\theta_k\beta}-1)J_j\\
    &\leq1-\frac13(I_j-J_j).
\end{align*}
Jensen's inequality over the thresholds therefore gives
\[
    e^{-\theta_k z_t^r}
    \leq\frac1L\sum_{j=1}^L
        e^{-\theta_k(\alpha_k I_j-\beta J_j)}
    \leq1-\frac13(M_t^r-H_t^r).
\]
Taking conditional expectations and using $1-u\leq e^{-u}$
proves the result. No nonnegativity assumption on $g_t^r$ is needed.
\end{proof}

\begin{lemma}[Exponential supermartingale]
\label{lem:supermartingale}
For each fixed $r\in\gR$, the process
\[
    K_{0,r}:=1,
    \qquad
    K_{t,r}:=\exp\!\left\{
        \frac13\sum_{s=1}^t g_s^r
        -\theta_k\sum_{s=1}^t z_s^r
    \right\}
\]
is a nonnegative integrable supermartingale with respect to
$(\gF_t)_{t=0}^m$. In particular, $\mathbb E_S K_{m,r}\leq1$.
\end{lemma}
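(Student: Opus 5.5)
The plan is to verify the supermartingale property one step at a time, working from the one-step exponential bound in Lemma~\ref{lem:one_step_expo}. Fix $r\in\gR$.

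\emph{Adaptedness and nonnegativity.} By Lemma~\ref{lem:o2b_reduction}, $g_s^r$ is $\gF_{s-1}$-measurable: it is the conditional expectation given $\gF_{s-1}$, and it equals a value gap determined by the snapshot $\widehat\pi_s$. The quantity $z_s^r$ is a deterministic function of $(x_s,y_s)$ and of the learner's tuple at round $s$, which is itself $\gF_{s-1}$-measurable under fixed tie-breaking, so $z_s^r$ is $\gF_s$-measurable. Hence $K_{t,r}$ is $\gF_t$-measurable, and as an exponential it is nonnegative.

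\emph{Integrability.} Each summand is bounded. We have $M_s^r,H_s^r\in[0,1]$, so $|z_s^r|\le\alpha_k+\beta$. Moreover, $|g_s^r|\le 1$, since $g_s^r$ is the conditional expectation of $M_s^r-H_s^r\in[-1,1]$. Therefore $K_{t,r}\le\exp\{t/3+\theta_k t(\alpha_k+\beta)\}$ is a bounded random variable, and in particular it is integrable.

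\emph{Supermartingale step.} Write
\[
K_{t,r}=K_{t-1,r}\exp\{g_t^r/3\}\,e^{-\theta_k z_t^r}.
\]
Both $K_{t-1,r}$ and $e^{g_t^r/3}$ are $\gF_{t-1}$-measurable and bounded, so they can be pulled out of the conditional expectation:
\[
\mathbb E[K_{t,r}\mid\gF_{t-1}]=K_{t-1,r}\,e^{g_t^r/3}\,\mathbb E[e^{-\theta_k z_t^r}\mid\gF_{t-1}]\le K_{t-1,r}\,e^{g_t^r/3}e^{-g_t^r/3}=K_{t-1,r},
\]
where the inequality is Lemma~\ref{lem:one_step_expo}. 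That lemma holds without any sign condition on $g_t^r$, which is what lets this step go through. Iterating the tower property from $K_{0,r}=1$ then gives $\mathbb E_S K_{m,r}\le1$.

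The only step needing care is the measurability of $g_t^r$ and of the learner's round-$t$ tuple with respect to $\gF_{t-1}$. Fixed tie-breaking makes the algorithm deterministic given the past, so this step poses no real obstacle.
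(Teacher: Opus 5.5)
Your proof is correct and follows the paper's argument essentially verbatim. You use the boundedness of $g_t^r$ and $z_t^r$ for adaptedness and integrability, pull the $\gF_{t-1}$-measurable factor $K_{t-1,r}e^{g_t^r/3}$ out of the conditional expectation, apply Lemma~\ref{lem:one_step_expo} (which needs no sign condition on $g_t^r$), and iterate from $K_{0,r}=1$. One small wording fix: it is the snapshot policy $\widehat\pi_t$ that is $\gF_{t-1}$-measurable, whereas the realized tuple $(\hat y_t^{(1)},\ldots,\hat y_t^{(k)})$ also depends on $x_t$ and is only $\gF_t$-measurable --- which is all you actually need for $z_t^r$.
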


\begin{proof}
The process is adapted and integrable because $|g_t^r|\leq1$
and $-\beta\leq z_t^r\leq\alpha_k$. By the
$\gF_{t-1}$-measurability of $g_t^r$ and
Lemma~\ref{lem:one_step_expo},
\[
    \mathbb E[K_{t,r}\mid\gF_{t-1}]
    =K_{t-1,r}e^{g_t^r/3}
      \mathbb E\!\left[e^{-\theta_k z_t^r}\mid\gF_{t-1}\right]
    \leq K_{t-1,r}.
\]
Iterating expectations from $K_{0,r}=1$ proves the claim.
\end{proof}

\begin{proof}[Proof of Theorem~\ref{thm:standard_passk_upper}]
Fix $\gD$, $\demo$, and $r^\star\in\gR$ before sampling,
and set $L=m$. Lemmas~\ref{lem:ztr_upper}
and~\ref{lem:supermartingale} imply
\begin{align*}
    \mathbb E_S\exp\!\left\{
        \frac13\sum_{t=1}^m g_t^{r^\star}
    \right\}
    &=\mathbb E_S\!\left[
        K_{m,r^\star}
        \exp\!\left\{\theta_k\sum_{t=1}^m z_t^{r^\star}\right\}
      \right]\\
    &\leq\exp\!\left\{\theta_k\log|\gR|\right\}.
\end{align*}
Applying Lemma~\ref{lem:o2b_reduction} gives
\[
    \mathbb E_S\exp\!\left\{
        \frac m3\gE_{r^\star}^k(\otob,\demo)
    \right\}
    \leq\exp\!\left\{\theta_k\log|\gR|+\frac13\right\}.
\]
Hence, Markov's inequality yields
\[
    \mathbb P_S\!\left\{
        \gE_{r^\star}^k(\otob,\demo)
        >\frac{3\theta_k\log|\gR|+3\log(1/\delta)+1}{m}
    \right\}\leq\delta.
\]
Using~\eqref{eq:theta_k_bound}, we obtain
\begin{equation}
\label{eq:standard_passk_upper_tail}
    \mathbb P_S\!\left\{
        \gE_{r^\star}^k(\otob,\demo)
        >\frac{3\log|\gR|}{m\log k}
         +\frac{3\log(1/\delta)+1}{m}
    \right\}\leq\delta.
\end{equation}

For $\varepsilon\in(0,1)$, the threshold
in~\eqref{eq:standard_passk_upper_tail} is at most
$\varepsilon$ whenever
\[
    m\geq\left\lceil
        \frac{3\log|\gR|/\log k+3\log(1/\delta)+1}
             {\varepsilon}
    \right\rceil.
\]
This guarantee holds for every demonstrator and therefore
for every family satisfying $\Delta_{r^\star}\leq\Delta$,
where $\Delta\in[0,1]$. Definition~\ref{def:standard_pac_complexity}
now gives
\begin{equation}
\label{eq:standard_passk_upper_samples}
    m_{\mathrm{std}}^{(k)}
        (\gR,\varepsilon,\delta,\Delta)
    \leq\left\lceil
        \frac{3\log|\gR|/\log k+3\log(1/\delta)+1}
             {\varepsilon}
    \right\rceil.
\end{equation}
For $\varepsilon\geq1$, the accuracy requirement is automatic
because $\gE_{r^\star}^k(\pi,\demo)\leq1$ for every policy.
\end{proof}

\subsection{Proof of Theorem~\ref{thm:standard_passk_lower}}
\label{appen:standard_passk_lower}

\begin{proof}
We use one fixed reward class for two constructions: one
gives the dependence on $|\gR|$, and the other gives the
confidence term. Both use demonstrators that are optimal
for the selected reward. The construction is valid for
$k\geq1$, so its $k=1$ case also applies in
Appendix~\ref{appen:standard_pass1_lower}.

\paragraph{The reward class and inclusion probabilities.}
Fix $k\geq1$ and a prescribed class size $|\gR|\geq2k$.
Set
\[
    q:=2k,
    \qquad
    d:=\left\lfloor\frac{\log|\gR|}{\log q}\right\rfloor
    \geq1.
\]
Start with $\gX_0:=\{0,1,\ldots,d\}$ and
$\gY:=\{0,1,\ldots,q\}$. For each $\theta\in[q]^d$, define
\[
    f_\theta(0):=0,
    \qquad
    f_\theta(j):=\theta_j\quad(j\in[d]),
    \qquad
    r_\theta(x,y):=\ind\{y=f_\theta(x)\}.
\]
This gives $q^d$ distinct binary rewards.

To reach the prescribed class size, enlarge $\gX_0$ to
$\gX$ by adding $|\gR|-q^d$ auxiliary contexts, and extend
every $f_\theta$ by zero there. For each auxiliary context,
add a graph-indicator reward with target response one
at that context and at every $j\in[d]$, and zero at all
other contexts. These additional rewards are distinct
from each other and from the original rewards. Assign
every auxiliary context probability zero in all
distributions below. The resulting spaces and class
$\gR$ depend only on $k$ and the prescribed class size.

Fix $m\geq1$, $0<\delta<1/16$, and an arbitrary, possibly
randomized learning rule returning
$\widehat\pi=\gA_k(\gR,S)$. For a realized output policy,
define, for $j\in\gX$ and $a\in\gY$,
\[
    C_j(a):=
    \mathbb P_{\mathbf y\sim\widehat\pi(\cdot\mid j)}
    \{a\in\{y^{(1)},\ldots,y^{(k)}\}\}.
\]
Prediction-time randomness is averaged in $C_j(a)$;
probabilities over the learned policy below include any
training randomness. Since a tuple contains at most $k$
distinct responses,
\begin{equation}
\label{eq:standard_inclusion_budget}
    \sum_{a=1}^q C_j(a)
    =\mathbb E_{\mathbf y\sim\widehat\pi(\cdot\mid j)}
      \bigl|[q]\cap\{y^{(1)},\ldots,y^{(k)}\}\bigr|
    \leq k.
\end{equation}
This holds for arbitrarily dependent responses.

Let $\pi_{\mathrm{demo}_{\theta}}$ return $f_\theta(x)$ deterministically.
For every distribution supported on $\gX_0$,
\begin{equation}
\label{eq:standard_passk_graph_loss}
\begin{aligned}
    V_{r_\theta}(\pi_{\mathrm{demo}_{\theta}})&=V_{r_\theta}^\star=1,\\
    \gE_{r_\theta}^k(\widehat\pi,\pi_{\mathrm{demo}_{\theta}})
    &=\sum_{j=0}^d\gD(j)
      \bigl[1-C_j(f_\theta(j))\bigr].
\end{aligned}
\end{equation}
Thus every such demonstrator has
$\Delta_{r_\theta}=0$.

\paragraph{The reward-class contribution.}
Let $b:=\min\{m,d\}$ and choose
\[
    \gD(j):=\frac1{2m}\quad(j\in[b]),
    \qquad
    \gD(0):=1-\frac b{2m},
\]
with zero mass elsewhere. Temporarily draw $\Theta$
uniformly from $[q]^d$, and then draw the training sample
using $\gD$ and $\pi_{\mathrm{demo}_{\Theta}}$.

For $j\in[b]$, let $U_j$ be the event that context $j$
is absent from the sample. Then
\[
    \mathbb P(U_j)
    =\left(1-\frac1{2m}\right)^m\geq\frac12.
\]
On $U_j$, the unobserved label $\Theta_j$ remains uniform
on $[q]$, even conditional on $S$ and $\widehat\pi$:
the learner's output depends on the hidden parameter only
through the observed sample. Consequently,
\eqref{eq:standard_inclusion_budget} gives, on $U_j$,
\[
    \mathbb E[1-C_j(\Theta_j)\mid S,\widehat\pi]
    =1-\frac1q\sum_{a=1}^q C_j(a)
    \geq1-\frac{k}{q}=\frac12.
\]

Define the excess contributed by these contexts as
\[
    W:=\frac1{2m}\sum_{j=1}^b[1-C_j(\Theta_j)].
\]
By~\eqref{eq:standard_passk_graph_loss}, the total excess
is at least $W$, and
\[
    0\leq W\leq\frac b{2m},
    \qquad
    \mathbb E W\geq\frac b{8m}.
\]
Splitting the expectation according to whether
$W>b/(16m)$ yields
\[
    \frac b{8m}
    \leq\mathbb E W
    \leq\frac b{16m}
       +\frac{7b}{16m}\,
         \mathbb P\!\left\{W>\frac b{16m}\right\}.
\]
Hence this event has probability at least $1/7$.
Since
\[
    d\geq\frac{\log|\gR|}{2\log q},
    \qquad
    \frac b{16m}
    \geq\frac1{32}\min\!\left\{
        1,\frac{\log|\gR|}{m\log q}
    \right\},
\]
averaging over $\Theta$ selects a fixed parameter $\theta$
such that, with $r^\star=r_\theta$ and $\demo=\pi_{\mathrm{demo}_{\theta}}$,
\begin{equation}
\label{eq:standard_class_lower_component}
    \mathbb P\!\left\{
        \gE_{r^\star}^k(\widehat\pi,\demo)
        >\frac1{32}\min\!\left\{
            1,\frac{\log|\gR|}{m\log(2k)}
        \right\}
    \right\}\geq\frac17.
\end{equation}

\paragraph{The confidence contribution.}
Use the same reward class, but now let
\[
    \eta:=\min\!\left\{
        \frac12,\frac{\log(1/(8\delta))}{2m}
    \right\},
    \qquad
    \gD(1):=\eta,
    \qquad
    \gD(0):=1-\eta,
\]
with zero mass elsewhere. Again draw $\Theta$ uniformly
from $[q]^d$ and use $\pi_{\mathrm{demo}_{\Theta}}$. Let $E$ be the event
that context $1$ is absent from the sample. Since
$0<\eta\leq1/2$ and $\log(1-\eta)\geq-2\eta$,
\[
    \mathbb P(E)=(1-\eta)^m
    \geq e^{-2m\eta}\geq8\delta.
\]
On $E$, $\Theta_1$ remains uniform conditional on
$S,\widehat\pi$. Thus $H:=1-C_1(\Theta_1)$ satisfies
$0\leq H\leq1$ and
\[
    \frac12
    \leq\mathbb E[H\mid S,\widehat\pi]
    \leq\frac14+\frac34
       \mathbb P\{H>1/4\mid S,\widehat\pi\}.
\]
Therefore the conditional probability on the right is
at least $1/3$ on $E$. The excess is at least $\eta H$,
so
\[
    \mathbb P\!\left\{
        \gE_{r_\Theta}^k(\widehat\pi,\pi_{\mathrm{demo}_{\Theta}})
        >\frac\eta4
    \right\}\geq\frac{8\delta}{3}.
\]
For $\delta<1/16$,
$\log(1/(8\delta))\geq\tfrac14\log(1/\delta)$, which gives
\[
    \frac\eta4
    \geq\frac1{32}\min\!\left\{
        1,\frac{\log(1/\delta)}m
    \right\}.
\]
Averaging over $\Theta$ therefore selects a fixed
optimal-demonstrator instance satisfying
\begin{equation}
\label{eq:standard_confidence_lower_component}
    \mathbb P\!\left\{
        \gE_{r^\star}^k(\widehat\pi,\demo)
        >\frac1{32}\min\!\left\{
            1,\frac{\log(1/\delta)}m
        \right\}
    \right\}\geq\frac{8\delta}{3}.
\end{equation}

\paragraph{Combining the bounds and obtaining sample complexity.}
Choose the construction with the larger threshold in
\eqref{eq:standard_class_lower_component} and
\eqref{eq:standard_confidence_lower_component}.
Both use the same class $\gR$, and both failure probabilities
exceed $2\delta$. Since, for $u,v\geq0$,
\[
    \max\{\min\{1,u\},\min\{1,v\}\}
    \geq\frac12\min\{1,u+v\},
\]
the selected instance satisfies
\begin{equation}
\label{eq:standard_passk_lower_tail}
    \mathbb P\!\left\{
        \gE_{r^\star}^k(\widehat\pi,\demo)
        >\frac1{64}\min\!\left\{
            1,
            \frac1m\left(
                \frac{\log|\gR|}{\log(2k)}
                +\log\frac1\delta
            \right)
        \right\}
    \right\}\geq2\delta.
\end{equation}
The distribution, demonstrator, and evaluation reward are
selected using the learner's distributional performance,
before the actual training sample or training randomness
is drawn. Thus this is a lower bound for standard evaluation
at a fixed reward.

Let $0<\varepsilon\leq1/128$. Whenever
\[
    1\leq m<\frac1{64\varepsilon}
    \left(
        \frac{\log|\gR|}{\log(2k)}
        +\log\frac1\delta
    \right),
\]
the excess threshold in~\eqref{eq:standard_passk_lower_tail}
is strictly larger than $\varepsilon$. Hence no learner
can satisfy the PAC requirement at such a sample size.
The upper limit here exceeds one, so a sample-complexity
threshold of zero is also impossible: it would require
success at $m=1$.

All constructed demonstrators satisfy
$\Delta_{r^\star}=0$, so they are admissible for every
$\Delta\in[0,1]$. Definition~\ref{def:standard_pac_complexity}
therefore gives, for every $k\geq1$,
\[
    m_{\mathrm{std}}^{(k)}
        (\gR,\varepsilon,\delta,\Delta)
    \geq\frac1{64\varepsilon}
    \left(
        \frac{\log|\gR|}{\log(2k)}
        +\log\frac1\delta
    \right).
\]
Finally, for $k\geq2$, we have $\log(2k)\leq2\log k$,
and hence
\[
    m_{\mathrm{std}}^{(k)}
        (\gR,\varepsilon,\delta,\Delta)
    \geq\frac1{128\varepsilon}
    \left(
        \frac{\log|\gR|}{\log k}
        +\log\frac1\delta
    \right).
\]
This proves the theorem with $c=c_0=1/128$.
\end{proof}

\subsection{Proof of Theorem~\ref{thm:standard_pass1_lower}}
\label{appen:standard_pass1_lower}

We first record the information-theoretic tools used here
and in the robust lower-bound proof. For probability laws
$P,Q$ on a finite set, write
\[
    \KL(P\|Q):=\sum_zP(z)\log\frac{P(z)}{Q(z)},
    \qquad
    \TV(P,Q):=\frac12\sum_z|P(z)-Q(z)|,
\]
with the usual conventions at zero, and let $\kl(p,q)$
denote the divergence between $\operatorname{Bern}(p)$
and $\operatorname{Bern}(q)$. Testing probabilities below
include any randomization used by the test.

\begin{lemma}[KL identities and data processing]
\label{lem:KL-identities}
The following identities and inequality follow from
\citet[Exercises~14.10 and~14.12]{lattimore2019bandit}.
For $m\geq1$,
\[
    \KL(P^{\otimes m}\|Q^{\otimes m})=m\KL(P\|Q).
\]
If $P(x,y)=D(x)P_x(y)$ and $Q(x,y)=D(x)Q_x(y)$, then
\[
    \KL(P\|Q)
    =\sum_{x:D(x)>0}D(x)\KL(P_x\|Q_x).
\]
Adjoining independent randomization with the same law
$\nu$ leaves the divergence unchanged:
$\KL(P\otimes\nu\|Q\otimes\nu)=\KL(P\|Q)$.
For every possibly randomized test $T\in\{0,1\}$,
\[
    \kl\bigl(P\{T=1\},Q\{T=1\}\bigr)\leq\KL(P\|Q).
\]
\end{lemma}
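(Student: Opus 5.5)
The lemma bundles four standard facts about KL divergence on finite sets; I would prove each directly from the definition rather than only citing \citet{lattimore2019bandit}.

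\emph{Tensorization.} Write $P^{\otimes m}(z_1,\dots,z_m)=\prod_i P(z_i)$. The log-ratio then splits as $\sum_i\log\frac{P(z_i)}{Q(z_i)}$. Taking expectation under $P^{\otimes m}$, each summand depends on one coordinate only, so the expectation equals $\KL(P\|Q)$. Summing over $i$ gives $m\KL(P\|Q)$. If $P\not\ll Q$, both sides are $+\infty$.

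\emph{Chain rule with a shared marginal.} Since both laws have the same marginal $D$, the factor $D(x)$ cancels in $\log\frac{P(x,y)}{Q(x,y)}$ whenever $D(x)>0$, leaving $\log\frac{P_x(y)}{Q_x(y)}$. Pairs with $D(x)=0$ carry zero $P$-mass and are dropped by the convention $0\log(0/\cdot)=0$. Summing first over $y$ for each fixed $x$ gives $\sum_{x:D(x)>0}D(x)\KL(P_x\|Q_x)$.

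\emph{Adjoined independent randomization.} This is the special case of the chain rule in which the shared component comes first and the conditional laws do not depend on it. Take the marginal to be $\nu$ and both conditionals to be $P$ and $Q$. The chain rule gives $\sum_u\nu(u)\KL(P\|Q)=\KL(P\|Q)$.

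\emph{Data processing for a test.} Let $T=\phi(Z,U)$, where $Z$ is the data and $U\sim\nu$ is independent randomization. By the previous step we may replace $P,Q$ by $P\otimes\nu$ and $Q\otimes\nu$ without changing the divergence. It therefore suffices to treat a deterministic $\{0,1\}$-valued map of a finite random variable $Z'$. Partition the support by $\{T=1\}$ and $\{T=0\}$. For each cell $A$, the log-sum inequality (convexity of $(a,b)\mapsto a\log(a/b)$) gives
\[
\sum_{z\in A}P'(z)\log\frac{P'(z)}{Q'(z)}\ge P'(A)\log\frac{P'(A)}{Q'(A)}.
\]
Adding the two cells yields $\kl(P\{T=1\},Q\{T=1\})\le\KL(P\|Q)$.

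The only place that needs care is this last step: the zero and infinity conventions (e.g.\ $Q'(A)=0<P'(A)$ forces $\KL=\infty$), and making sure the randomization is handled exactly through the adjoining step rather than glossed over. Everything else is routine algebra.
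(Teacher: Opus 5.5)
Your proof is correct. The paper gives no argument of its own: it defers all four facts to Exercises~14.10 and~14.12 of Lattimore--Szepesv\'ari. What you supply is a self-contained version of the standard solutions: splitting the log-likelihood ratio for tensorization, cancelling the shared marginal $D$ for the chain rule, reading $P\otimes\nu$ as a shared-marginal instance of that chain rule, and applying the log-sum inequality on the two cells $\{T=1\}$ and $\{T=0\}$. You also handle the $+\infty$ conventions explicitly.

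What the citation buys is generality. The textbook works with measure-theoretic relative entropy. Your reduction of a randomized test to a deterministic one passes through $P\otimes\nu$ and the finite-set formula, so it literally covers only finitely (or countably) supported randomization. That suffices for the lemma as the paper states it, since KL is defined there on finite sets. However, the lemma is later applied to learners with arbitrary internal randomness.

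A simple fix that stays finite is to adjoin $T$ rather than $U$. Let $g(z):=\nu\{u:\phi(z,u)=1\}$. Then the pair $(Z,T)$ has laws $P(z)\operatorname{Bern}(g(z))(t)$ and $Q(z)\operatorname{Bern}(g(z))(t)$ on the finite set $\mathcal Z\times\{0,1\}$. Because the kernel is shared, a one-line computation shows their divergence is exactly $\KL(P\|Q)$, including the case $P\not\ll Q$. Your log-sum step on the cells $\mathcal Z\times\{1\}$ and $\mathcal Z\times\{0\}$ then gives $\kl(P\{T=1\},Q\{T=1\})\le\KL(P\|Q)$ for any randomization.
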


\begin{lemma}[Pinsker's inequality and binary testing, \cite{pinsker1964information};
\cite{lattimore2019bandit}, Eqs.~(14.12)--(14.13)]
\label{lem:pinsker-testing}
For probability laws $P,Q$ on a finite set,
\[
    \TV(P,Q)\leq\sqrt{\frac12\KL(P\|Q)}.
\]
Every possibly randomized test $T\in\{+1,-1\}$ satisfies
\[
    P\{T=-1\}+Q\{T=+1\}\geq1-\TV(P,Q).
\]
% See \citealp[Eqs.~(14.12)--(14.13)]{lattimore2019bandit}.
% The probabilities include the test's independent
% randomization.
\end{lemma}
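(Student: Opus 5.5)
Both claims are classical; I would prove them directly, using only the data-processing inequality recorded in Lemma~\ref{lem:KL-identities}.

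\textbf{Testing inequality.} I would prove this first. A possibly randomized test $T\in\{+1,-1\}$ on a finite set is described by a function $\phi(z):=\Pr\{T=+1\mid z\}\in[0,1]$, where the test's own randomization is averaged out. Then
\[
P\{T=-1\}+Q\{T=+1\}
=\sum_z P(z)\bigl(1-\phi(z)\bigr)+\sum_z Q(z)\phi(z)
=1-\sum_z\phi(z)\bigl(P(z)-Q(z)\bigr).
\]
The last sum is linear in $\phi$, so its supremum over $\phi\in[0,1]^{\mathcal Z}$ is attained at $\phi=\ind_A$ with $A:=\{z:P(z)>Q(z)\}$. That supremum equals $\sum_{z\in A}(P(z)-Q(z))$. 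Since $\sum_z(P(z)-Q(z))=0$, this quantity is $\frac12\sum_z|P(z)-Q(z)|=\TV(P,Q)$, which proves the inequality.

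\textbf{Pinsker's inequality.} The same identity shows $\TV(P,Q)=P(A)-Q(A)$. Take the deterministic test $T=\ind\{z\in A\}$. The data-processing bound in Lemma~\ref{lem:KL-identities} gives $\kl(P(A),Q(A))\le\KL(P\|Q)$. This reduces the claim to the Bernoulli inequality
\[
\kl(p,q)\ge2(p-q)^2,\qquad p,q\in[0,1].
\]

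\textbf{The Bernoulli inequality.} This is the only step requiring a computation. Fix $p$ and set $f(q):=\kl(p,q)-2(p-q)^2$ for $q\in(0,1)$. Then
\[
f'(q)=(q-p)\Bigl(\frac1{q(1-q)}-4\Bigr),
\]
and the second factor is nonnegative because $q(1-q)\le1/4$. Hence $f$ is nonincreasing on $(0,p]$ and nondecreasing on $[p,1)$, so it is minimized at $q=p$, where $f(p)=0$. The endpoint cases $q\in\{0,1\}$ hold trivially: either $\kl(p,q)=+\infty$, or $p=q$ and both sides vanish. Combining gives
\[
\TV(P,Q)^2=\bigl(P(A)-Q(A)\bigr)^2\le\tfrac12\KL(P\|Q),
\]
which is Pinsker's inequality.

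\textbf{Main obstacle.} I expect no real obstacle here. The only care needed is the bookkeeping for the test's internal randomization, which I handle by averaging it into $\phi$ and which is consistent with the randomization clause of Lemma~\ref{lem:KL-identities}, and the boundary conventions in the Bernoulli calculus step.
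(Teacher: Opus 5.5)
Your proof is correct. The paper does not prove this lemma. It records it as a classical fact, with pointers to Pinsker's original paper and to Eqs.~(14.12)--(14.13) of Lattimore and Szepesv\'ari, so the comparison is between a citation and your self-contained derivation.

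Your route is the standard textbook one. Encoding a randomized test by its acceptance function $\phi(z)=\mathbb P\{T=+1\mid z\}$ turns the testing bound into a linear optimization over $\phi$ with values in $[0,1]$. That optimization is maximized by $\ind_A$ with $A=\{P>Q\}$. Pinsker's inequality then follows by applying the data-processing inequality of Lemma~\ref{lem:KL-identities} to the same set $A$. The remaining step is $\kl(p,q)\ge2(p-q)^2$, which you obtain from $f'(q)=(q-p)\bigl(1/(q(1-q))-4\bigr)$, computed correctly.

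Your version has two advantages over the citation. It rests only on a lemma the paper already states. It also makes explicit how the test's internal randomization is averaged out, which is exactly how the paper uses the lemma: for example, the decoded $\widehat\theta_i$ in the proof of Theorem~\ref{thm:robust_pass1_lower} depends on both the sample and the learner's training randomness. The citation buys only brevity.

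One cosmetic point: when $p\in\{0,1\}$, the minimizer $q=p$ is an endpoint of $(0,1)$. The monotonicity argument should therefore be closed using continuity of $q\mapsto\kl(p,q)$ at that endpoint (for instance $\kl(0,q)=-\log(1-q)\to0$ as $q\to0$), which is immediate.
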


\begin{lemma}[Bretagnolle--Huber testing inequality,\cite{lattimore2019bandit}Theorem~14.2;
\cite{tsybakov2009introduction}]
\label{lem:BH-testing}
For probability laws $P,Q$ on a finite set and every
possibly randomized test $T\in\{+1,-1\}$,
\[
    \frac12P\{T=-1\}+\frac12Q\{T=+1\}
    \geq\frac14\exp\{-\KL(P\|Q)\}.
\]
% See \citealp[Theorem~14.2]{lattimore2019bandit}.
% The probabilities include the test's independent
% randomization.
\end{lemma}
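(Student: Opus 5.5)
The plan is the standard three-step argument: reduce to a deterministic test, lower bound the testing error by the overlap $\sum_z\min\{P(z),Q(z)\}$, and then lower bound that overlap by the Bhattacharyya coefficient, which Jensen's inequality relates to $\KL(P\|Q)$.

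\emph{Reduction to deterministic tests.} A randomized test $T$ is a deterministic function of $(Z,\xi)$, where $Z$ is the observation and $\xi\sim\nu$ is independent randomization. It is convenient to take $\nu$ finitely supported; this is harmless because the error probabilities of $T$ depend only on the acceptance probabilities $\mathbb P\{T=+1\mid Z=z\}$, and such a $\nu$ realizes the same ones. By Lemma~\ref{lem:KL-identities}, $\KL(P\otimes\nu\|Q\otimes\nu)=\KL(P\|Q)$. It therefore suffices to prove the claim for a deterministic test on a finite space, applied to the laws $P\otimes\nu$ and $Q\otimes\nu$.

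\emph{Overlap bound.} For a deterministic test, let $A:=\{T=+1\}$. Then
\[
    P\{T=-1\}+Q\{T=+1\}=P(A^c)+Q(A)\ge\sum_z\min\{P(z),Q(z)\}=:m_\wedge .
\]
Writing $m_\vee:=\sum_z\max\{P(z),Q(z)\}$, we have $m_\wedge+m_\vee=2$, so $m_\vee\le2$.

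\emph{Relating the overlap to KL.} If $\KL(P\|Q)=\infty$ the claim is trivial, so assume it is finite; then $Q(z)>0$ wherever $P(z)>0$. Since $\sqrt{pq}=\sqrt{\min\{p,q\}\max\{p,q\}}$, Cauchy--Schwarz gives
\[
    \Bigl(\sum_z\sqrt{P(z)Q(z)}\Bigr)^2\le m_\wedge m_\vee\le2m_\wedge .
\]
Next, by Jensen's inequality applied to the concave logarithm under $P$ (on the support of $P$),
\[
    \sum_z\sqrt{P(z)Q(z)}=\E_P\sqrt{Q/P}\ge\exp\Bigl\{\tfrac12\E_P\log(Q/P)\Bigr\}=e^{-\KL(P\|Q)/2}.
\]
Squaring and combining with the previous display yields $m_\wedge\ge\frac12e^{-\KL(P\|Q)}$. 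Dividing the overlap bound by two then gives the stated form,
\[
    \tfrac12P\{T=-1\}+\tfrac12Q\{T=+1\}\ge\tfrac14e^{-\KL(P\|Q)} .
\]

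\emph{Main obstacle.} No step is deep. The only points needing care are the bookkeeping in the randomized-test reduction, so that the test's independent randomness does not change the divergence, and the support conventions in the Jensen step when $P$ and $Q$ have different supports.
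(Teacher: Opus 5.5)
Your proof is correct and is the standard argument behind the result the paper cites without reproving (Lattimore--Szepesv\'ari, Theorem~14.2): bound the error sum below by the overlap $\sum_z\min\{P(z),Q(z)\}$, then lower bound the overlap by combining Cauchy--Schwarz (using $\sum_z\max\{P(z),Q(z)\}\le 2$) with Jensen's inequality applied to the Bhattacharyya coefficient. Your reduction to deterministic tests is valid but unnecessary: with acceptance probabilities $\phi(z)$, one has $(1-\phi(z))P(z)+\phi(z)Q(z)\ge\min\{P(z),Q(z)\}$ pointwise, so the overlap bound holds directly for randomized tests.
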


\begin{lemma}[Divergence between opposite Bernoulli biases,~\cite{de2021bandits}, Appendix~A.2, inequality following (4)]
\label{lem:bernoulli-KL}
For $0<h\leq1/4$,
\[
\begin{aligned}
    \kl(1/2+h,1/2-h)
    &=\kl(1/2-h,1/2+h)\\
    &=2h\log\frac{1+2h}{1-2h}
     \leq16h^2.
\end{aligned}
\]
   Moreover, for $0\le h<1/2$, since $\log(1+u)\le u$ for $u>-1$,
   \[
       \kl(1/2\pm h,1/2)
       =\bigl(\tfrac12+h\bigr)\log(1+2h)+\bigl(\tfrac12-h\bigr)\log(1-2h)
       \le\bigl(\tfrac12+h\bigr)2h-\bigl(\tfrac12-h\bigr)2h=4h^2.
   \]
\end{lemma}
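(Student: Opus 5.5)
The plan is a direct computation from the definition of $\kl$, followed by the elementary inequality $\log(1+u)\le u$. For the first claim, write $p=1/2+h$ and $q=1/2-h$, so that $1-p=q$ and $1-q=p$. Then
\[
    \kl(p,q)=p\log\frac pq+q\log\frac qp=(p-q)\log\frac pq .
\]
Since $p-q=2h$ and $p/q=(1+2h)/(1-2h)$, this gives
\[
    \kl(1/2+h,1/2-h)=2h\log\frac{1+2h}{1-2h}.
\]
Interchanging the roles of $p$ and $q$ produces $(q-p)\log(q/p)=(p-q)\log(p/q)$, which is the same expression. This gives the stated symmetry $\kl(1/2+h,1/2-h)=\kl(1/2-h,1/2+h)$.

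For the quadratic bound I would rewrite the ratio as
\[
    \frac{1+2h}{1-2h}=1+\frac{4h}{1-2h},
\]
and apply $\log(1+u)\le u$ with $u=4h/(1-2h)>0$. This yields
\[
    \log\frac{1+2h}{1-2h}\le\frac{4h}{1-2h}.
\]
The only place the restriction $0<h\le1/4$ enters is the next step: it gives $1-2h\ge1/2$, hence $4h/(1-2h)\le 8h$. Multiplying by $2h$ then gives $\kl\le 16h^2$.

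For the second claim, with $0\le h<1/2$, I would expand
\[
    \kl(1/2+h,1/2)=\bigl(\tfrac12+h\bigr)\log(1+2h)+\bigl(\tfrac12-h\bigr)\log(1-2h).
\]
Bounding each logarithm by $\log(1+u)\le u$, with $u=2h$ and $u=-2h$ respectively, is valid because both coefficients $\tfrac12\pm h$ are nonnegative. The result is
\[
    \bigl(\tfrac12+h\bigr)2h-\bigl(\tfrac12-h\bigr)2h=4h^2 .
\]
The case $\kl(1/2-h,1/2)$ is identical after replacing $h$ by $-h$, since the expansion is even in $h$. There is no real obstacle here. The only point needing care is the use of $h\le1/4$ to control the denominator $1-2h$ in the first bound, and checking that the coefficients multiplying the logarithms are nonnegative before applying $\log(1+u)\le u$.
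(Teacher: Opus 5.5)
Your proposal is correct and follows the paper's own argument essentially verbatim. You expand the Bernoulli divergence to $(p-q)\log(p/q)=2h\log\frac{1+2h}{1-2h}$, apply $\log(1+u)\le u$ with $u=4h/(1-2h)$ using $1-2h\ge 1/2$, and bound the second claim termwise using nonnegative coefficients. Your explicit checks of the symmetry and of the coefficient signs only spell out details the paper leaves implicit.
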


\begin{proof}
The equality follows by expanding Bernoulli relative
entropy. Since $\log(1+u)\leq u$ for $u\geq0$,
\[
    \log\frac{1+2h}{1-2h}
    =\log\!\left(1+\frac{4h}{1-2h}\right)
    \leq\frac{4h}{1-2h}\leq8h.
\]
Multiplying by $2h$ proves the bound.
\end{proof}

\begin{proof}[Proof of Theorem~\ref{thm:standard_pass1_lower}]
We construct a class of the prescribed cardinality,
denoted by $|\gR|\geq2$, that supports all three terms
in the lower bound. Fix $\Delta\in[0,1]$ and
$0<\varepsilon,\delta\leq1/64$. All probabilities include
training randomness; prediction-time randomness is
averaged in the policy values.

\paragraph{A common reward class.}
Start with the class from
Appendix~\ref{appen:standard_passk_lower} at $k=1$,
including its auxiliary contexts. Its response space
is $\gY=\{0,1,2\}$, all rewards select response zero
at context $0$, and the class contains two rewards
selecting responses one and two, respectively, at context $1$.

Adjoin a disjoint context set
$\mathcal Z:=\{-1,+1\}^{\gR}$ and let $\mu$ be its
uniform distribution. For each original reward $r$,
extend it to $z\in\mathcal Z$ by
\[
    r(z,0):=0,
    \qquad
    r(z,1):=\frac{1+z_r}{2},
    \qquad
    r(z,2):=\frac{1-z_r}{2}.
\]
Continue to denote the extended class by $\gR$ and the
enlarged context space by $\gX$. The class retains its
cardinality, every reward is binary, and every reward
has optimal value one. The spaces and class depend only
on the prescribed cardinality.

\paragraph{The fast-rate contribution.}
Assign zero mass to $\mathcal Z$. The constructions in
Appendix~\ref{appen:standard_passk_lower} remain valid,
because the added reward values are fixed and reveal
no information about the selected reward.
Using~\eqref{eq:standard_class_lower_component}
and~\eqref{eq:standard_confidence_lower_component} with
$k=1$, a learner satisfying the PAC requirement must have
\[
    m\geq\frac1{32\varepsilon}
       \max\!\left\{
           \frac{\log|\gR|}{\log2},\,
           \log\frac1\delta
       \right\}.
\]
Indeed, below either threshold the corresponding excess
bound is strictly larger than $\varepsilon$, since
$\varepsilon\leq1/64<1/32$. The failure probability
exceeds $\delta$. These thresholds also rule out a
sample-complexity threshold of zero by considering $m=1$.
Since the demonstrators are optimal for their selected
rewards, for every budget $\Delta$ we obtain
\begin{equation}
\label{eq:standard_pass1_fast_component}
    m_{\mathrm{std}}^{(1)}
        (\gR,\varepsilon,\delta,\Delta)
    \geq\frac{\log|\gR|+\log(1/\delta)}{64\varepsilon}.
\end{equation}

\paragraph{The suboptimality-dependent confidence term.}
Suppose $\Delta\geq8\varepsilon$, and set
\[
    h:=\frac{2\varepsilon}{\Delta}\leq\frac14,
    \qquad
    \gD(1):=\Delta,
    \qquad
    \gD(0):=1-\Delta,
\]
with zero mass elsewhere. Choose rewards $r_+,r_-$
whose target responses at context $1$ are one and two,
respectively. For $\sigma\in\{-1,+1\}$, let
\[
    \pi_{\mathrm{demo}_{\sigma}}(0\mid0)=1,
    \qquad
    \pi_{\mathrm{demo}_{\sigma}}(1\mid1)=\frac12+\sigma h,
    \qquad
    \pi_{\mathrm{demo}_{\sigma}}(2\mid1)=\frac12-\sigma h.
\]
The selected-reward suboptimality is
$\Delta_{r_\sigma}=\Delta(1/2-h)\leq\Delta$.
Behavior at zero-probability contexts may be fixed arbitrarily.

From the returned policy, define $\widehat\sigma=+1$
if $\widehat\pi(1\mid1)\geq1/2$, and $-1$ otherwise.
A test error implies that the learner assigns probability
at most $1/2$ to the rewarded response at context $1$.
Its shortfall at context $0$ is nonnegative, so
\begin{equation}
\label{eq:standard_pass1_testing_reduction}
    \{\widehat\sigma\ne\sigma\}
    \subseteq
    \left\{
        \gE_{r_\sigma}^1(\widehat\pi,\pi_{\mathrm{demo}_{\sigma}})
        \geq\Delta h=2\varepsilon
    \right\}.
\end{equation}
Let $P_\sigma$ be the law of one demonstration.
Lemmas~\ref{lem:KL-identities} and~\ref{lem:bernoulli-KL}
give
\[
    \KL(P_+^{\otimes m}\|P_-^{\otimes m})
    \leq16m\Delta h^2
    =\frac{64m\varepsilon^2}{\Delta}.
\]
If the learner satisfies the PAC requirement for both
fixed rewards, its test-error probability is at most
$\delta$ under each law. Lemma~\ref{lem:BH-testing} implies
\[
    \delta\geq\frac14
        \exp\!\left\{-\frac{64m\varepsilon^2}{\Delta}\right\}.
\]
Since $\log(1/(4\delta))\geq\tfrac12\log(1/\delta)$,
\begin{equation}
\label{eq:standard_pass1_slow_component}
    m_{\mathrm{std}}^{(1)}
        (\gR,\varepsilon,\delta,\Delta)
    \geq\frac{\Delta\log(1/\delta)}{128\varepsilon^2},
    \qquad \Delta\geq8\varepsilon.
\end{equation}

\paragraph{The suboptimality-dependent reward-class term.}
Keep $h=2\varepsilon/\Delta$ and
$\Delta\geq8\varepsilon$, but now put mass $\Delta$
on $\mathcal Z$ according to $\mu$ and mass $1-\Delta$
on context $0$. For each $r\in\gR$, let
\[
    \pi_{\mathrm{demo}_{r}}(0\mid0)=1,
    \qquad
    \pi_{\mathrm{demo}_{r}}(1\mid z)=\frac12+h z_r,
    \qquad
    \pi_{\mathrm{demo}_{r}}(2\mid z)=\frac12-h z_r.
\]
Again, $\Delta_r=\Delta(1/2-h)\leq\Delta$.
Write $P_r$ for the law of one demonstration, and let
$P_0$ be the corresponding law with $h=0$.

For a realized learned policy, define
\[
    \widehat f(z):=\widehat\pi(1\mid z)
                    -\widehat\pi(2\mid z)\in[-1,1].
\]
The learner's reward at $z$ is at most
$(1+\widehat f(z)z_r)/2$, whereas the demonstrator's
expected reward is $1/2+h$. Therefore
\[
    \gE_r^1(\widehat\pi,\pi_{\mathrm{demo}_{r}})
    \geq\Delta\left(
        h-\frac12\mathbb E_{Z\sim\mu}[\widehat f(Z)Z_r]
    \right).
\]
Since $\varepsilon=\Delta h/2$, success requires
$\mathbb E_\mu[\widehat f(Z)Z_r]\geq h$.
Thus define the candidate list
\[
    \mathcal L(\widehat\pi):=
    \left\{r\in\gR:
        \mathbb E_\mu[\widehat f(Z)Z_r]\geq h
    \right\}.
\]
The coordinate functions $z\mapsto z_r$ are orthonormal
under $\mu$. Bessel's inequality gives
\begin{equation}
\label{eq:standard_pass1_list_size}
    h^2|\mathcal L(\widehat\pi)|
    \leq\sum_{r\in\gR}
        \bigl(\mathbb E_\mu[\widehat f(Z)Z_r]\bigr)^2
    \leq\mathbb E_\mu[\widehat f(Z)^2]\leq1.
\end{equation}
Hence a single policy can meet the necessary correlation
condition for at most $h^{-2}$ rewards.

To use this restriction, draw $r^\star$ uniformly from
$\gR$. Let $\mathbb P$ be the joint law of
$(r^\star,S)$ with $S\mid r^\star=r\sim P_r^{\otimes m}$,
and let $\mathbb Q$ instead draw
$S\sim P_0^{\otimes m}$ independently of $r^\star$.
Under both laws, apply the same learning rule to $S$.
For the possibly randomized event
$E:=\{r^\star\in\mathcal L(\widehat\pi)\}$, the PAC
requirement and~\eqref{eq:standard_pass1_list_size} imply
\[
    \mathbb P(E)\geq1-\delta,
    \qquad
    \mathbb Q(E)\leq\frac1{|\gR|h^2}.
\]
Meanwhile, Lemmas~\ref{lem:KL-identities} and~\ref{lem:bernoulli-KL} yield
\[
    \KL(\mathbb P\|\mathbb Q)
    =\frac1{|\gR|}\sum_{r\in\gR}
       \KL(P_r^{\otimes m}\|P_0^{\otimes m})
    \leq4m\Delta h^2.
\]
If $|\gR|h^2\geq4$, data processing applied to $E$,
together with $\kl(p,q)\geq p\log(1/q)-\log2$, gives
\[
    4m\Delta h^2
    \geq(1-\delta)\log(|\gR|h^2)-\log2
    \geq\frac14\log(|\gR|h^2).
\]
Consequently, using $h^2=4\varepsilon^2/\Delta^2$,
\[
    m\geq\frac{\Delta}{64\varepsilon^2}
              \log(|\gR|h^2)
    \geq\frac{\Delta}{64\varepsilon^2}
        \log\!\left(
            1+\frac{|\gR|\varepsilon^2}
                     {(\Delta+\varepsilon)^2}
        \right).
\]
The last inequality follows from
$1+|\gR|\varepsilon^2/(\Delta+\varepsilon)^2
\leq1+|\gR|h^2/4\leq|\gR|h^2$.
If $|\gR|h^2<4$, the logarithm on the right is at most
$\log2\leq\log(1/\delta)$, so the confidence bound
\eqref{eq:standard_pass1_slow_component} supplies the same
conclusion with constant $1/128$. Thus, in both cases,
\begin{equation}
\label{eq:standard_pass1_class_component}
    m_{\mathrm{std}}^{(1)}
        (\gR,\varepsilon,\delta,\Delta)
    \geq\frac{\Delta}{128\varepsilon^2}
        \log\!\left(
            1+\frac{|\gR|\varepsilon^2}
                     {(\Delta+\varepsilon)^2}
        \right),
    \qquad \Delta\geq8\varepsilon.
\end{equation}

\paragraph{Combining the three contributions.}
For $\Delta\geq8\varepsilon$, the same class satisfies
\eqref{eq:standard_pass1_fast_component},
\eqref{eq:standard_pass1_slow_component}, and
\eqref{eq:standard_pass1_class_component}.
The maximum of these three lower bounds is at least
one third of their sum, giving the claimed bound with
constant $1/384$ in this regime.

For $0\leq\Delta<8\varepsilon$, the remaining terms
are already controlled by the fast contribution:
\begin{align*}
    &\frac{\Delta}{\varepsilon^2}
    \left[
        \log\frac1\delta+
        \log\!\left(
            1+\frac{|\gR|\varepsilon^2}
                     {(\Delta+\varepsilon)^2}
        \right)
    \right]\\
    &\qquad\leq\frac8\varepsilon
        \left[\log\frac1\delta+\log(1+|\gR|)\right]\\
    &\qquad\leq\frac{16}\varepsilon
        \left[\log|\gR|+\log\frac1\delta\right],
\end{align*}
where $\log(1+|\gR|)\leq2\log|\gR|$ because
$|\gR|\geq2$. Combining this with
\eqref{eq:standard_pass1_fast_component} gives the claimed
bound with constant $1/1088$ in this regime.

Therefore, for example, the universal choice $c=1/2048$
works throughout:
\begin{equation}
\label{eq:standard_pass1_lower_samples}
\begin{aligned}
    m_{\mathrm{std}}^{(1)}
        (\gR,\varepsilon,\delta,\Delta)
    \geq\frac1{2048}\Bigg[
        &\frac{\log|\gR|+\log(1/\delta)}{\varepsilon}\\
        &+\frac{\Delta}{\varepsilon^2}
        \left\{
            \log\frac1\delta+
            \log\!\left(
                1+\frac{|\gR|\varepsilon^2}
                         {(\Delta+\varepsilon)^2}
            \right)
        \right\}
    \Bigg].
\end{aligned}
\end{equation}
The class has exactly the prescribed cardinality.
Each averaging or testing argument identifies a fixed
evaluation reward before the actual sample is drawn,
as required for standard evaluation.
\end{proof}

\subsection{Proof of Theorem~\ref{thm:robust_passk_upper}}
\label{appen:robust_passk_upper}

\begin{proof}
Fix $k\geq2$, $m\geq1$, $\delta\in(0,1)$, a context
distribution $\gD$, and an arbitrary demonstrator $\demo$.
Let $\widehat\pi=\gA_k(\gR,S)=\otob$ be the learner
constructed in Appendix~\ref{appen:standard_passk_upper},
with $L=m$. With fixed tie-breaking, this policy is
determined by $S$, and the learning rule does not depend
on the evaluation reward or the confidence level.

For each fixed $r\in\gR$, apply
Theorem~\ref{thm:standard_passk_upper} with failure
probability $\delta/|\gR|$. This gives
\[
    \mathbb P_S\!\left\{
        \gE_r^k(\widehat\pi,\demo)
        >\frac{3\log|\gR|}{m\log k}
         +\frac{3\log(|\gR|/\delta)+1}{m}
    \right\}\leq\frac{\delta}{|\gR|}.
\]
These bounds concern the same learned policy. Since
$\gR$ is finite, a union bound therefore yields
\begin{equation}
\label{eq:robust_passk_upper_tail}
    \mathbb P_S\!\left\{
        \sup_{r\in\gR}\gE_r^k(\widehat\pi,\demo)
        >\frac{3\log|\gR|}{m\log k}
         +\frac{3\log(|\gR|/\delta)+1}{m}
    \right\}
    \leq\sum_{r\in\gR}\frac{\delta}{|\gR|}
    =\delta.
\end{equation}
Thus requiring the guarantee simultaneously over $\gR$
replaces the confidence term $\log(1/\delta)$ by
$\log(|\gR|/\delta)$.

For the sample-complexity conclusion, let
$\varepsilon\in(0,1)$ and $\Delta\in[0,1]$.
The threshold in~\eqref{eq:robust_passk_upper_tail}
is at most $\varepsilon$ whenever
\[
    m\geq\left\lceil
        \frac{3\log|\gR|/\log k
              +3\log(|\gR|/\delta)+1}{\varepsilon}
    \right\rceil.
\]
The same learning rule, setting $L=m$ at each sample
size, satisfies this guarantee for every $\gD$ and
every $\demo$, hence also for all demonstrators with
$\Delta_{\gR}\leq\Delta$.
Definition~\ref{def:robust_pac_complexity} gives
\[
    m_{\mathrm{rob}}^{(k)}
        (\gR,\varepsilon,\delta,\Delta)
    \leq\left\lceil
        \frac{3\log|\gR|/\log k
              +3\log(|\gR|/\delta)+1}{\varepsilon}
    \right\rceil.
\]
Finally, $k\geq2$ and
$\log|\gR|\leq\log(|\gR|/\delta)$ imply
\[
    \frac{3\log|\gR|}{\log k}
    +3\log\frac{|\gR|}{\delta}
    \leq\left(\frac3{\log2}+3\right)
           \log\frac{|\gR|}{\delta}
    \leq8\log\frac{|\gR|}{\delta}.
\]
Substituting this inequality proves
\eqref{eq:robust_passk_upper_samples}.
For $\varepsilon\geq1$, the accuracy requirement is
automatic because
$\sup_{r\in\gR}\gE_r^k(\pi,\demo)\leq1$
for every policy.
\end{proof}

\subsection{Proof of Theorem~\ref{thm:robust_passk_lower}}
\label{appen:robust_passk_lower}

\begin{proof}
We construct one reward class that supports both the
reward-class and confidence contributions. Throughout,
the demonstrators are deterministic, and probabilities
include any training randomization used by the learner.

\paragraph{A fixed reward class.}
Fix $k\geq2$ and a prescribed class size $|\gR|\geq2k^2$.
Set
\[
    q:=2k,
    \qquad
    b:=\left\lfloor
        \frac{\log|\gR|}{\log(2k^2)}
    \right\rfloor\geq1,
    \qquad
    \gY:=\{0,1,\ldots,q\}.
\]
On $\gX_0:=\{0\}\cup([b]\times[k])$, define, for
$v=((j_\ell,a_\ell))_{\ell=1}^b\in([k]\times[q])^b$,
\[
    r_v(0,y):=1,
    \qquad
    r_v((\ell,j),y):=
    \begin{cases}
        \ind\{y=a_\ell\},&j=j_\ell,\\
        1,&j\ne j_\ell.
    \end{cases}
\]
Each reward selects one context and one preferred response
in each block; it is identically one at all other core
contexts. There are $(2k^2)^b$ distinct such rewards.

To obtain the prescribed cardinality, add
$|\gR|-(2k^2)^b$ auxiliary contexts and extend every
core reward by $\ind\{y=0\}$ there. Fix one core reward.
For each auxiliary context, add a reward that agrees
with this fixed reward on $\gX_0$, prefers response one
at that auxiliary context, and prefers response zero
at every other auxiliary context. Let $\gX$ and $\gR$
denote the resulting space and class. All distributions
below assign zero mass to the auxiliary contexts, so
the added rewards duplicate a core reward on the support
and do not affect the robust excess or suboptimality.
The spaces and class depend only on $k$ and the prescribed
cardinality, and $V_r^\star=1$ for every $r\in\gR$.

Fix $m\geq1$, $0<\Delta\leq1$, $0<\delta<1/16$,
and an arbitrary learning rule returning
$\widehat\pi=\gA_k(\gR,S)$. For a realized output policy, let
\[
    C_x(a):=
    \mathbb P_{\mathbf y\sim\widehat\pi(\cdot\mid x)}
    \{a\in\{y^{(1)},\ldots,y^{(k)}\}\}.
\]
These inclusion probabilities average over prediction-time
randomness. Since a tuple contains at most $k$ distinct
responses,
\begin{equation}
\label{eq:robust_proof_inclusion_budget}
    \sum_{a=1}^q C_x(a)
    =\mathbb E_{\mathbf y\sim\widehat\pi(\cdot\mid x)}
       \bigl|[q]\cap\{y^{(1)},\ldots,y^{(k)}\}\bigr|
    \leq k.
\end{equation}
This allows arbitrary dependence among the responses.

\paragraph{The reward-class contribution.}
Choose
\[
    p:=\min\!\left\{
        \frac{\Delta}{b},\frac1{2bk},\frac{\log k}{2m}
    \right\},
    \qquad
    \gD((\ell,j)):=p,
    \qquad
    \gD(0):=1-bkp\geq\frac12.
\]
For $\theta\in[q]^{bk}$, let $\pi_{\mathrm{demo}_{\theta}}$ return
$\theta_{\ell,j}$ at $(\ell,j)$ and zero elsewhere.
For each core reward,
\[
    V_{r_v}^\star-V_{r_v}(\pi_{\mathrm{demo}_{\theta}})
    =p\sum_{\ell=1}^b
        \ind\{a_\ell\ne\theta_{\ell,j_\ell}\}.
\]
Choosing a mismatching response in every block gives
$\Delta_{\gR}=bp\leq\Delta$.
The excess under $r_v$ is
\[
    \gE_{r_v}^k(\widehat\pi,\pi_{\mathrm{demo}_{\theta}})
    =p\sum_{\ell=1}^b
       \left[
           \ind\{a_\ell=\theta_{\ell,j_\ell}\}
           -C_{(\ell,j_\ell)}(a_\ell)
       \right].
\]
At a selected context, taking the preferred response to
be the demonstrated response gives $1-C_x(\theta_x)\geq0$;
every other choice gives a nonpositive contribution.
The supremum can then choose the context with the largest
miss probability independently in each block. Hence
\begin{equation}
\label{eq:robust_proof_block_loss}
\begin{aligned}
    Z_\theta &:=\sup_{r\in\gR}\gE_r^k(\widehat\pi,\pi_{\mathrm{demo}_{\theta}})\\
    &=p\sum_{\ell=1}^b
       \max_{j\in[k]}
       \bigl[1-C_{(\ell,j)}(\theta_{\ell,j})\bigr],
       \qquad 0\leq Z_\theta\leq bp.
\end{aligned}
\end{equation}

Temporarily draw the coordinates of $\Theta\in[q]^{bk}$
independently and uniformly, and generate $S$ using
$\gD$ and $\pi_{\mathrm{demo}_{\Theta}}$. Let $U_\ell$ count the contexts
in block $\ell$ that do not appear in the sample.
Since $p\leq1/4$ and $2mp\leq\log k$,
\[
    \lambda:=\mathbb E U_\ell
    =k(1-p)^m\geq ke^{-2mp}\geq1.
\]
Two distinct contexts are both absent with probability
$(1-2p)^m\leq(1-p)^{2m}$, so
\[
    \mathbb E U_\ell^2
    =k(1-p)^m+k(k-1)(1-2p)^m
    \leq\lambda+\lambda^2.
\]
Cauchy--Schwarz therefore yields
\[
    \mathbb P\{U_\ell>0\}
    \geq\frac{(\mathbb E U_\ell)^2}{\mathbb E U_\ell^2}
    \geq\frac{\lambda}{1+\lambda}\geq\frac12.
\]
On $\{U_\ell>0\}$, let $J_\ell$ be the smallest index
of an unobserved context in that block. Conditional on
$S$ and $\widehat\pi$, its hidden response remains uniform
on $[q]$: the observed data reveal only the coordinates
at observed contexts, and the learner depends on $\Theta$
only through $S$. Thus, on this event,
\[
    \mathbb E\!\left[
        1-C_{(\ell,J_\ell)}(\Theta_{\ell,J_\ell})
        \mid S,\widehat\pi
    \right]
    =1-\frac1q\sum_{a=1}^q C_{(\ell,J_\ell)}(a)
    \geq\frac12.
\]
Combining this with~\eqref{eq:robust_proof_block_loss} gives
\[
    \mathbb E Z_\Theta
    \geq\frac p2\sum_{\ell=1}^b\mathbb P\{U_\ell>0\}
    \geq\frac{bp}{4}.
\]
Since $Z_\Theta\in[0,bp]$,
\[
    \frac{bp}{4}
    \leq\mathbb E Z_\Theta
    \leq\frac{bp}{8}
       +\frac{7bp}{8}\mathbb P\{Z_\Theta>bp/8\},
\]
and therefore $\mathbb P\{Z_\Theta>bp/8\}\geq1/7$.
Averaging over the finite prior selects a fixed $\theta$
for which this probability bound holds over training alone.

Using $\lfloor u\rfloor\geq u/2$ for $u\geq1$ and
$\log(2k^2)\leq3\log k$, we have
\[
    b\log k\geq\frac{\log|\gR|}{6},
    \qquad
    \frac{bp}{8}
    =\min\!\left\{
        \frac{\Delta}{8},\frac1{16k},
        \frac{b\log k}{16m}
    \right\}
    \geq\frac1{96}\min\!\left\{
        \Delta,\frac1k,\frac{\log|\gR|}{m}
    \right\}.
\]
Consequently, a fixed admissible instance satisfies
\begin{equation}
\label{eq:robust_proof_class_tail}
    \mathbb P\!\left\{
        Z_\theta>\frac1{96}\min\!\left\{
            \Delta,\frac1k,\frac{\log|\gR|}{m}
        \right\}
    \right\}\geq\frac17.
\end{equation}

\paragraph{The confidence contribution.}
Use the same spaces and reward class. Set
\[
    \eta:=\min\!\left\{
        \Delta,\frac12,
        \frac{\log(1/(8\delta))}{2m}
    \right\},
    \qquad x_\bullet:=(1,1),
\]
and put mass $\eta$ on $x_\bullet$ and mass $1-\eta$
on context $0$. For $\theta\in[q]$, let $\pi_{\mathrm{demo}_{\theta}}$
return $\theta$ at $x_\bullet$ and zero elsewhere.
A reward selecting $x_\bullet$ and a response distinct
from $\theta$ has suboptimality $\eta$, so
$\Delta_{\gR}=\eta\leq\Delta$. Maximizing the excess gives
\[
    Z_\theta
    :=\sup_{r\in\gR}\gE_r^k(\widehat\pi,\pi_{\mathrm{demo}_{\theta}})
    =\eta[1-C_{x_\bullet}(\theta)].
\]

Draw $\Theta$ uniformly from $[q]$, and let $E$ be the
event that $x_\bullet$ is absent from the sample. Because
$0<\eta\leq1/2$ and $\log(1-\eta)\geq-2\eta$,
\[
    \mathbb P(E)=(1-\eta)^m
    \geq e^{-2m\eta}\geq8\delta.
\]
On $E$, $\Theta$ remains uniform conditional on
$S,\widehat\pi$. Thus $H:=1-C_{x_\bullet}(\Theta)\in[0,1]$
satisfies, by~\eqref{eq:robust_proof_inclusion_budget},
\[
    \frac12\leq\mathbb E[H\mid S,\widehat\pi]
    \leq\frac14+\frac34
       \mathbb P\{H>1/4\mid S,\widehat\pi\}.
\]
The conditional probability on the right is at least
$1/3$ on $E$, and hence
\[
    \mathbb P\{Z_\Theta>\eta/4\}
    \geq\frac{\mathbb P(E)}3\geq\frac{8\delta}{3}.
\]
Since $\delta<1/16$ and $\Delta\leq1$,
\[
    \log\frac1{8\delta}\geq\frac14\log\frac1\delta,
    \qquad
    \frac\eta4\geq\frac1{32}\min\!\left\{
        \Delta,\frac{\log(1/\delta)}m
    \right\}.
\]
Prior averaging therefore selects a fixed admissible
instance such that
\begin{equation}
\label{eq:robust_proof_confidence_tail}
    \mathbb P\!\left\{
        Z_\theta>\frac1{32}\min\!\left\{
            \Delta,\frac{\log(1/\delta)}m
        \right\}
    \right\}\geq\frac{8\delta}{3}.
\end{equation}

\paragraph{Combining the bounds.}
Both failure probabilities exceed $2\delta$.
Use the constant $1/96$ in both bounds and select
the construction with the larger threshold. Then
$\max\{u,v\}\geq(u+v)/2$ gives an admissible instance
satisfying
\begin{equation}
\label{eq:robust_proof_combined_tail}
\begin{aligned}
    \mathbb P\Bigg\{
        \sup_{r\in\gR}\gE_r^k(\widehat\pi,\demo)
        >\frac1{192}\Bigg[
            &\min\!\left\{
                \Delta,\frac1k,\frac{\log|\gR|}{m}
            \right\}\\
            &+\min\!\left\{
                \Delta,\frac{\log(1/\delta)}m
            \right\}
        \Bigg]\Bigg\}\geq2\delta.
\end{aligned}
\end{equation}
Both constructions use the same fixed class. The context
distribution and demonstrator are selected before the
sample and training randomness are drawn; the supremum
then evaluates the returned policy over that class.

\paragraph{Sample complexity.}
Suppose
\[
    0<\varepsilon
    \leq\frac1{384}\min\!\left\{\Delta,\frac1k\right\}.
\]
For every positive integer
\[
    m\leq\frac{\log|\gR|+\log(1/\delta)}{384\,\varepsilon},
\]
the sum of the two truncated terms in
\eqref{eq:robust_proof_combined_tail} satisfies
\begin{align*}
    &\min\!\left\{\Delta,\frac1k,\frac{\log|\gR|}{m}\right\}
     +\min\!\left\{\Delta,\frac{\log(1/\delta)}m\right\}\\
    &\qquad\geq\min\!\left\{
        \Delta,\frac1k,
        \frac{\log|\gR|+\log(1/\delta)}m
    \right\}
    \geq384\,\varepsilon.
\end{align*}
The excess threshold in~\eqref{eq:robust_proof_combined_tail}
is therefore at least $2\varepsilon$, and every learner
has failure probability at least $2\delta>\delta$
on an admissible instance. This includes the integer
sample size at the floor of the displayed upper limit.
That limit exceeds one, so a sample-complexity threshold
of zero is also ruled out.

Definition~\ref{def:robust_pac_complexity} consequently gives
\[
    m_{\mathrm{rob}}^{(k)}
        (\gR,\varepsilon,\delta,\Delta)
    >\left\lfloor
        \frac{\log|\gR|+\log(1/\delta)}{384\,\varepsilon}
    \right\rfloor,
\]
which is~\eqref{eq:robust_passk_lower_samples} for the
prescribed class size.
\end{proof}

\subsection{Proof of Theorem~\ref{thm:robust_pass1_lower}}
\label{appen:robust_pass1_minimax}

\begin{proof}
We use the testing lemmas from
Appendix~\ref{appen:standard_pass1_lower}. All probabilities
include any training randomization; prediction-time randomness
is averaged in the policy values.

\paragraph{A fixed reward class.}
Fix a prescribed cardinality $|\gR|\geq2$, and set
\[
    d:=\left\lfloor\frac{\log|\gR|}{\log2}\right\rfloor,
    \qquad
    \gX_0:=\{0,1,\ldots,d\},
    \qquad
    \gY:=\{-1,+1\}.
\]
For each $v\in\{-1,+1\}^d$, define
\[
    f_v(0):=+1,\qquad f_v(i):=v_i\quad(i\in[d]),
    \qquad
    r_v(x,y):=\ind\{y=f_v(x)\}.
\]
These give $2^d$ distinct binary rewards. To reach the
prescribed cardinality, add $|\gR|-2^d$ auxiliary contexts
and extend every $f_v$ by $+1$ there. For each auxiliary
context, add the graph-indicator reward of a function that
equals $-1$ at that context and $+1$ everywhere else.
Denote the resulting space and class by $\gX$ and $\gR$.
All distributions below give zero mass to auxiliary contexts,
so the added rewards duplicate a core reward on the support.
The class depends only on its cardinality, and every reward
has optimal value one.

\paragraph{The reward-class contribution.}
Fix $m\geq1$, $0<\Delta\leq1$, $0<\delta<1/16$,
and an arbitrary learner. Put
\[
    \gD(0)=1-\Delta,
    \qquad
    \gD(i)=\frac{\Delta}{d}\quad(i\in[d]).
\]
For $h\in(0,1/4]$ and $\theta\in\{-1,+1\}^d$, let
the demonstrator output $+1$ at context $0$ and all
auxiliary contexts, and set
\[
    p_{\theta,i}:=\pi_{\mathrm{demo}_{\theta}}(+1\mid i)
       =\frac12+\theta_i h\quad(i\in[d]).
\]
For each core reward,
\[
    V_{r_v}^\star-V_{r_v}(\pi_{\mathrm{demo}_{\theta}})
    =\frac{\Delta}{d}\sum_{i=1}^d
       \left(\frac12-v_i\theta_i h\right).
\]
The maximum is attained at $v=-\theta$, hence
\begin{equation}
\label{eq:robust_proof_pass1_suboptimality}
    \Delta_{\gR}
    =\Delta(1/2+h)\leq\Delta.
\end{equation}
Thus the construction is admissible for every budget
$0<\Delta\leq1$.

For the realized output policy, write
$q_i:=\widehat\pi(+1\mid i)$. Maximizing over the
coordinates of $v$ independently gives
\begin{equation}
\label{eq:robust_proof_pass1_loss}
    Z_\theta
    :=\sup_{r\in\gR}\gE_r^1(\widehat\pi,\pi_{\mathrm{demo}_{\theta}})
    =(1-\Delta)(1-q_0)
      +\frac{\Delta}{d}\sum_{i=1}^d
         |p_{\theta,i}-q_i|.
\end{equation}
The supremum therefore exposes all errors in estimating
the demonstrator's response probabilities.
Decode $\widehat\theta_i=+1$ if $q_i\geq1/2$, and
$\widehat\theta_i=-1$ otherwise. With
$H(\widehat\theta,\theta)
 :=\sum_{i=1}^d\ind\{\widehat\theta_i\ne\theta_i\}$,
\begin{equation}
\label{eq:robust_proof_pass1_decode}
    Z_\theta\geq
    \frac{\Delta h}{d}H(\widehat\theta,\theta).
\end{equation}

Let $P_\theta$ be the law of one demonstration under
$(\gD,\pi_{\mathrm{demo}_{\theta}})$, and let $\theta^{(i)}$ differ from
$\theta$ only at coordinate $i$. By
Lemmas~\ref{lem:KL-identities} and~\ref{lem:bernoulli-KL},
\begin{equation}
\label{eq:robust_proof_pass1_neighbor_KL}
    \KL(P_\theta^{\otimes m}\|P_{\theta^{(i)}}^{\otimes m})
    =\frac{m\Delta}{d}\kl(1/2+h,1/2-h)
    \leq\frac{16m\Delta h^2}{d}.
\end{equation}
Choose
\[
    h:=\min\!\left\{\frac14,
                   \sqrt{\frac{d}{32m\Delta}}\right\}.
\]
The divergence is at most $1/2$.
Lemma~\ref{lem:pinsker-testing}, which also applies to
randomized tests, therefore gives
\[
    \mathbb P_\theta\{\widehat\theta_i\ne\theta_i\}
    +\mathbb P_{\theta^{(i)}}
       \{\widehat\theta_i\ne\theta_i^{(i)}\}
    \geq\frac12.
\]
Under the uniform prior on $\Theta\in\{-1,+1\}^d$,
averaging over neighboring pairs and summing over
coordinates yields $\mathbb E H\geq d/4$. Since $H\leq d$,
\[
    \frac d4\leq\mathbb E H
    \leq\frac d8+d\,\mathbb P\{H>d/8\},
    \qquad
    \mathbb P\{H>d/8\}\geq\frac18.
\]
By~\eqref{eq:robust_proof_pass1_decode} and averaging,
some fixed $\theta$ satisfies
$\mathbb P_\theta\{Z_\theta>\Delta h/8\}\geq1/8$.
Using $d\geq\log|\gR|/4$ gives
\[
    \frac{\Delta h}{8}
    \geq\frac1{128}\min\!\left\{
        \Delta,\sqrt{\frac{\Delta\log|\gR|}{m}}
    \right\}.
\]
Consequently,
\begin{equation}
\label{eq:robust_proof_pass1_class_tail}
    \mathbb P_\theta\!\left\{
        Z_\theta>
        \frac1{128}\min\!\left\{
            \Delta,\sqrt{\frac{\Delta\log|\gR|}{m}}
        \right\}
    \right\}\geq\frac18>2\delta.
\end{equation}

\paragraph{The confidence contribution.}
Use the same reward class, but now put
$\gD(1)=\Delta$, $\gD(0)=1-\Delta$, and zero mass
elsewhere. Set
\[
    h:=\min\!\left\{\frac14,
         \sqrt{\frac{\log(1/(8\delta))}{16m\Delta}}
    \right\}.
\]
For $\theta\in\{-1,+1\}$, let
$\pi_{\mathrm{demo}_{\theta}}(+1\mid1)=1/2+\theta h$ and let the
demonstrator output $+1$ elsewhere. Again,
$\Delta_{\gR}=\Delta(1/2+h)\leq\Delta$.
The corresponding one-demonstration laws satisfy
\[
    \KL(P_+^{\otimes m}\|P_-^{\otimes m})
    \leq16m\Delta h^2\leq\log(1/(8\delta)).
\]
Apply Lemma~\ref{lem:BH-testing} to the test
$\widehat\theta=+1$ if $q_1\geq1/2$ and $-1$ otherwise:
\[
    \frac12\mathbb P_+\{\widehat\theta=-1\}
    +\frac12\mathbb P_-\{\widehat\theta=+1\}
    \geq\frac14e^{-\log(1/(8\delta))}=2\delta.
\]
Here the robust excess is
\[
    Z_\theta=(1-\Delta)(1-q_0)
             +\Delta|1/2+\theta h-q_1|,
\]
so a test error implies $Z_\theta\geq\Delta h$.
Since $\delta<1/16$ implies
$\log(1/(8\delta))\geq\frac14\log(1/\delta)$,
\[
    \Delta h
    \geq\min\!\left\{\frac{\Delta}{4},
          \sqrt{\frac{\Delta\log(1/\delta)}{64m}}
    \right\}
    >\frac1{16}\min\!\left\{
        \Delta,\sqrt{\frac{\Delta\log(1/\delta)}{m}}
    \right\}.
\]
Thus some fixed sign satisfies
\begin{equation}
\label{eq:robust_proof_pass1_confidence_tail}
    \mathbb P_\theta\!\left\{
        Z_\theta>
        \frac1{16}\min\!\left\{
            \Delta,\sqrt{\frac{\Delta\log(1/\delta)}{m}}
        \right\}
    \right\}\geq2\delta.
\end{equation}

\paragraph{Combining the bounds.}
Use the first construction if
$\log|\gR|\geq\log(1/\delta)$, and the second otherwise.
The larger logarithm is at least half their sum, so
\eqref{eq:robust_proof_pass1_class_tail}
and~\eqref{eq:robust_proof_pass1_confidence_tail} imply
that some admissible instance satisfies
\begin{equation}
\label{eq:robust_proof_pass1_combined_tail}
    \mathbb P\!\left\{
        \sup_{r\in\gR}\gE_r^1(\widehat\pi,\demo)
        >\frac1{256}\min\!\left\{
            \Delta,
            \sqrt{\frac{\Delta[\log|\gR|+\log(1/\delta)]}{m}}
        \right\}
    \right\}\geq2\delta.
\end{equation}
Both constructions use the same fixed class. The hidden
parameter is selected by averaging before the sample
and training randomness are drawn.

Suppose $0<\varepsilon\leq\Delta/256$. For every integer
\[
    1\leq m\leq
    \frac{\Delta[\log|\gR|+\log(1/\delta)]}
         {262144\,\varepsilon^2},
\]
the threshold in~\eqref{eq:robust_proof_pass1_combined_tail}
is at least
\[
    \frac1{256}\min\{256\varepsilon,512\varepsilon\}
    =\varepsilon.
\]
Every learner therefore fails the robust PAC requirement
at each such sample size, including the upper endpoint
when it is an integer.

Zero samples also cannot suffice. In the two-sign
construction, take $h=1/4$. Without data, the decoded
sign is independent of a uniform hidden sign, so its
average error probability is $1/2$. For one fixed sign,
the robust excess is therefore at least
$\Delta/4>\varepsilon$ with probability at least $1/2$.
Definition~\ref{def:robust_pac_complexity} now gives
\[
    m_{\mathrm{rob}}^{(1)}
        (\gR,\varepsilon,\delta,\Delta)
    >\left\lfloor
        \frac{\Delta[\log|\gR|+\log(1/\delta)]}
             {262144\,\varepsilon^2}
    \right\rfloor,
\]
which proves~\eqref{eq:robust_pass1_lower_samples}.
\end{proof}

\paragraph{Upper bound and minimax rate.}
Theorem~6 and Appendix~D.2 of~\citet{joshi2026learning}
give a single policy whose guarantee holds simultaneously
for all rewards with demonstrator suboptimality at most
$\Delta$. Thus, whenever $\Delta_{\gR}\leq\Delta$,
with probability at least $1-\delta$,
\[
    \sup_{r\in\gR}\gE_r^1(\widehat\pi,\demo)
    \leq C\left(
        \frac{\log(|\gR|/\delta)}{m}
        +\sqrt{\frac{\Delta\log(|\gR|/\delta)}{m}}
    \right)
\]
for a universal constant $C$. Making each term at most
$\varepsilon/2$ gives
\[
    m_{\mathrm{rob}}^{(1)}
        (\gR,\varepsilon,\delta,\Delta)
    \lesssim
    \frac{\log(|\gR|/\delta)}{\varepsilon}
    +\frac{\Delta\log(|\gR|/\delta)}{\varepsilon^2}.
\]
For $0<\varepsilon\leq\Delta/256$, the second term
dominates. The upper bound holds uniformly over reward
classes, while the construction above supplies a matching
worst-case lower bound of order
$\Delta\log(|\gR|/\delta)/\varepsilon^2$.

\paragraph{Uniformly optimal demonstrators.}
For completeness, suppose $\Delta_{\gR}=0$. For every
$r\in\gR$,
\[
    \mathbb E_{x\sim\gD,\,y\sim\demo(\cdot\mid x)}
    \left[\sup_z r(x,z)-r(x,y)\right]=0.
\]
The integrands are nonnegative and $\gR$ is finite.
Hence, for $\gD$-almost every $x$, the demonstrator
assigns probability one to responses maximizing every
reward simultaneously. In particular,
\[
    \max_{y\in\gY}\sum_{r\in\gR}r(x,y)
    =\sum_{r\in\gR}\sup_{y\in\gY}r(x,y)
    \qquad\text{for $\gD$-almost every $x$.}
\]
Using the same selection convention as in the algorithms,
choose $y_0(x)$ maximizing the sum of the known rewards
whenever the maximum exists, and an arbitrary response
otherwise. At almost every context,
\[
    \sum_{r\in\gR}
       \bigl[\sup_y r(x,y)-r(x,y_0(x))\bigr]=0.
\]
Each summand is nonnegative, so $y_0(x)$ maximizes every
reward. For any $k\geq1$, the policy $\pi_0$ returning
$k$ copies of $y_0(x)$ therefore satisfies
\[
    V_r^k(\pi_0)=V_r^\star=V_r(\demo)
    \quad(r\in\gR),
    \qquad
    m_{\mathrm{rob}}^{(k)}
       (\gR,\varepsilon,\delta,0)=0.
\]
This policy depends only on the known reward class and
uses no demonstrations.
\newpage
\section{Omitted details and proofs for Section~\ref{sec:optimal_learning}}
\begin{figure}[h]
\centering
\setlength{\fboxsep}{8pt}
\fbox{%
\begin{minipage}{\dimexpr\linewidth-2\fboxsep-2\fboxrule\relax}
\small
\textbf{Online learning from demonstrations under pass@$k$ evaluation}

\medskip
The learner knows the reward class $\gR$, while the
evaluation reward $r^\star\in\gR$ is unknown.

\medskip
For each round $t=1,\ldots,T$:
\begin{enumerate}
    \setlength{\itemsep}{5pt}
    \setlength{\parskip}{0pt}

    \item \textbf{Observe the context.}
    A context $x_t\in\gX$ is revealed to the learner.

    \item \textbf{Produce responses.}
    Using $x_t$ and the previously observed demonstrations
    $\{(x_s,y_s)\}_{s<t}$, the learner outputs
    \[
        \bigl(\hat y_t^{(1)},\ldots,\hat y_t^{(k)}\bigr)
        \in\gY^k.
    \]
    The current demonstration $y_t$ is not yet available.

    \item \textbf{Evaluate the responses.}
    The tuple is assigned the pass@$k$ reward
    \[
        \max_{i\in[k]}r^\star(x_t,\hat y_t^{(i)}).
    \]
    This reward is not revealed to the learner.
    For binary rewards, a mistake occurs when all
    $k$ responses have reward zero; no correctness
    feedback is provided.

    \item \textbf{Observe the demonstration and update.}
    The learner receives $y_t\in\gY$ and uses
    $(x_t,y_t)$ to update its state for the next round.
    The demonstrated response need not be optimal.
\end{enumerate}

The online protocol allows arbitrary sequences of
contexts and demonstrations; independence is not required.
\end{minipage}%
}
\caption{Online learning protocol under pass@$k$
evaluation, adapted from~\cite{joshi2026learning}.
The learner observes a demonstration only after
producing its responses, without receiving evaluation
reward feedback.}
\label{fig:online-protocol}
\end{figure}
\begin{algorithm}[h]
\caption{Online-to-batch conversion for pass@$k$}
\label{alg:online-to-batch}
\begin{algorithmic}[1]
\REQUIRE Finite reward class $\gR$,
training sample
$S=((x_1,y_1),\ldots,(x_m,y_m))
\sim (\gD\times \demo)^{\otimes m}$,
response budget $k\geq2$,
discretization level $L\in\mathbb N$.
\ENSURE A policy
$\otob:\gX\rightarrow\Delta(\gY^k)$.

\STATE Initialize Algorithm~\ref{alg:passk-general}
with reward class $\gR$ and parameters $k,L$.

\FOR{$t=1,\ldots,m$}
    \STATE Before processing $(x_t,y_t)$, save a fixed
    copy of the current online policy
    \[
        \widehat{\pi}_t:
        \gX\rightarrow\Delta(\gY^k).
    \]
    This policy depends only on the demonstrations
    $\{(x_s,y_s)\}_{s<t}$.

    \STATE Provide $x_t$ to the online learner and
    obtain its response tuple
    $(\hat y_t^{(1)},\ldots,\hat y_t^{(k)})$.

    \STATE Reveal $y_t$ and update the online learner
    according to Algorithm~\ref{alg:passk-general}.
\ENDFOR

\STATE Define the learned policy as the uniform mixture
\[
    \otob(\cdot\mid x)
    :=
    \frac1m\sum_{t=1}^m
    \widehat{\pi}_t(\cdot\mid x),
    \qquad x\in\gX.
\]

\STATE \textbf{Prediction at a new context $x$:}
\STATE Draw $\tau\sim\operatorname{Unif}([m])$,
independently of the training sample and $x$.
\STATE Sample the entire response tuple
\[
    (\hat y^{(1)},\ldots,\hat y^{(k)})
    \sim\widehat{\pi}_{\tau}(\cdot\mid x).
\]
\STATE \textbf{Output}
$(\hat y^{(1)},\ldots,\hat y^{(k)})$.
\end{algorithmic}
\end{algorithm}
% \subsection{Proof of Lemma~\ref{lem:greedy-coverage}}\label{appen_subsec:lem_greedy_coverage}
\begin{lemma}[Greedy coverage]
\label{lem:greedy-coverage}
At every round of Algorithm~\ref{alg:passk-general},
\[
    w^{(t)}(U_t\setminus D_t)
    \geq k\,w^{(t)}(D_t\setminus U_t).
\]
\end{lemma}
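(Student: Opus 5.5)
Write $Y:=\{\hy{1},\ldots,\hy{k}\}$ for the learner's tuple at round $t$ and $w:=w^{(t)}$. The plan is to compare the demonstration $y_t$ with each greedy choice through the greedy optimality condition. There are two cases.

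\emph{Case 1: $y_t\in Y$.} Then $D_t=A_t^{y_t}\subseteq U_t$, so $D_t\setminus U_t=\emptyset$. The right-hand side is zero and the claim is trivial, since the weights are nonnegative.

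\emph{Case 2: $y_t\notin Y$.} Then $y_t$ was an admissible candidate at every greedy step $i$, because it lies in $\gY\setminus Y_{i-1}$. Write $B_{i-1}:=\bigcup_{z\in Y_{i-1}}A_t^z$ and $G_i:=A_t^{\hy{i}}\setminus B_{i-1}$. The sets $G_1,\ldots,G_k$ are disjoint, and their union is $U_t$. Greedy optimality gives, for each $i\in[k]$,
\[
    w(G_i)\geq w(D_t\setminus B_{i-1})\geq w(D_t\setminus U_t),
\]
where the second inequality uses $B_{i-1}\subseteq U_t$ and nonnegativity of the weights. Summing over $i$ yields
\[
    w(U_t)\geq k\,w(D_t\setminus U_t).
\]

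The remaining, and main, step is to replace $w(U_t)$ on the left by $w(U_t\setminus D_t)$, which means removing the mass $w(U_t\cap D_t)$. The direct sum above does not do this, since the pairs in $U_t\cap D_t$ are counted on the left. I would rerun the greedy comparison with the demonstration's set restricted to the pairs it alone covers. The refined inequality I would aim for is
\[
    w(G_i\setminus D_t)\geq w(D_t\setminus U_t)\quad\text{for each } i.
\]
The idea is that any mass of $G_i$ inside $D_t$ is also mass the candidate $y_t$ would gain at step $i$, so it cancels from both sides.

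Making this cancellation rigorous is the crux. From greedy optimality, $w(G_i)\geq w(D_t\setminus B_{i-1})$. The set $D_t\setminus B_{i-1}$ contains $(G_i\cap D_t)\cup(D_t\setminus U_t)$, and this union is disjoint. Hence
\[
    w(G_i\cap D_t)+w(G_i\setminus D_t)\geq w(G_i\cap D_t)+w(D_t\setminus U_t).
\]
Subtracting $w(G_i\cap D_t)$ from both sides gives the refined inequality. The point to verify is the containment $G_i\cap D_t\subseteq D_t\setminus B_{i-1}$, which holds because $G_i$ is disjoint from $B_{i-1}$ by definition.

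Summing the refined inequality over $i$, and using that the sets $G_i\setminus D_t$ are disjoint with union $U_t\setminus D_t$, proves Lemma~\ref{lem:greedy-coverage}. Tie-breaking plays no role, since the argument uses only the greedy maximality inequality. The degenerate case $|\gY|<k$ is handled separately by the padding convention, where $U_t$ covers all achievable pairs and so $D_t\setminus U_t=\emptyset$.
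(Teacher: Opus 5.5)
Your proof is correct and takes essentially the same route as the paper: both decompose $U_t\setminus D_t$ into the disjoint greedy marginal sets $G_i\setminus D_t$, compare each greedy choice with the demonstrated response $y_t$, and cancel the overlap $w(G_i\cap D_t)$. The differences are cosmetic. The paper keeps the sharper per-step bound $w(G_i\setminus D_t)\ge w(D_t\setminus B_i)$ and then uses that these quantities are non-increasing in $i$, whereas you pass directly to $w(D_t\setminus U_t)$; and the paper handles $y_t\in Y_{i-1}$ step by step, whereas you split globally on whether $y_t\in Y$.
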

\begin{proof}[Proof of Lemma~\ref{lem:greedy-coverage}]
    For a fixed round $t$, let \(Y_i=\{\hat{y}^{(1)}_t,\dots,\hy{i}\}\), and by convention, we set $Y_0=\varnothing$. Then, for $i\in\{0,\ldots, k\}$, we can define the sum of weights for the $i$-th greedy step from the Algorithm~\ref{alg:passk-general} as:
\[
    a_i^{(t)}:= w^{(t)}\left(D_t\backslash\cup_{z\in Y_i}A_t^z\right).
\]
Then, trivially we get that, $a_i^{(t)}$ are non-increasing, also by definition, we get that, $a_k^{(t)} = w^{(t)}(D_t\backslash U_t)$. Then,
\begin{align*}
    w^{(t)}(U_t\backslash D_t) & = w^{(t)}\left( \cup_{i=1}^kA_t^{\hy{i}}\backslash D_t\right)\\
    & = \sum_{i=1}^kw^{(t)}\left(A_t^{\hy{i}}\backslash \left(D_t\cup \left(\cup_{z\in Y_{i-1}}A_t^z\right)\right)\right)\\
    & = \sum_{i=1}^k\left[w^{(t)}\left(A_t^{\hy{i}}\backslash\cup_{z\in Y_{i-1}}A_t^z\right)-w^{(t)}\left((A_t^{\hy{i}}\cap D_t)\backslash\cup_{z\in Y_{i-1}}A_t^z\right)\right]\\
    &\overset{(a)}{\geq} \sum_{i=1}^k\left[w^{(t)}(D_t\backslash \cup_{z\in Y_{i-1}}A_t^z) - w^{(t)}\left((A_t^{\hy{i}}\cap D_t)\backslash\cup_{z\in Y_{i-1}}A_t^z\right)\right]\\
    &\overset{(b)}{=}\sum_{i=1}^k\left[a_{i-1}^{(t)} - w^{(t)}\left(D_t\backslash\cup_{z\in Y_{i-1}}A_t^z\right)+w^{(t)}\left(D_t\backslash\cup_{z\in Y_{i}}A_t^z\right)\right]\\
    &=\sum_{i=1}^k[a^{(t)}_{i-1}-a^{(t)}_{i-1}+a^{(t)}_{i}]\\
    &=\sum_{i=1}^k a^{(t)}_{i}\overset{(c)}{\geq }ka^{(t)}_{k} = kw^{(t)}(D_t\backslash U_t)
\end{align*}
Here, $(a)$ is due to the choice of greedy $\hy{i}$ for each $i$-th step. If $y_t\not\in Y_{i-1}$, then by the greedy choice in the Algorithm~\ref{alg:passk-general}, we get,
\[
    w^{(t)}(A_t^{\hy{i}}\backslash\cup_{z\in Y_{i-1}}A_t^z) \geq w^{(t)}(D_t\backslash\cup_{z\in Y_{i-1}}A_t^z),
\]
but if $y_t\in Y_{i-1}$ then trivially the following holds
\[
    w^{(t)}(A_t^{\hy{i}}\backslash\cup_{z\in Y_{i-1}}A_t^z) \geq 0= w^{(t)}(D_t\backslash\cup_{z\in Y_{i-1}}A_t^z).
\]
Here $(b)$ follows trivially from the set theoretic argument. Here, $(c)$ is due to the non-increasing behavior of the $(a_i^{(t)})_{i\geq 0}$ sequence.
\end{proof}
% \subsection{Proofs of Lemma~\ref{lem:potential} and Corollary~\ref{cor:basic-mono}}\label{appen_sec:proof_lemma_2}
\begin{lemma}
\label{lem:potential}
Let
\[
  W_t := \sum_{(r,j)\in\gS_L} w^{(t)}(r,j).
\]
% Under the thresholded asymmetric update and under the condition,
% \[
%     \alpha - 1\leq k(1-\beta),   
% \]
Then the sequence \((W_t)_{t\ge 1}\) is non-increasing.
\end{lemma}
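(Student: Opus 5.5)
The plan is to compute the one-step change $W_{t+1}-W_t$ directly from the three-case update rule in Algorithm~\ref{alg:passk-general} and show it is nonpositive using Lemma~\ref{lem:greedy-coverage}. Fix a round $t$. Only pairs in $D_t\setminus U_t$ and in $U_t\setminus D_t$ change weight; every pair in $D_t\cap U_t$ or outside $D_t\cup U_t$ keeps its weight. Since $D_t\setminus U_t$ and $U_t\setminus D_t$ are disjoint, the total weight splits cleanly and
\[
    W_{t+1}-W_t
    =\frac k4\,w^{(t)}(D_t\setminus U_t)
     -\frac14\,w^{(t)}(U_t\setminus D_t).
\]
This holds because pairs in the first set are multiplied by $1+k/4$, adding $k/4$ times their weight, and pairs in the second set are multiplied by $3/4$, removing a quarter of their weight.

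Next I would invoke Lemma~\ref{lem:greedy-coverage}, which gives $w^{(t)}(U_t\setminus D_t)\ge k\,w^{(t)}(D_t\setminus U_t)$. Substituting this into the display yields
\[
    W_{t+1}-W_t\le\frac k4\,w^{(t)}(D_t\setminus U_t)-\frac k4\,w^{(t)}(D_t\setminus U_t)=0.
\]
So $W_{t+1}\le W_t$ for every $t$, and induction gives monotonicity of the whole sequence. All weights stay positive, so no sign issues arise.

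The only substantive step is the greedy coverage inequality, which is already established. The remaining care points are bookkeeping. First, I would check that the update cases are exhaustive and that $U_t$ and $D_t$ are defined from the same context $x_t$ and the same weights $w^{(t)}$ used in the greedy step. Second, I would handle the degenerate case $|\gY|<k$, where the algorithm pads with repetitions. There I would note that the learner's tuple contains every response, so $U_t\supseteq D_t$, hence $D_t\setminus U_t=\varnothing$ and the weights can only decrease.

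The constants $1+k/4$ and $3/4$ are matched exactly to the factor $k$ in Lemma~\ref{lem:greedy-coverage}, so no slack is needed. The corollary $W_{T+1}\le W_1=L|\gR|$ then follows immediately.
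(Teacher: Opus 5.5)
Your proposal is correct and follows the paper's proof essentially verbatim: it writes $W_{t+1}-W_t=\frac{k}{4}w^{(t)}(D_t\setminus U_t)-\frac14 w^{(t)}(U_t\setminus D_t)$ from the three-case update and concludes nonpositivity from Lemma~\ref{lem:greedy-coverage}. You also treat the degenerate case $|\gY|<k$ separately, noting $D_t\subseteq U_t$ there; this is a small extra bit of care that the paper handles outside the lemma.
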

\begin{proof}[Proof of Lemma~\ref{lem:potential}]
    The update rule given by the Algorithm~\ref{alg:passk-general}, yields the following,
    \[
        W_{t+1} - W_t = \frac{k}{4}w^{(t)}\left(D_t\backslash U_t\right)-\frac14w^{(t)}\left(U_t\backslash D_t\right).
    \]
    % Now, we observe that if $(r,j)\in U_t\implies \exists i$, such that, $r\left(x_t,y_{t}^{(i)}\right)\geq u_j$, this implies $r(x_t,*)\geq u_j$, therefore, $(r,j)\in B_t$. Thereby, $U_t\subseteq B_t$, which implies that,
    % \[
    %     w^{(t)}(B_t\backslash A_t)\geq w^{(t)}(U_t\backslash A_t).
    % \]
    Using the Lemma~\ref{lem:greedy-coverage}, the above equation boils down to the following inequality
    % \[
    %     w^{(t)}(B_t\backslash A_t)\geq w^{(t)}(U_t\backslash A_t)\geq kw^{(t)}(A_t\backslash U_t).
    % \]
    % Hence, from the difference of weights for each iteration, we get, that,
    \begin{align*}
        W_{t+1} - W_t &= \frac14\left(k\,w^{(t)}\left(D_t\backslash U_t\right)-w^{(t)}\left(U_t\backslash D_t\right)\right)\leq 0. 
    \end{align*}
    % In the above equations, $(a)$ is due to the fact that,
    % \[
    %     \eta \le \log k \implies e^{\eta}\le k,
    % \]
    % thereby, we get that 
    % \[
    %     (e^{\eta}-1) - k(1-e^{-\eta})\le (e^{\eta}-1) - e^{\eta}(1-e^{-\eta}) = 0.
    % \]
    Therefore, we get, $W_{T+1}\leq W_T,\,\forall~T\geq 1$.
\end{proof}
\begin{corollary}
\label{cor:basic-mono}
    For all $T\geq 1$
    \[
        W_{T+1}\leq W_1 = L|\gR|.
    \]
\end{corollary}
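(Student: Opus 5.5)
The corollary follows from Lemma~\ref{lem:potential} together with a direct computation of the initial potential. The plan is to chain the monotonicity over rounds and then evaluate $W_1$ from the initialization in Algorithm~\ref{alg:passk-general}.

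First, Lemma~\ref{lem:potential} gives $W_{t+1}\le W_t$ for every $t\ge1$. A straightforward induction on $T$ then yields $W_{T+1}\le W_T\le\cdots\le W_1$. Formally, the base case $T=1$ is the lemma at $t=1$, and the inductive step combines $W_{T+1}\le W_T$ with the hypothesis $W_T\le W_1$.

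Second, I would evaluate $W_1$. Every weight is initialized to $w^{(1)}(r,j)=1$ on $\gS_L=\gR\times[L]$, so
\[
W_1=\sum_{(r,j)\in\gS_L}1=|\gS_L|=L|\gR|,
\]
since $\gR$ is finite and non-empty. Combining the two steps gives $W_{T+1}\le L|\gR|$.

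There is no real obstacle here. The substantive content lies in Lemma~\ref{lem:greedy-coverage}, which Lemma~\ref{lem:potential} already uses. The only point worth stating explicitly is that the bound holds for every realized sequence of contexts and demonstrations, because the monotonicity in Lemma~\ref{lem:potential} is deterministic.
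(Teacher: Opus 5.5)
Your proposal is correct and matches the paper's proof: chain the monotonicity of Lemma~\ref{lem:potential} down to $W_1$, and compute $W_1=|\gS_L|=L|\gR|$ from the unit initialization on $\gS_L=\gR\times[L]$. Your explicit induction and the remark that the bound holds for every realized sequence only spell out what the paper calls immediate.
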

\begin{proof}[Proof of Corollary~\ref{cor:basic-mono}]
    This follows immediately from Lemma~\ref{lem:potential}. One can see that
    \[
        W_{T+1}\leq W_1 = \sum_{(r,j)\in \gS_L}w^{(1)}(r,j) \overset{(a)}{=} \sum_{(r,j)\in \gS_L}1 = L|\gR|.
    \]
    In the above, $(a)$ is due to the initialization of the weights in the Algorithm~\ref{alg:passk-general}.
\end{proof}
\subsection{Proof of Theorem~\ref{thm:online-regret}}
\label{appen_sec:online_regret}

\begin{proof}
Fix $k\geq2$, integers $L,T\geq1$, a reward $r\in\gR$,
and any realization of the contexts, and demonstrations.
If $|\gY|<k$, outputting every available response and
padding with repetitions gives
\[
    \max_{i\in[k]}r(x_t,\hat y_t^{(i)})
    \geq r(x_t,y_t)
\]
at every round. The cumulative shortfall is then
nonpositive, so the claimed bound holds.
Assume henceforth that $|\gY|\geq k$.

Recall the threshold quantities from
Appendix~\ref{appen:standard_passk_upper}.
For $t\in[T]$ and $j\in[L]$, write
\[
    d_t^r:=r(x_t,y_t),
    \qquad
    a_t^r:=\max_{i\in[k]}r(x_t,\hat y_t^{(i)}),
    \qquad
    u_j:=\frac jL,
\]
and let
\[
\begin{aligned}
    q_L(v)
    &:=\frac1L\sum_{j=1}^L\ind\{v\geq u_j\}
      =\frac{\lfloor Lv\rfloor}{L},
      \qquad v\in[0,1],\\
    M_t^r
    &:=\frac1L\sum_{j=1}^L
        \ind\{d_t^r\geq u_j>a_t^r\},\\
    H_t^r
    &:=\frac1L\sum_{j=1}^L
        \ind\{a_t^r\geq u_j>d_t^r\}.
\end{aligned}
\]
By~\eqref{eq:ineq_q_l} and
\eqref{eq:difference_of_mtr_htr},
\begin{align*}
    d_t^r-a_t^r
    &\leq q_L(d_t^r)-q_L(a_t^r)+\frac1L\\
    &=M_t^r-H_t^r+\frac1L.
\end{align*}
Summing over the rounds yields
\begin{equation}
\label{eq:online_regret_rounding}
    % \widetilde{J}_r(T)-J_r^k(T)
     \sum_{t=1}^T
    \left[
        r(x_t,y_t)
        -\max_{i\in[k]}r(x_t,\hat y_t^{(i)})
    \right]
    \leq\frac TL+\sum_{t=1}^T(M_t^r-H_t^r).
\end{equation}

Recall also that
\[
    \alpha_k:=\log(1+k/4),
    \qquad
    \beta:=\log(4/3),
    \qquad
    z_t^r:=\alpha_k M_t^r-\beta H_t^r.
\]
Since $k\geq2$, we have
$\alpha_k\geq\log(3/2)>\beta>0$.
Together with $H_t^r\geq0$, this gives
\[
    z_t^r
    =\alpha_k(M_t^r-H_t^r)
      +(\alpha_k-\beta)H_t^r
    \geq\alpha_k(M_t^r-H_t^r).
\]
Lemma~\ref{lem:ztr_upper}, which holds for every
realized sequence and every discretization level $L$,
therefore implies
\[
    \sum_{t=1}^T(M_t^r-H_t^r)
    \leq\frac1{\alpha_k}\sum_{t=1}^T z_t^r
    \leq\frac{\log|\gR|}{\alpha_k}.
\]
Combining this with~\eqref{eq:online_regret_rounding}
gives
\[
    % \widetilde{J}_r(T)-J_r^k(T)
     \sum_{t=1}^T
    \left[
        r(x_t,y_t)
        -\max_{i\in[k]}r(x_t,\hat y_t^{(i)})
    \right]
    \leq\frac{\log|\gR|}{\log(1+k/4)}+\frac TL.
\]
Finally, the identity
\[
    1+\frac k4-\sqrt{k}
    =\frac{(\sqrt{k}-2)^2}{4}\geq0
\]
implies $\alpha_k\geq\tfrac12\log k$.
Consequently,
\[
    % \widetilde{J}_r(T)-J_r^k(T)
     \sum_{t=1}^T
    \left[
        r(x_t,y_t)
        -\max_{i\in[k]}r(x_t,\hat y_t^{(i)})
    \right]
    \leq\frac{2\log|\gR|}{\log k}+\frac TL,
\]
as claimed. Every step holds for each realization of
the sequence, without any
independence or demonstrator-optimality assumption.
\end{proof}
\newpage
\section{Tables}
\label{appen:tables}
% Arbitrary demonstrators
\begin{table}[ht]
\centering
\caption{
Minimax PAC sample complexities for arbitrary demonstrators
($\Delta=1$), with universal constants implicit in $\Theta$.
Upper bounds hold for every finite reward class, and matching
lower bounds hold for suitable classes of the indicated cardinality.
Assume $|\gR|\geq2$, $0<\delta\leq1/64$, and
$0<\varepsilon\leq c$, where $c>0$ is a sufficiently small
universal constant. For standard pass@$1$, assume additionally
$|\gR|\geq(1+1/\varepsilon)^4$ or $\delta\leq1/|\gR|$.
For pass@$k$, standard evaluation requires $|\gR|\geq2k$;
robust evaluation requires $|\gR|\geq2k^2$ and
$\varepsilon\leq c/k$.
}
% \label{tab:intro-rates}
\label{tab:sample_complexity_arbitrary}
\small
\setlength{\tabcolsep}{5pt}
\renewcommand{\arraystretch}{1.6}
\begin{tabular}{@{}lcc@{}}
\toprule
& \textbf{Standard evaluation}
& \textbf{Robust evaluation} \\
\midrule
pass@$1$
&
$\displaystyle
\Theta\!\left(
    \frac{\log(|\gR|/\delta)}{\varepsilon^2}
\right)$
&
$\displaystyle
\Theta\!\left(
    \frac{\log(|\gR|/\delta)}{\varepsilon^2}
\right)$
\\[3mm]
pass@$k$, $k\geq2$
&
$\displaystyle
\Theta\!\left[
    \frac1\varepsilon
    \left(
        \frac{\log|\gR|}{\log k}
        +\log\frac1\delta
    \right)
\right]$
&
$\displaystyle
\Theta\!\left(
    \frac{\log(|\gR|/\delta)}{\varepsilon}
\right)$
\\
\bottomrule
\end{tabular}
\end{table}

% Positive suboptimality budgets
\begin{table}[ht]
\centering
\caption{
Minimax PAC sample complexities for $0<\Delta\leq1$.
The budget constrains $\Delta_{r^\star}\leq\Delta$ under
standard evaluation and $\Delta_{\gR}\leq\Delta$ under
robust evaluation. Assume $|\gR|\geq2$,
$0<\delta\leq1/64$, and $0<\varepsilon\leq c$, where
$c>0$ is a sufficiently small universal constant.
For standard pass@$1$, assume additionally
$|\gR|\geq(1+\Delta/\varepsilon)^4$ or
$\delta\leq1/|\gR|$; robust pass@$1$ requires
$\varepsilon\leq c\Delta$.
For pass@$k$, standard evaluation requires $|\gR|\geq2k$;
robust evaluation requires $|\gR|\geq2k^2$ and
$\varepsilon\leq c\min\{\Delta,1/k\}$.
The $\Theta$ entries describe matching worst-case bounds
over reward classes, with universal constants.
}
\label{tab:sample_complexity_budget}
\small
\setlength{\tabcolsep}{5pt}
\renewcommand{\arraystretch}{1.6}
\begin{tabular}{@{}lcc@{}}
\toprule
& \textbf{Standard evaluation}
& \textbf{Robust evaluation} \\
\midrule
pass@$1$
&
$\displaystyle
\Theta\!\left(
    \frac{(\Delta+\varepsilon)\log(|\gR|/\delta)}
         {\varepsilon^2}
\right)$
&
$\displaystyle
\Theta\!\left(
    \frac{\Delta\log(|\gR|/\delta)}{\varepsilon^2}
\right)$
\\[3mm]
pass@$k$, $k\geq2$
&
$\displaystyle
\Theta\!\left[
    \frac1\varepsilon
    \left(
        \frac{\log|\gR|}{\log k}
        +\log\frac1\delta
    \right)
\right]$
&
$\displaystyle
\Theta\!\left(
    \frac{\log(|\gR|/\delta)}{\varepsilon}
\right)$
\\
\bottomrule
\end{tabular}
\end{table}

% Optimal demonstrators
\begin{table}[ht]
\centering
\caption{
Minimax PAC sample complexities for optimal demonstrators
($\Delta=0$). Under standard evaluation, optimality means
$\Delta_{r^\star}=0$ for the fixed evaluation reward;
under robust evaluation, it means $\Delta_{\gR}=0$,
that is, optimality for every reward in the class.
The standard entries assume $|\gR|\geq2$,
$0<\delta\leq1/64$, and $0<\varepsilon\leq c$ for a
sufficiently small universal constant $c>0$, with
$|\gR|\geq2k$ additionally required for pass@$k$.
Their $\Theta$ rates are worst-case rates over reward classes,
with universal constants. The robust entries are exactly zero
for every finite non-empty reward class, every $k\geq1$,
$\varepsilon>0$, and $\delta\in(0,1)$ under class-wide optimality.
}
\label{tab:sample_complexity_optimal}
\small
\setlength{\tabcolsep}{5pt}
\renewcommand{\arraystretch}{1.6}
\begin{tabular}{@{}lcc@{}}
\toprule
& \textbf{Standard evaluation}
& \textbf{Robust evaluation} \\
\midrule
pass@$1$
&
$\displaystyle
\Theta\!\left(
    \frac{\log(|\gR|/\delta)}{\varepsilon}
\right)$
& $0$
\\[3mm]
pass@$k$, $k\geq2$
&
$\displaystyle
\Theta\!\left[
    \frac1\varepsilon
    \left(
        \frac{\log|\gR|}{\log k}
        +\log\frac1\delta
    \right)
\right]$
& $0$
\\
\bottomrule
\end{tabular}
\end{table}
\end{document}